\documentclass{article} 
\usepackage{iclr2027_conference,times}

\usepackage{amsmath,amsfonts,bm}

\def\eqref#1{equation~\ref{#1}}

\def\1{\bm{1}}

\DeclareMathAlphabet{\mathsfit}{\encodingdefault}{\sfdefault}{m}{sl}
\SetMathAlphabet{\mathsfit}{bold}{\encodingdefault}{\sfdefault}{bx}{n}

\usepackage{hyperref}
\usepackage{url}
\usepackage{xspace}

\usepackage[utf8]{inputenc} 
\usepackage[T1]{fontenc}    
\usepackage{hyperref}       
\usepackage{url}            
\usepackage{subcaption}
\usepackage{makecell}
\usepackage{booktabs}       
\usepackage{amsfonts}       
\usepackage{nicefrac}       
\usepackage{microtype}      
\usepackage{xcolor} 
\usepackage{algorithm}
\usepackage{algorithmic}
\usepackage{amsmath}
\usepackage{amssymb}
\usepackage[toc,page,header]{appendix}
\usepackage{minitoc}

\usepackage{hyperref}
\usepackage{amsthm}
\usepackage{graphicx}
\usepackage{booktabs}

\usepackage{minitoc}
\usepackage{multirow}
\usepackage{hyperref}

\definecolor{gain}{RGB}{0,128,0}
\definecolor{loss}{RGB}{180,0,0}

\iclrfinalcopy 

\usepackage{orcidlink}
\newcommand{\ourmodel}
{TopoEnhance\xspace}
\newcommand{\best}[1]{\textcolor[HTML]{4e4fd1}{\textbf{#1}}}

\title{Refine Connections, Close the Gap: A Reliable Enhancement Framework for Driving Scene Topology}

\author{%
Xiaoqi Wang$^{1,2}$\thanks{Equal contribution.}\qquad
Dingyi Zhuang$^{3}$\footnotemark[1]\hspace{0.5em}\thanks{This work was done during an internship at Bosch.}\qquad
David Paz$^{1,2}$\qquad
Wenbin He$^{1,2}$\qquad
Yucai Bai$^{4}$\\
\textbf{Peng Zhou}$^{4}$\qquad
\textbf{Rui Zhang}$^{4}$\qquad
\textbf{Jinhua Zhao}$^{3}$\qquad
\textbf{Liu Ren}$^{1,2}$\\[0.2em]
$^{1}$Bosch Research North America \\
$^{2}$Bosch Center for Artificial Intelligence (BCAI) \\
$^{3}$Massachusetts Institute of Technology \\
$^{4}$Bosch XC China\\
[0.2em]
\texttt{\{xiaoqi.wang,david.pazruiz,wenbin.he2,liu.ren\}@us.bosch.com} \\
\texttt{\{dingyi,jinhua\}@mit.edu}\\
\texttt{\{yucai.bai,peng.zhou3,frank.zhang2\}@cn.bosch.com}
}

\begin{document}

\maketitle
\lhead{Preprint}

\doparttoc
\faketableofcontents
\begin{abstract}
In autonomous driving, understanding scene topology—the connectivity between lanes and traffic elements—is critical for safe path planning and motion control. While current methods excel at detecting individual map elements, their connectivity reasoning often falls short of its theoretical potential, leaving a significant performance gap relative to the theoretical upper-bound (Oracle) achievable given the underlying detections. Furthermore, the decision-ready topology graphs passed to downstream tasks often remain unreliable. Current approaches typically derive connectivity by thresholding continuous topology scores; however, these scores often fail to reflect the true logical likelihood of connectivity, resulting in false positives or missing connections. Existing benchmarks further overlook this issue by primarily evaluating continuous metrics (e.g., TOP score), rather than assessing the discrete connectivity required for decision-making. To bridge these gaps, we propose \ourmodel, a novel topology enhancement framework designed to unlock the latent potential of existing methods and improve the reliability of decision-ready topology. We formulate topology enhancement as a denoising-based reconstruction process, where the model learns to recover structural consistency from stochastically corrupted ground-truth graphs. This formulation enables the model to resolve logical inconsistencies and rectify unreliable connections, producing robust discrete topology graphs that closely approach theoretical maximum performance. Extensive experiments across different baselines show that \ourmodel consistently improves both continuous topology metrics (TOP$_{ll}$ and TOP$_{lt}$), and discrete connectivity measured by our adapted Topology Jaccard Similarity (TJS) metric. As a flexible, source-agnostic framework, \ourmodel delivers substantial gains across diverse state-of-the-art baselines without requiring retraining.
\end{abstract}

\section{Introduction}
\label{sec:intro}

In autonomous driving, understanding driving-scene topology is essential, as it encodes the structure of drivable regions and their interactions with traffic elements. For instance, when executing a left turn at an intersection, accurate topology understanding enables the ego vehicle to determine which lane to follow and which traffic signals govern its movement. Achieving this requires detecting both lanes and traffic elements (map element detection), inferring connectivity between lanes to form valid driving paths (lane–lane topology), and associating traffic elements with lanes to identify relevant control signals (lane–traffic topology). The resulting topology graph, in which nodes represent lanes or traffic elements and edges encode their relations, serves as the foundation for downstream tasks such as path planning~\citep{chai2019multipath} and trajectory prediction~\citep{gu2024producing}.

While accurate detection of map elements lays the foundation, the ultimate success of topology reasoning depends on making correct connectivity predictions. Even with perfectly detected map elements, erroneous connections can still lead to critical failures. For example, a planning module may choose an unsafe route if lane connections are predicted incorrectly, despite flawless detection. Beyond these safety concerns, we observe a substantial performance gap between existing methods and the theoretical upper-bound (Oracle) achievable in connectivity reasoning given their detection results. This gap suggests that existing models do not fully exploit the structural information already present in their detections, leaving significant room for improvement on topology connections. A central goal of our work is to bridge this gap by developing a topology refinement framework that fully exploits the latent potential of existing methods.

Alongside this performance gap, another critical issue is the common oversight in how existing methods handle the final decision-ready topology. Specifically, they predict a topology score for every pair of detections and derive the final discrete topology graph through thresholding~\citep{li2023toponet,wu2023topomlp,can2022topology,smart,lv2025t2sg}. However, the numerical values of these scores do not always faithfully reflect the true likelihood that a topology connection should exist. In multi-task learning settings ("detect" and "connect"), models can become prone to overconfidence or underconfidence due to competing objectives and gradient interference~\citep{huang2025confident,rotman2022multi}. Consequently, the resulting discrete topology graph can be unreliable with missing topology connection or false positive connections. Moreover, the quality of discrete connectivity is rarely evaluated explicitly. Current benchmarks only report mAP on continuous topology confidence scores for every possible detection pair (i.e., TOP score in OpenLane-V2~\citep{wang2023openlanev2}), rather than directly assessing the correctness of the final discrete graph. As a result, existing methods largely overlook the reliability of discrete connectivity. However, downstream modules such as path planning depend on high-quality discrete driving topology graphs, making reliable connectivity essential. This discrepancy exposes a critical gap between the focus of existing methods and the practical needs of autonomous driving.

To address both of these challenges, we propose a novel driving topology enhancement framework, \ourmodel, which formulates topology refinement as a diffusion-like denoising process to unlock the maximum topological reasoning potential inherent in existing detections. Given a set of detected map elements and topology confidence scores for each pair of detections, \ourmodel acts as a independent judge that recovers a reliable discrete topologies by leveraging structural context in the driving scene. Since paired topology graphs before and after enhancement are unavailable, we develop a novel forward process (topology noise simulation) that stochastically corrupts ground-truth topology graphs to simulate realistic geometric and structural prediction errors. Complementing this, we introduce a specialized reverse process using a lightweight heterogeneous Graph Neural Network (GNN) to recover structurally consistent connectivity. This module is architected to jointly model the distinct relational dependencies of lane and traffic elements, ensuring both are optimized without one dominating the other. By exploiting this heterogeneous neighborhood information, \ourmodel can potentially learn to identify true positive connections via general traffic rules—for example, a lane associated with right-turn successor lane should have near-zero likelihood of connecting to left-turn sign. By producing the well-calibrated topology scores, \ourmodel leads to more reliable topology connections, bridging the gap toward the theoretical maximum performance and enhancing the robustness of downstream decision-making.

As a flexible source-agnostic framework, \ourmodel enhance topology predictions from different sources without requiring retraining. In our experiments, we evaluate both the calibrated continuous topology scores and the resulting discrete topology after thresholding using the standard TOP score in OpenLane-V2 and a Topology Jaccard Similarity (TJS) metric we designed for this task, respectively. Across predictions generated from methods with diverse input modalities, \ourmodel consistently improves continuous topology metrics (TOP$_{ll}$, TOP$_{lt}$) and delivers substantial gains on the discrete topology metric (TJS). For instance, although TopoLogic already achieves significant improvements in TOP scores over previous state-of-the-art methods, \ourmodel further increases its TOP$_{lt}$ from 25.4 to 27.2. In addition, while SMART attains the current state-of-the-art performance by leveraging additional auxiliary data (e.g., SD maps and satellite imagery), \ourmodel still boosts its TOP$_{ll}$ from 37.0 to 40.1 without retraining. In summary, our key contributions are threefold:  
    \begin{itemize}

        \item We identify an often-overlooked problem of decision-ready topology quality and highlight the significant performance gap between current methods and their theoretical Oracle upper-bounds, presenting a major opportunity for improvement through topology refinement.

        \item We propose the first driving topology enhancement framework, \ourmodel, which frames topology refinement as a denoising process to denoise and enhance the topology connections into more reliable decision-ready topology graphs.
    
        \item We conduct comprehensive experiments showing that \ourmodel consistently improves both continuous topology metrics ($\text{TOP}_{ll}$ and $\text{TOP}_{lt}$) and discrete topology metrics (TJS) across predictions from diverse sources, effectively bridging the gap toward theoretical upper-bound performance without retraining.

    \end{itemize}

\section{Related Work}

\textbf{Driving Topology Reasoning.} Driving topology reasoning extracts lane–lane connectivity and lane–traffic associations to form structured road graphs, as formalized in OpenLane-V2~\citep{wang2023openlanev2}. Early DETR-style pipelines like STSU~\citep{stsu} jointly predict lanes and relations, while subsequent works enhance reasoning through diverse architectures. TopoNet~\citep{li2023toponet} utilizes GNNs for joint modeling, whereas TopoMLP~\citep{wu2023topomlp} employs a detect-then-reason pipeline with MLP heads. Further improvements integrate geometric–semantic cues (TopoLogic~\citep{fu2024topologic}), explicit point–lane interactions (TopoPoint~\citep{fu2025topopoint}), and external priors such as SD maps (SMERF~\citep{smerf}, TopoSD~\citep{yang2024toposd}), satellite imagery (SMART~\citep{smart}), or 2D lane priors (Topo2D~\citep{topo2d}). Across these approaches, existing methods typically formulate topology as pairwise confidence scores and derive discrete graphs via thresholding. However, these scores often fail to reflect true logical likelihoods due to multi-task overconfidence or gradient interference~\citep{huang2025confident,rotman2022multi}, leaving a significant performance gap relative to the theoretical upper-bound. Furthermore, the reliability of the final discrete connectivity, though essential for downstream planning, is rarely explicitly optimized or analyzed. To bridge this gap, we introduce \ourmodel, a dedicated enhancement framework that refines topology scores using structural context to produce more reliable decision-ready graphs.

\textbf{Graph Neural Networks.} Graph Neural Networks (GNNs) excel at modeling structured data by learning relational dependencies through message-passing~\citep{gnn-survey}. In driving scene topology, TopoNet~\citep{li2023toponet} pioneered using GNNs to capture map element interactions, followed by SMART~\citep{smart} and TopoLogic~\citep{fu2024topologic}, which integrated GNN-based heads for relational reasoning. More recently, TopoPoint~\citep{fu2025topopoint} introduced a Point–Lane GNN for fine-grained feature aggregation. In this work, we propose a heterogeneous GNN-based refinement model featuring a transformer-inspired architecture. Compared with the homogeneous designs that treat all different relationships uniformly, our heterogeneous modeling employs distinct message-passing kernels for lane–lane and lane–traffic relations, acknowledging their unique semantic roles~\citep{h-gnn}. By employing Graph Attention (GAT)~\citep{gat} layers, our model adaptively weights neighbors to selectively emphasize structurally consistent connections while suppressing noise. This heterogeneous attention mechanism allows \ourmodel to implicitly learn traffic-rule-aware patterns, ensuring logically consistent topology enhancement.

\section{\ourmodel}
\label{sec:method}

In this paper, our high-level objective is to bridge the performance gap in driving topology reasoning by ensuring that "decision-ready" topologies are structurally consistent and reliable for down-stream autonomous driving tasks.
To this end, we propose a topology enhancement framework designed to unlock the latent potential of existing perception backbones. By training on simulated topologies containing diverse predictive errors, our framework learns to recover structural consistency, thereby enabling effective generalization to the inaccuracies found in real-world topology predictions.
We frame this as a diffusion-like denoising problem on topology graphs:
the forward process (Section~\ref{tns}) stochastically corrupts a
ground-truth topology $\mathcal{G}^*$ into a noisy graph
$\mathcal{G}_{\mathrm{noise}}$ by injecting geometric and structural prediction
errors; the reverse process (Section~\ref{sec:enhancement_model}) learns
to recover reliable topology from $\mathcal{G}_{\mathrm{noise}}$ through a
lightweight heterogeneous GNN-based refinement model.
A high-level overview of \ourmodel{} is illustrated in \autoref{fig:model}.

\begin{figure}[t]
    \centering
    \includegraphics[width=\linewidth]{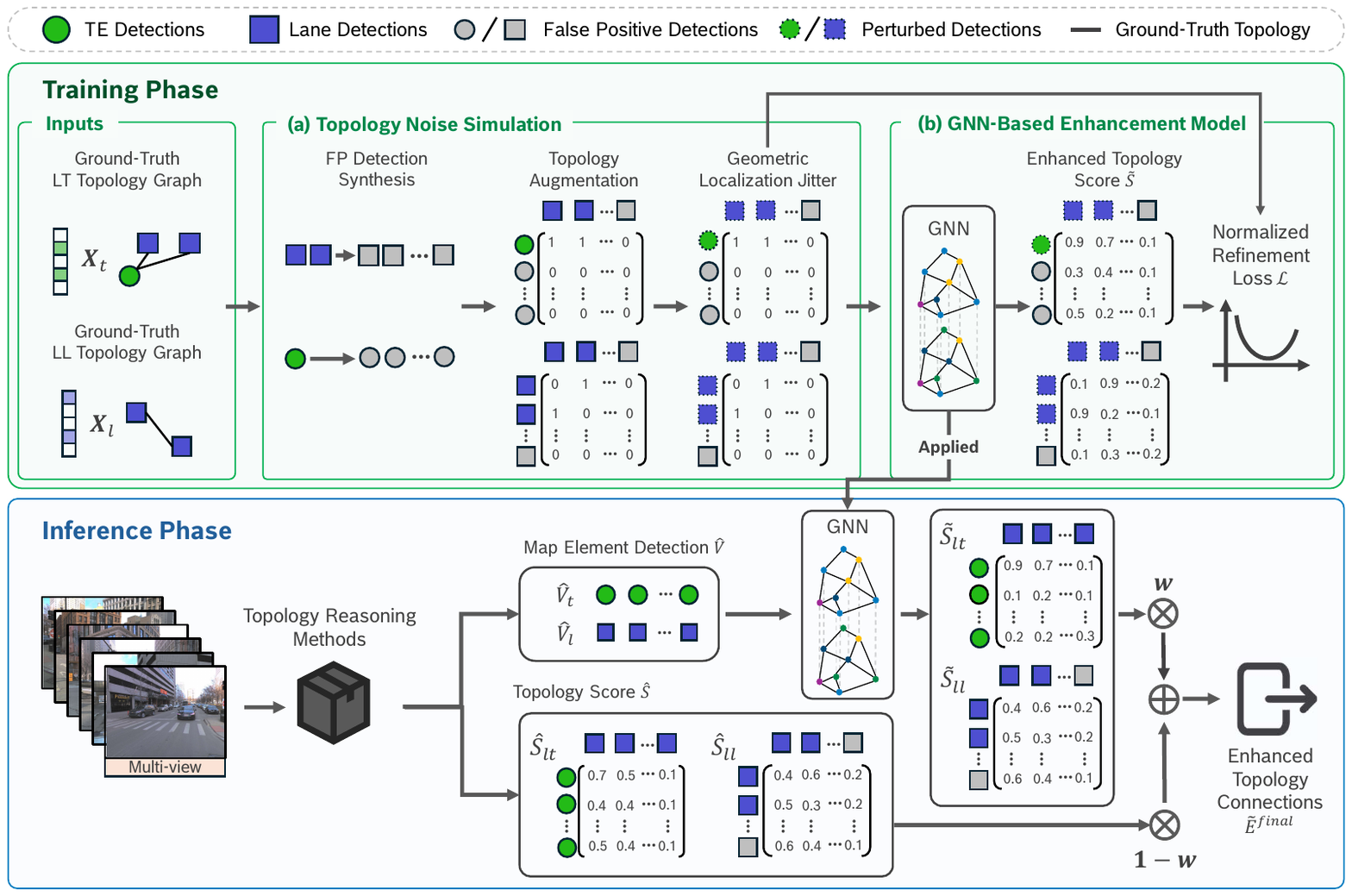}
    \vspace{-20pt} 

\caption{
Overall framework of \ourmodel.
Training phase:
(a) Forward process (topology noise simulation): a novel noise simulation
strategy that stochastically corrupts ground-truth topology graphs into noisy
graphs $\mathcal{G}_{\mathrm{aug}}$ by injecting realistic geometric and
structural prediction errors.
(b) Reverse process (topology denoising): a lightweight
heterogeneous GNN learns to recover reliable topology connections from the
simulated noisy graphs.
Inference phase: once trained, the enhancement model is directly
applied to calibrate topology scores from any topology reasoning method,
without retraining.
}

\label{fig:model}
\end{figure}

\subsection{Problem Formulation}
\label{sec:problem_formulation}

\textbf{Representation of Driving Scene Topology.} We represent the driving
scene topology as a heterogeneous graph $\mathcal{G} = (\mathcal{V},
\mathcal{E})$. The node set $\mathcal{V} = \mathcal{V}_l \cup \mathcal{V}_t$
consists of two distinct types: lane instances $\mathcal{V}_l =
\{l_i\}_{i=1}^{n_l}$, encoded as 3D polylines, and traffic elements
$\mathcal{V}_t = \{t_j\}_{j=1}^{n_t}$, represented by 2D bounding boxes. The
connectivity $\mathcal{E}$ is categorized into two subsets: (1) lane--lane
relations $\mathcal{E}_{ll} \subseteq \mathcal{V}_l \times \mathcal{V}_l$,
capturing structures such as successors, merges, and splits; and (2)
lane--traffic relations $\mathcal{E}_{lt} \subseteq \mathcal{V}_l \times
\mathcal{V}_t$, capturing lane-to-sign assignments.

\textbf{Enhancing Decision-Ready Topology.} In existing pipelines,
connectivity $\mathcal{E}$ is typically derived by thresholding raw topology
scores $\hat{\mathcal{S}} = \{s_{uv} \in [0,1] \mid u,v \in \hat{\mathcal{V}}\}$
to obtain a discrete graph $\hat{\mathcal{G}} = (\hat{\mathcal{V}},
\hat{\mathcal{E}})$ where $\hat{\mathcal{E}} = \{(u,v) \mid s_{uv} \geq
\tau\}$. However, because these scores can be biased and fail to reflect the
true likelihood of edge existence, the predicted topology $\hat{\mathcal{G}}$
can be viewed as a corrupted observation of the ground-truth
$\mathcal{G}^*$, suffering from false-negative or false-positive connections. We
therefore formulate topology enhancement as a reverse process: learning
a mapping $f_\theta$ that recovers calibrated topology scores
$\tilde{\mathcal{S}} = f_\theta(\hat{\mathcal{V}}, \hat{\mathcal{S}})$ from
the corrupted $\hat{\mathcal{G}}$, such that the resulting discrete topology
$\tilde{\mathcal{E}}$ achieves higher structural fidelity to $\mathcal{G}^*$
by resolving logical inconsistencies through heterogeneous neighborhood
context.

\subsection{Forward Process: Topology Noise Simulation}
\label{tns}

The forward process defines the corruption distribution
$q(\mathcal{G}_{\mathrm{noise}}\mid\mathcal{G}^*)$, which stochastically
transforms a ground-truth topology $\mathcal{G}^*$ into a noisy topology
$\mathcal{G}_{\mathrm{noise}}$.
Unlike image-based diffusion models where Gaussian noise can be added directly
to continuous pixel values, designing a forward process for driving topology
graphs is non-trivial.
A driving topology graph is a heterogeneous discrete structure consisting of
3D polylines, 2D bounding boxes, and categorical connectivity relations.
Corruption must be applied separately and meaningfully to each component, such
that the resulting corruption distribution
$q(\mathcal{G}_{\mathrm{noise}}\mid\mathcal{G}^*)$ closely matches the
distribution of real-world topology predictions $p(\hat{\mathcal{G}})$.
Without this alignment, the reverse process
$p_\theta(\mathcal{G}^*\mid\mathcal{G}_{\mathrm{noise}})$ will fail to
generalize on real-world topology predictions.
To address this, we design the forward process through three components that
jointly simulate the realistic error distribution of real-world topology
predictions.

\textbf{False-Positive Detection Synthesis.} A primary failure in driving
topology reasoning is the detection of false positives, i.e., lane instances
or traffic elements that are either geometrically disparate from the
ground-truth instances or entirely non-existent in the physical scene. We
simulate these by expanding the detection set $\mathcal{V}_{\mathrm{noise}} =
\mathcal{V}^* \cup \mathcal{V}_{fp}$. To generate the false-positive detection
set $\mathcal{V}_{fp}$, we sample from a high-variance distribution, accepting
a candidate detection $v_{fp}$ only if its geometric distance from the source
ground-truth instance $v$ exceeds a category-specific distance threshold
$\delta_{type}$:
\vspace{-10pt}

\begin{equation}
v_{fp} = v + \epsilon, \quad \text{s.t.} \quad \text{dist}(v, v_{fp}) >
\delta_{type}, \quad \epsilon \sim \mathcal{N}(0, \sigma_{fp}^2\mathbf{I}),
\end{equation}
where $v \in \mathcal{V}^*$ is a ground-truth instance and $type \in \{l, t\}$
refers to the lane or traffic element category.

\textbf{Topology Augmentation.} These false-positive detections are then used
to synthesize unreliable connectivities that $f_\theta$ must learn to prune.
We define the candidate edge set $\mathcal{E}_{\mathrm{noise}}$ by generating
all potential pairs $\{(u,v) \mid u,v \in \mathcal{V}_{\mathrm{noise}}\}$. To
supervise the reverse process, we assign ground-truth edge labels $y_{uv}$ as:
\vspace{-15pt} 

\begin{equation}\label{eq:gt_label}
y_{uv} = \mathbb{I}[(u,v) \in \mathcal{E}^*].
\end{equation}
This labeling scheme naturally ensures that any connection involving at least
one false-positive detection from $\mathcal{V}_{fp}$ is assigned a negative
label ($y_{uv} = 0$), as these nodes by definition do not exist in
$\mathcal{V}^*$. By populating $\mathcal{V}_{fp}$ such that the augmented node
set matches the expected cardinality of detections during inference
($|\mathcal{V}_{\mathrm{noise}}| = \mathbb{E}[|\hat{\mathcal{V}}|]$), the
candidate edge space $\mathcal{E}_{\mathrm{noise}}$ matches the size of the
raw topology scores $\hat{\mathcal{S}}$ to be calibrated.

\textbf{Geometric Localization Jitter.} In practice, the map element
predictions $\hat{\mathcal{V}}$ can never perfectly reconstruct ground-truth
geometries; even true-positive detections contain localization jitter in their
3D polylines or 2D bounding boxes. If $f_\theta$ were trained only on perfect
true-positive detections, it would fail to generalize to the noisy coordinates
encountered during inference. To bridge this gap, we apply a constrained
perturbation to all ground-truth detections $v \in \mathcal{V}^*$,
transforming the raw geometry into a jittered representation $v'$:
\vspace{-15pt} 

\begin{equation}
v' = v + \epsilon, \quad \text{s.t.} \quad \text{dist}(v, v') \leq
\delta_{type}, \quad \epsilon \sim \mathcal{N}(0, \sigma_{tp}^2\mathbf{I})
\end{equation}
By suppressing the perturbation within $\delta_{type}$, these detections
remain semantically valid yet distinguishable from the false positives in
$\mathcal{V}_{fp}$, forcing $f_\theta$ to resolve connectivity from
topological context rather than exact coordinate matching.

\textbf{Adaptive Noise Curriculum.} To expose the forward process
$q(\mathcal{G}_{\mathrm{noise}}\mid\mathcal{G}^*)$ to a broad spectrum of
corruption severities, we design the noise variances $\sigma_{tp}^2$ and $\sigma_{fp}^2$ as dynamic functions of training progress. Inspired by curriculum learning~\citep{curriculum}, we design a curriculum-like schedule which enables a gradual transition from coarse error correction to fine-grained discrimination. Specifically, $\sigma_{tp}$ is initialized near zero and progressively increased toward the distance threshold $\delta_{type}$, while $\sigma_{fp}$ starts at a relatively large value and is gradually reduced toward $\delta_{type}$. As both noise profiles converge to $\delta_{type}$, the geometric distinction between true and false detections becomes increasingly subtle. This design can theoretically improve training efficiency and generalization of our framework~\citep{curriculum-benefit}.

The complete forward process is summarized in Algorithm~\ref{alg:tns}. By design,
the simulated corruption $q(\mathcal{G}_{\mathrm{noise}}\mid\mathcal{G}^*)$
closely approximates the error distribution of real-world predictions
$\hat{\mathcal{G}}$. This alignment ensures
that error-correction patterns learned on $\mathcal{G}_{\mathrm{noise}}$
naturally transfer to $\hat{\mathcal{G}}$ during inference, without requiring
retraining on any specific prediction source.

\subsection{Reverse Process: Topology Enhancement Model}
\label{sec:enhancement_model}

The reverse process learns the conditional distribution
$p_\theta(\mathcal{G}^*\mid\mathcal{G}_{\mathrm{noise}})$, which recovers
reliable topology from the corrupted topology $\mathcal{G}_{\mathrm{noise}}$.
It is achieved with a heterogeneous GNN-based enhancement model $f_\theta$
that performs type-aware message passing over lane and traffic-element nodes,
enabling distinct processing of lane--lane and lane--traffic relations.
Through attention-based aggregation, $f_\theta$ leverages structural context
to suppress inconsistent edges while reinforcing connectivity patterns aligned
with lane flow and traffic regulations.

\textbf{Heterogeneous GNN Architecture.}
Given the noisy graph $\mathcal{G}_{\mathrm{noise}} = (\mathcal{V}_{\mathrm{noise}},
\mathcal{E}_{\mathrm{noise}})$, we employ a heterogeneous GNN to learn
relational representations.
Node features $(X_l, X_t)$ are initialized by encoding lane instances using
their 3D polyline coordinates, while traffic elements are represented by
visual features extracted from their 2D bounding boxes.
The node embeddings are iteratively updated through heterogeneous message
passing.
Specifically, for each node $v \in \mathcal{V}_{\mathrm{noise}}$, its embedding
$\mathbf{h}_v^{(k)}$ at layer $k$ is updated by aggregating information from
its multi-type neighborhood:
\vspace{-14pt} 

\begin{equation}
\mathbf{h}_v^{(k)} = \sigma \left( \sum_{r \in \mathcal{R}}
\text{AGG}_{(u,v) \in \mathcal{E}_{\mathrm{noise}}} \left( \psi_r \left(
\mathbf{h}_v^{(k-1)}, \mathbf{h}_u^{(k-1)} \right) \right) \right),
\end{equation}
where $\mathcal{R} = \{ll, lt\}$ denotes lane--lane and lane--traffic
relations, and $\psi_r$ is a relation-specific transformation function.
We utilize a lightweight two-layer Heterogeneous Graph Attention Network
(GAT) as aggregation function, whose attention mechanism learns relation-aware coefficients to
emphasize structurally relevant neighbors while suppressing noisy or
false-positive detections.
After obtaining final node embeddings, we compute the refined score
$\tilde{s}_{uv} \in \tilde{\mathcal{S}}$ via sigmoid-activated cosine
similarity:
\vspace{-20pt} 

\begin{equation}\label{eq:cos_sim}
\tilde{s}_{uv} = \sigma\!\left(
  \mathbf{h}_u^{\top}\mathbf{h}_v / (\|\mathbf{h}_u\| \|\mathbf{h}_v\|)
\right), \quad \forall (u,v) \in \mathcal{E}_{\mathrm{noise}}
\end{equation}
where $\sigma(\cdot)$ denotes the sigmoid function, ensuring
$\tilde{s}_{uv} \in (0,1)$.

\textbf{Normalized Refinement Loss.}
We derive the training objective directly from the reverse process
$p_\theta(\mathcal{G}^*\mid\mathcal{G}_{\mathrm{noise}})$.
Assuming conditional independence of candidate edges given
$\mathcal{G}_{\mathrm{noise}}$, the reverse distribution factorizes as:
\vspace{-10pt} 

\begin{equation}\label{eq:bernoulli}
p_\theta\!\left(\mathcal{G}^*\mid\mathcal{G}_{\mathrm{noise}}\right)
=
\prod_{(u,v)\in\mathcal{E}_{\mathrm{noise}}}
\mathrm{Bernoulli}(y_{uv};\,\tilde{s}_{uv})
=
\prod_{(u,v)\in\mathcal{E}_{\mathrm{noise}}}
\tilde{s}_{uv}^{y_{uv}}(1-\tilde{s}_{uv})^{1-y_{uv}},
\end{equation}
where $y_{uv}$ is the ground-truth edge label from \autoref{eq:gt_label} and
$\tilde{s}_{uv}\in(0,1)$ is the sigmoid-activated cosine similarity from
\autoref{eq:cos_sim}.
Taking the negative log-likelihood of \autoref{eq:bernoulli} yields the
standard binary cross-entropy (BCE) over all candidate edges, as shown in Lemma~\ref{lemma:bce}.
However, applying BCE jointly over all edges is problematic: the much larger
lane--lane candidate space dominates the gradient, starving the lane--traffic
relation of useful learning signal.
We therefore adopt a relation-aware, volume-normalized BCE.
For each relation $r\in\{ll, lt\}$, the per-relation loss is:
\vspace{-10pt} 

\begin{equation}\label{eq:loss_definition}
\mathcal{L}^{(r)}
= - \frac{1}{|\mathcal{E}_r|}
\left(
\sum_{(u,v)\in \mathcal{E}_r^{\mathrm{pos}}} \log \tilde{s}_{uv}
+
\sum_{(u,v)\in \mathcal{E}_r^{\mathrm{neg}}} \log (1-\tilde{s}_{uv})
\right),
\end{equation}
where $|\mathcal{E}_r|$ is the total number of candidate edges for relation
$r$, $\mathcal{E}_r^{\mathrm{pos}} = \{(u,v) \in \mathcal{E}_r \mid
y_{uv}=1\}$, and $\mathcal{E}_r^{\mathrm{neg}} = \{(u,v) \in \mathcal{E}_r
\mid y_{uv}=0\}$.
The overall objective balances the two relation types as:
\vspace{-10pt} 

\begin{equation}\label{eq:loss_all}
\mathcal{L}
= \mathcal{L}^{(ll)}
+ \mathcal{L}^{(lt)}.
\end{equation}
As formalized in Lemma~\ref{lemma:bce} and Proposition~\ref{prop:diffusion} (\autoref{app:diffusion}), $\mathcal{L}$ can be interpreted as a relation-balanced formulation of the reconstruction term in a single-step $T{=}1$ diffusion evidence lower bound (ELBO)~\citep{sohl2015deep, ho2020denoising}.

\textbf{Training and Inference.}
During training, $f_\theta$ is optimized on simulated noisy graphs
$\mathcal{G}_{\mathrm{noise}}$ from the forward process.
The parameters $\theta$ are optimized against ground-truth labels $\mathcal{Y}$:
\vspace{-10pt} 

\begin{equation}
\theta^* = \arg\min_\theta \sum_{(u,v) \in \mathcal{E}_{\mathrm{noise}}}
\mathcal{L}(\tilde{s}_{uv}, y_{uv}), \quad \text{where } y_{uv} \in \mathcal{Y}.
\end{equation}
\vspace{-10pt} 

During inference, given detection predictions $\hat{\mathcal{V}}$ and raw
topology scores $\hat{\mathcal{S}} = \{ s_{uv} \mid u,v \in \hat{\mathcal{V}}
\}$, we construct the inference graph $\hat{\mathcal{G}} = (\hat{\mathcal{V}},
\hat{\mathcal{E}}^{\mathrm{cand}})$, where $\hat{\mathcal{E}}^{\mathrm{cand}}
= \hat{\mathcal{V}} \times \hat{\mathcal{V}}$.
Passing $\hat{\mathcal{G}}$ through $f_\theta$ yields refined scores
$\tilde{\mathcal{S}} = f_\theta(\hat{\mathcal{G}})$, which are combined with
the initial uncalibrated scores via a relation-specific fusion weight $w_r$:
\vspace{-10pt} 

\begin{equation}
s_{uv}^{\mathrm{final}} = w_r\,\tilde{s}_{uv} + (1-w_r)\,\hat{s}_{uv},
\quad \forall (u,v) \in \hat{\mathcal{E}}^{\mathrm{cand}}.
\end{equation}
Finally, the decision-ready discrete topology is obtained by thresholding:
\vspace{-10pt} 

\begin{equation}
\tilde{\mathcal{E}}^{\mathrm{final}} = \{ (u,v) \in \hat{\mathcal{E}}^{\mathrm{cand}}
\mid s_{uv}^{\mathrm{final}} \ge \tau \}.
\end{equation}

\section{Experiments}
\label{sec:results}

\textbf{Dataset.} Our experimental studies are performed on OpenLane-V2~\citep{wang2023openlanev2}, a comprehensive benchmark tailored for driving scene topology understanding and developed on top of the Argoverse2~\citep{wilson2023argoverse} and nuScenes~\citep{nuscenes} datasets. The benchmark supplies rich annotations spanning lane centerline reconstruction, traffic element recognition, and topological relationship modeling. It is organized into two subsets, Argoverse2 (subset\_A) and nuScenes (subset\_B), each containing 1,000 annotated driving sequences captured at 2 Hz with synchronized multi-view camera inputs. Both splits provide supervision for lane geometry, traffic elements, and the connectivity relations between lanes as well as between lanes and traffic elements. \textit{For all experiments, we train one model per dataset and apply it to enhance predictions from different methods at inference time, without any method-specific retraining.} For more implementation details, please refer to \autoref{sec:implement-details}.

\subsection{Evaluation Metrics}

For all topology metrics, we first establish a correspondence between predicted detections $\hat{V}$ and ground-truth detections $V^*$. Following OpenLane-V2~\citep{wang2023openlanev2}, detections are matched using the discrete Fréchet distance for lanes and IoU for traffic elements. We define the set of true-positive detections, $\hat{V}_{\text{TP}} \subseteq \hat{V}$, as those predictions that successfully match a ground-truth counterpart.

\textbf{Continuous Topology Metrics (OpenLane-V2 TOP).} To evaluate the impact of our enhancement on the continuous topology scores $\mathcal{S}$, we adopt the TOP metric from the OpenLane-V2 benchmark~\citep{wang2023openlanev2} (version 2.1.0). TOP is an mAP-based metric that measures how well potential connections are ranked according to their topology scores $\mathcal{S}$, ensuring that true connections are ordered above false ones. TOP score computes average mAP between $(V^*,E^*)$ and $(\hat{V},\hat{E})$ over all vertices:
\vspace{-13pt} 

\begin{equation}
    \text{TOP} = \frac{1}{|V^*|} \sum_{v \in V^*} \frac{\sum_{\hat{n} \in \hat{N}(v)} P(\hat{n}) \mathbf{1}(\hat{n} \in N(v))}{|N(v)|},
\end{equation}
where $P(\hat{n})$ is the precision of neighbor $\hat{n}$ in the ordered list ranked by $s_{uv}$. By evaluating the full ranking of candidate neighbors, TOP directly captures improvements in score calibration: promoting true positive connections on top of false positives. We report TOP both before and after enhancement to quantify improvements in ranking quality. Also, $\text{TOP}_{ll}$ and $\text{TOP}_{lt}$ are computed for lane--lane and lane--traffic relations, respectively.

\textbf{Discrete Topology Metrics (TJS).} While \text{TOP} score evaluates the relative ranking of continuous confidence scores, downstream autonomous driving modules operate on a discrete topology graph obtained by thresholding. A model can achieve a high \text{TOP} score by ranking edges correctly in a relative sense, yet still produce a final graph with significant missing or spurious connections if the scores are not well-calibrated. To directly assess the quality of this final ``decision-ready'' output, we introduce the Topology Jaccard Score (TJS), which adapts the Jaccard Similarity~\citep{jaccard1912distribution} to evaluate topological connectivity. TJS measures the overlap between the predicted and ground-truth edge sets, providing a direct assessment of discrete graph quality. Let $\mathcal{E}^*$ be the set of ground-truth edges and $\hat{\mathcal{E}}^{cand}$ be the set of all candidate edges. TJS is defined as:
\vspace{-12pt} 

\begin{equation}
    \text{TJS} = |\mathcal{E}^* \cap \hat{\mathcal{E}}|/|\mathcal{E}^* \cup \hat{\mathcal{E}}|,  \, \text{where}  \; \hat{\mathcal{E}} = \{ (u, v) \in \hat{\mathcal{E}}^{cand} : s_{uv} > \tau \}.
    \label{eq:TJS}
\end{equation}
By explicitly penalizing both false positives and false negatives in the final discrete topology graph, \textit{TJS serves as a discrete complement to the continuous TOP score, and together they provide a more comprehensive evaluation of topology connection quality.}

\textbf{Theoretical Upper-Bound (Upper-Bound).} To isolate the performance of topology reasoning from the quality of the upstream detector, we define a theoretical upper bound for both \text{TOP} and \text{TJS}. This bound represents the maximum achievable performance given the existing set of detection predictions $\hat{\mathcal{V}}$. Specifically, we assume an oracle that assigns the final confidence scores $s_{uv}^*$,
\vspace{-8pt} 

\begin{equation}
s_{uv}^* =
\begin{cases}
1.0 & \text{if } (u, v) \in \mathcal{E^*} \text{ and } u, v \in \hat{\mathcal{V}}_{\text{TP}} \\
0.0 & \text{otherwise}
\end{cases}.
\label{eq:upper_bound}
\end{equation}
By evaluating the metrics using $s_{uv}^*$, we establish a performance ceiling where all possible true connectivities are perfectly recovered and ranked, while errors stemming from missing detections or false-positive detections remain. Comparing our results against this bound quantifies the remaining "headroom" for topological refinement and distinguishes between errors caused by connectivity reasoning versus those inherent to the detection backbone.

\subsection{Enhanced Results}

In this section, we present enhanced results obtained by applying \ourmodel on existing topology reasoning methods with publicly available checkpoints to refine their topology connections. To the best of our knowledge, the methods with officially released checkpoints include TopoNet~\citep{li2023toponet}, TopoMLP~\citep{wu2023topomlp}, Topo2D~\citep{topo2d}, TopoLogic~\citep{fu2024topologic}, and SMART~\citep{smart}. Among these methods, SMART achieves state-of-the-art performance in topology reasoning by leveraging additional data modalities, such as SD maps and satellite imagery. In contrast, the other methods rely solely on perspective camera images as input. Beyond methods with publicly released checkpoints, we also attempted to train several additional approaches from scratch. However, we were unable to reliably reproduce their reported results, likely due to differences in training environments or hardware configurations. To ensure fairness and reliability in our comparisons, we therefore restrict our evaluation to methods with official checkpoints. Additionally, please refer to \autoref{sec:additional_evaluation} for additional evaluation results and \autoref{sec:practical_evaluation} for evaluation on our practical usability toward real-world applications.

\textbf{Quantitative Comparison on Continuous Topology Scores.} Table~\ref{tab:top_results} summarizes the performance enhancement on continuous topology scores (OpenLane-V2 TOP) across multiple existing topology reasoning methods. Our enhancer demonstrates consistent and significant improvements in the TOP metric for all evaluated baselines, regardless of their underlying architecture or input modalities. Notably, our model nearly triples the performance of TopoNet ($6.7 \rightarrow 19.5$ $\text{TOP}_{ll}$) on nuScenes and provides a significant boost to TopoMLP ($21.6 \rightarrow 24.4$ $\text{TOP}_{ll}$) on Argoverse2.  Even for the state-of-the-art methods SMART, which utilizes auxiliary data such as satellite imagery and SD maps, our model can still further boosts performance ($37.0 \rightarrow 40.1$ $\text{TOP}_{ll}$) and ($33.0 \rightarrow 35.4$ $\text{TOP}_{lt}$) during inference, \textit{without any retraining and adaptation to the additional input modalities}. This indicates that by modeling the structural dependencies between predicted entities, our model successfully pushes true positive edges above spurious ones in the topology score ranking. A key observation from Table~\ref{tab:top_results} is that our refined topology scores closely approach the Theoretical Upper-Bound. Since the Upper-Bound represents the maximum achievable performance given the fixed detection predictions $\hat{\mathcal{V}}$, the narrow gap remaining indicates that our enhancer has nearly exhausted the potential for topological refinement within the limits of the provided detections. Specifically, on Argoverse2, our enhancer reduces the margin to the maximum reachable performance in $\text{TOP}_{ll}$ to just 0.6 for TopoMLP and 0.7 for Topo2D.  For the SMART baseline, the enhanced $\text{TOP}_{ll}$ of 40.1 sits within a 0.8 margin of the theoretical ceiling. These results validate that our approach effectively serves as a universal enhancement layer, capable of leveraging heterogeneous neighborhood context to rectify local connectivity errors inherent in existing prediction heads. \textit{Notably, while our main motivation is to improve discrete connectivities via topology calibration, our calibration also boost commonly agreed TOP score by correcting the relative ordering between true positives and false positives.}

\begin{table}[t]
\centering
\scriptsize
\caption{Enhanced results on OpenLane-V2 measured by the TOP metric (version 2.1.0), which evaluates the quality of continuous topology scores. Bolded blue
values indicate improved score over the corresponding baselines before applying \ourmodel.}
\label{tab:top_results}
\resizebox{0.91\textwidth}{!}{
\begin{tabular}{l@{\hskip 0.1in}lcc@{\hskip 0.2in} c c@{\hskip 0.2in} c c}
\toprule
\multirow{2}{*}{\text{Dataset}}  & \multirow{2}{*}{\text{Method}} & \multicolumn{2}{c}{\textbf{Before \ourmodel}} & \multicolumn{2}{c}{\textbf{After \ourmodel}} & \multicolumn{2}{c}{\textbf{Upper-Bound}}  \\
\cmidrule(r){3-4}\cmidrule(r){5-6}\cmidrule(r){7-8}
 &  & TOP$_{ll} \uparrow$ & TOP$_{lt} \uparrow$ & TOP$_{ll} \uparrow$ & TOP$_{lt} \uparrow$ & TOP$_{ll} \uparrow$ & TOP$_{lt} \uparrow$ \\

\midrule
\multirow{5}{*}{{\makecell{Argoverse2\\(Subset\_A)}}}  & TopoNet~\citep{li2023toponet} & 10.9 & 23.8 & \best{21.8} & \best{25.8} & 24.4 & 28.8\\
&  TopoMLP~\citep{wu2023topomlp}   & 21.6 & 26.9 & \best{24.4} & \best{28.7} & 25.0 & 31.4\\

&   Topo2D~\citep{topo2d}   & 22.3 & 26.2 &  \best{24.3}  & \best{27.3} & 25.0 & 29.8\\
 & TopoLogic~\citep{fu2024topologic}  & 23.9 & 25.4 & \best{24.5}  & \best{27.2} & 26.0 & 30.6 \\
 & SMART~\citep{smart} & 37.0 & 33.0 & \best{40.1} & \best{35.4} & 40.9 & 38.9 \\

\specialrule{.1em}{.1em}{.1em}

\multirow{2}{*}{\makecell{nuScenes\\(Subset\_B)}}  & TopoNet~\citep{li2023toponet} & 6.7 & 16.7 & \best{19.5} & \best{17.8} & 21.2 & 20.3\\

 & TopoMLP~\citep{wu2023topomlp}   & 21.0 & 19.8 & \best{22.8} & \best{21.5} & 23.5 & 24.1\\
\bottomrule
\end{tabular}}

\end{table}

\begin{table}[t]
\centering
\scriptsize
\caption{Enhanced results on OpenLane-V2 evaluated using the TJS metric, which measures the quality of discrete topology connections.}
\label{tab:tjs_results}
\resizebox{0.91\textwidth}{!}{
\begin{tabular}{l@{\hskip 0.1in}lcc@{\hskip 0.2in} c c@{\hskip 0.2in} c c}
\toprule
\multirow{2}{*}{\text{Dataset}}  & \multirow{2}{*}{\text{Method}} & \multicolumn{2}{c}{\textbf{Before \ourmodel}} & \multicolumn{2}{c}{\textbf{After \ourmodel}}  & \multicolumn{2}{c}{\textbf{Upper-bound}}  \\

\cmidrule(r){3-4}\cmidrule(r){5-6}\cmidrule(r){7-8}
 &  & TJS$_{ll}$ $\uparrow$ & TJS$_{lt}$ $\uparrow$ & TJS$_{ll}$ $\uparrow$ & TJS$_{lt}$ $\uparrow$ & TJS$_{ll}$ $\uparrow$ & TJS$_{lt}$ $\uparrow$ \\

\midrule
\multirow{5}{*}{\makecell{Argoverse2\\(Subset\_A)}}  & TopoNet~\citep{li2023toponet} & 16.3 & 31.0 & \best{32.9} & \best{55.8} & 44.3 & 63.7\\
&  TopoMLP~\citep{wu2023topomlp}   & 18.5 & 31.5 & \best{59.7} & \best{59.8} & 69.7 & 72.8\\

&   Topo2D~\citep{topo2d}   & 18.9 & 34.5 &  \best{55.0}  & \best{51.5} & 68.7 & 70.8 \\
&  TopoLogic~\citep{fu2024topologic}  & 21.0 & 32.5 & \best{43.9}  & \best{53.3} & 50.9 & 71.4 \\
 & SMART~\citep{smart} & 38.1 & 36.6 & \best{87.7} & \best{82.8} & 92.9 & 90.1\\

\specialrule{.1em}{.1em}{.1em}

\multirow{2}{*}{\makecell{nuScenes\\(Subset\_B)}}  & TopoNet~\citep{li2023toponet} & 9.0 & 35.6 & \best{42.3} & \best{56.8} & 48.5 & 66.7\\
&  TopoMLP~\citep{wu2023topomlp}   & 15.9 & 29.3 & \best{51.3} & \best{62.6} & 68.9 & 82.2\\
\bottomrule
\end{tabular}}
\vspace{-5pt}
\end{table}

\textbf{Quantitative Comparison on Discrete Topology Connections.} While the TOP metric measures ranking quality of continuous topology scores, Topology Jaccard Score (TJS) serves as a discrete complement to evaluate the accuracy of the final discrete graph used for downstream tasks. As shown in Table \ref{tab:tjs_results}, our enhancer consistently and reliably delivers substantial gains in final decision-ready topology graphs. For TopoMLP on Argoverse2, the TJS$_{ll}$ nearly triples from a 18.5 to 59.7 and the TJS$_{lt}$ doubles from a 31.5 to 59.8. For TopoNet on nuScnenes, the TJS$_{ll}$ rises from 9.0 to 42.3 and the TJS$_{lt}$ improves from a 35.6 to 56.8. These significant boost indicates that while baseline methods may have a reasonable relative rankings on topology scores, their numerical values of topology scores are often poorly calibrated—exhibiting overconfidence for false positives or underconfidence for true connections. This lack of calibration leads to numerous missing or false-positive connections when scores are thresholded to produce the final graph. Even for the state-of-the-art SMART model, which already achieves the strongest baseline performance, our enhancer boosts the performance even further, enhancing TJS$_{ll}$ from 38.1 to 87.7 and TJS$_{lt}$ from 36.6 to 82.8. These results suggest that \ourmodel, trained on a curriculum of diverse simulated noise, produces more reliable decision-ready topology graphs. By reducing both false positives and false negatives, it yields cleaner topology structures for downstream tasks such as path planning.

\textbf{Qualitative Comparison.} \autoref{fig:qualitative_comparison} shows that \ourmodel effectively corrects missing and fragmented connections in TopoNet and TopoMLP, such as unlinked traffic lights in TopoNet or inconsistent lane merges in TopoMLP. More qualitative results are provided in \autoref{app:more_qualitative_results}.

\begin{figure}[th]
    \centering
    \begin{subfigure}[b]{0.19\textwidth}
        \centering
        \includegraphics[width=1\linewidth]{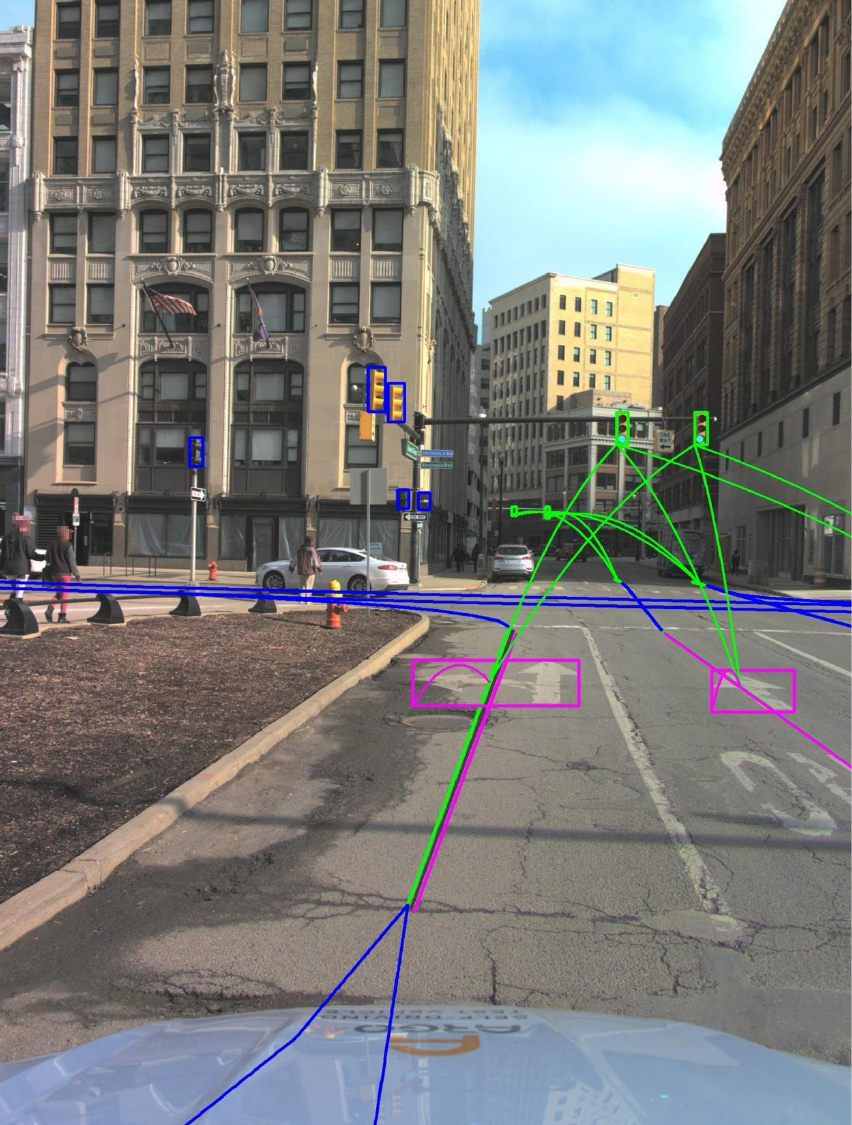}
        \caption{\centering Ground-truth \\ Topology}
        \label{fig:qualitative_gt}
    \end{subfigure}
    \hfill
    \begin{subfigure}[b]{0.19\textwidth}
        \centering
        \includegraphics[width=1\linewidth]{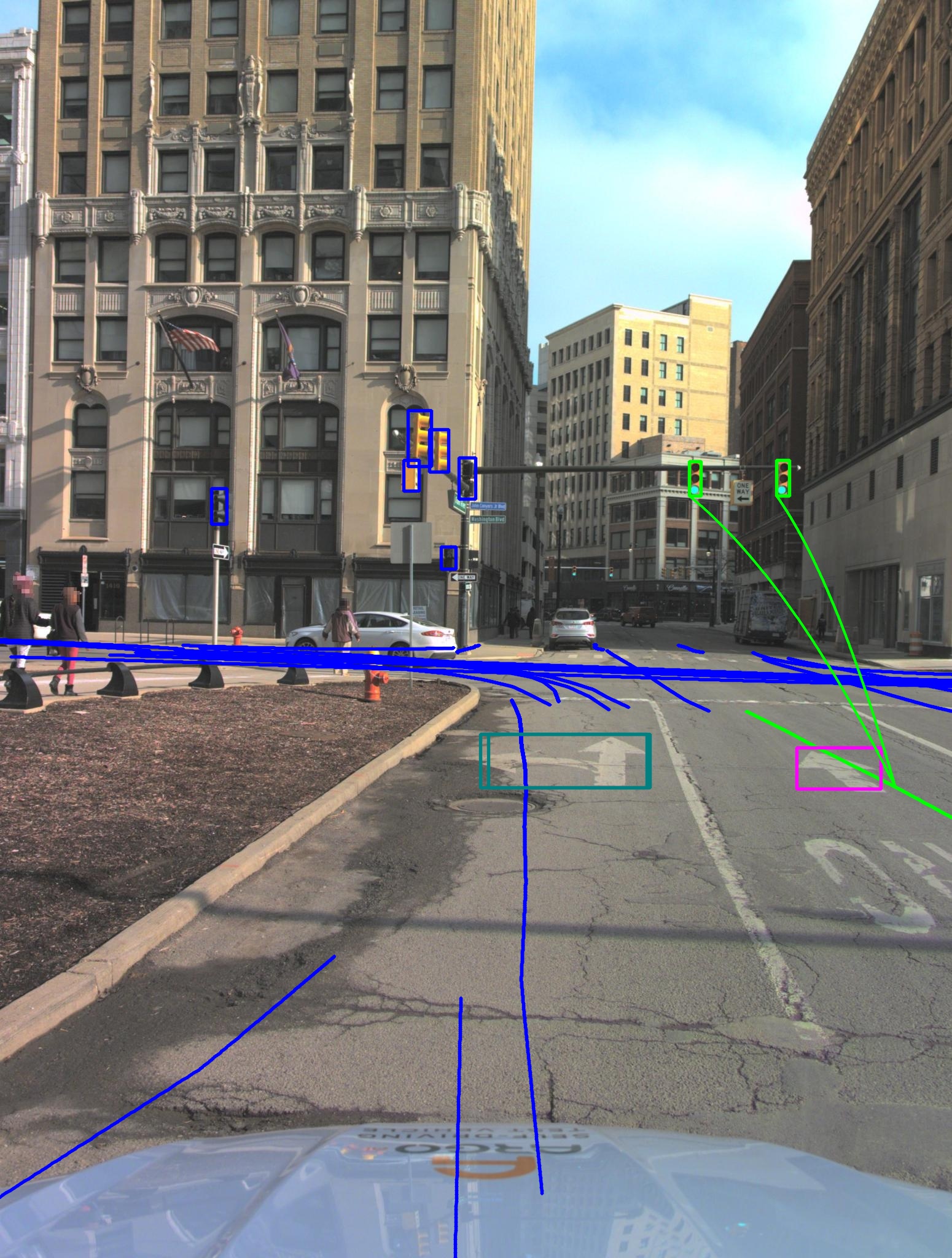}
        \caption{\centering TopoNet \\ Prediction}
        \label{fig:toponet_pred}
    \end{subfigure}
    \hfill
    \begin{subfigure}[b]{0.19\textwidth}
        \centering
        \includegraphics[width=1\linewidth]{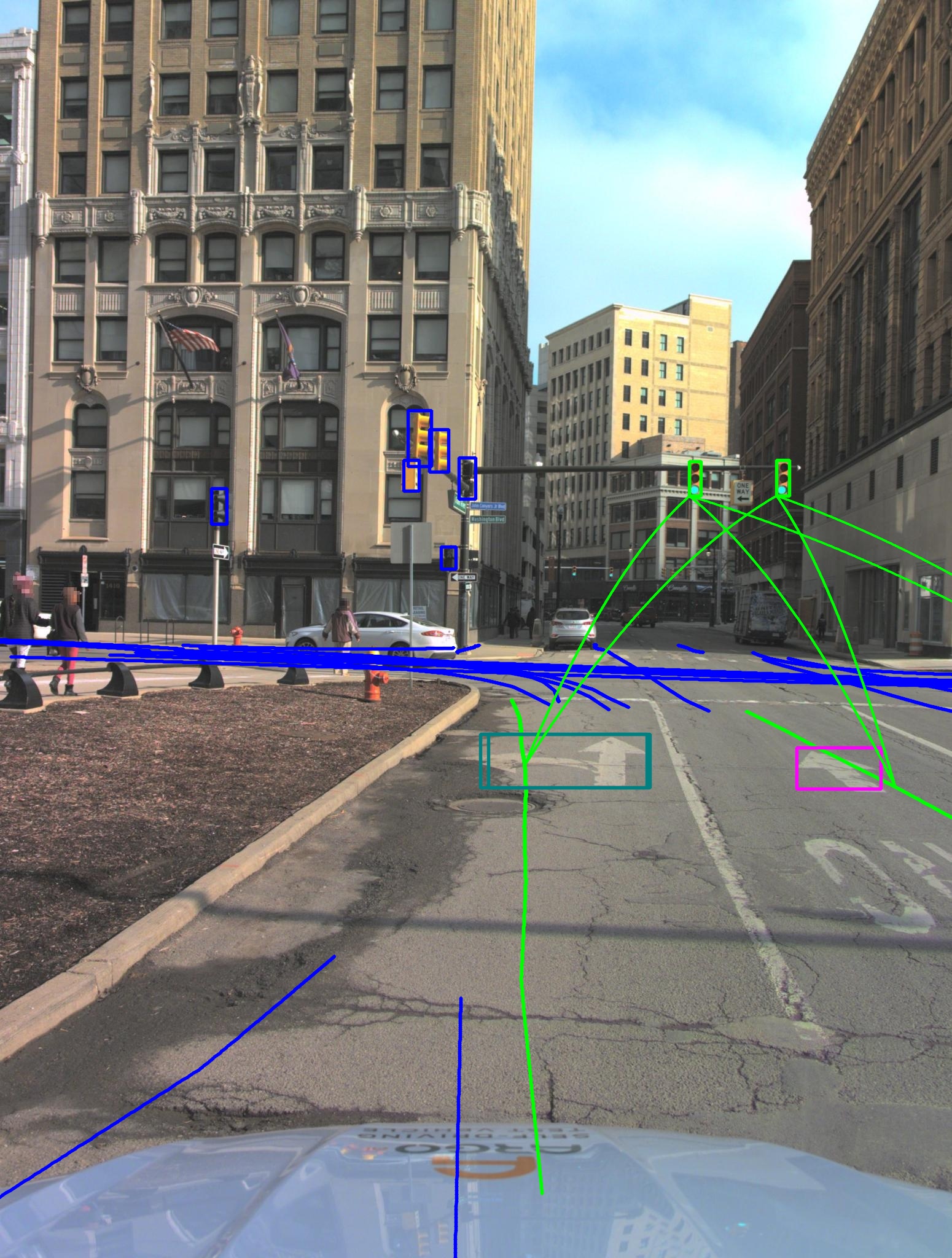}
        \caption{\centering TopoNet  \\+ \ourmodel}
        \label{fig:toponet_ssl}
    \end{subfigure}
    \hfill
    \begin{subfigure}[b]{0.19\textwidth}
        \centering
        \includegraphics[width=1\linewidth]{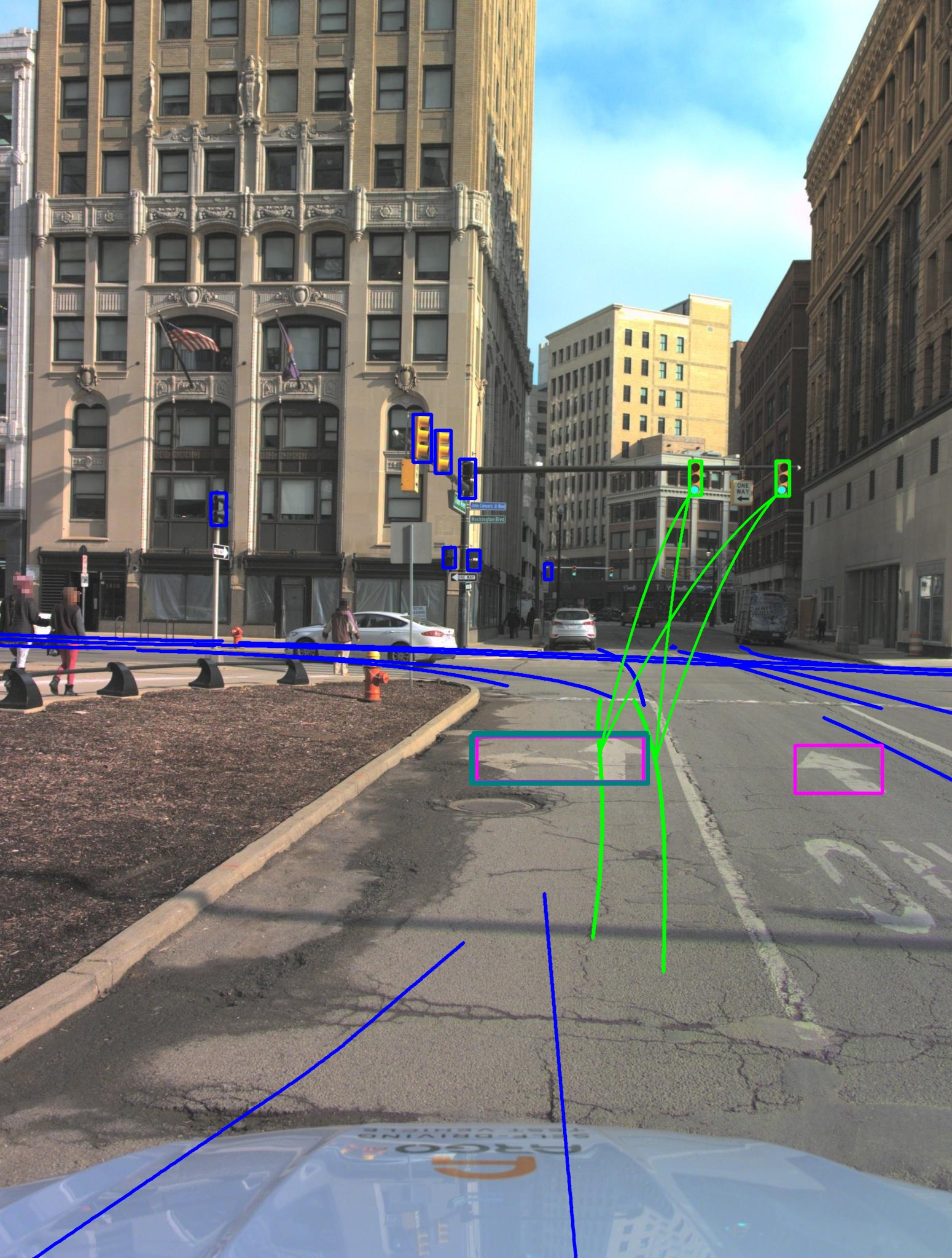}
        \caption{\centering TopoMLP \\Prediction}
        \label{fig:topomlp_pred}
    \end{subfigure}
    \hfill
    \begin{subfigure}[b]{0.19\textwidth}
        \centering
        \includegraphics[width=1\linewidth]{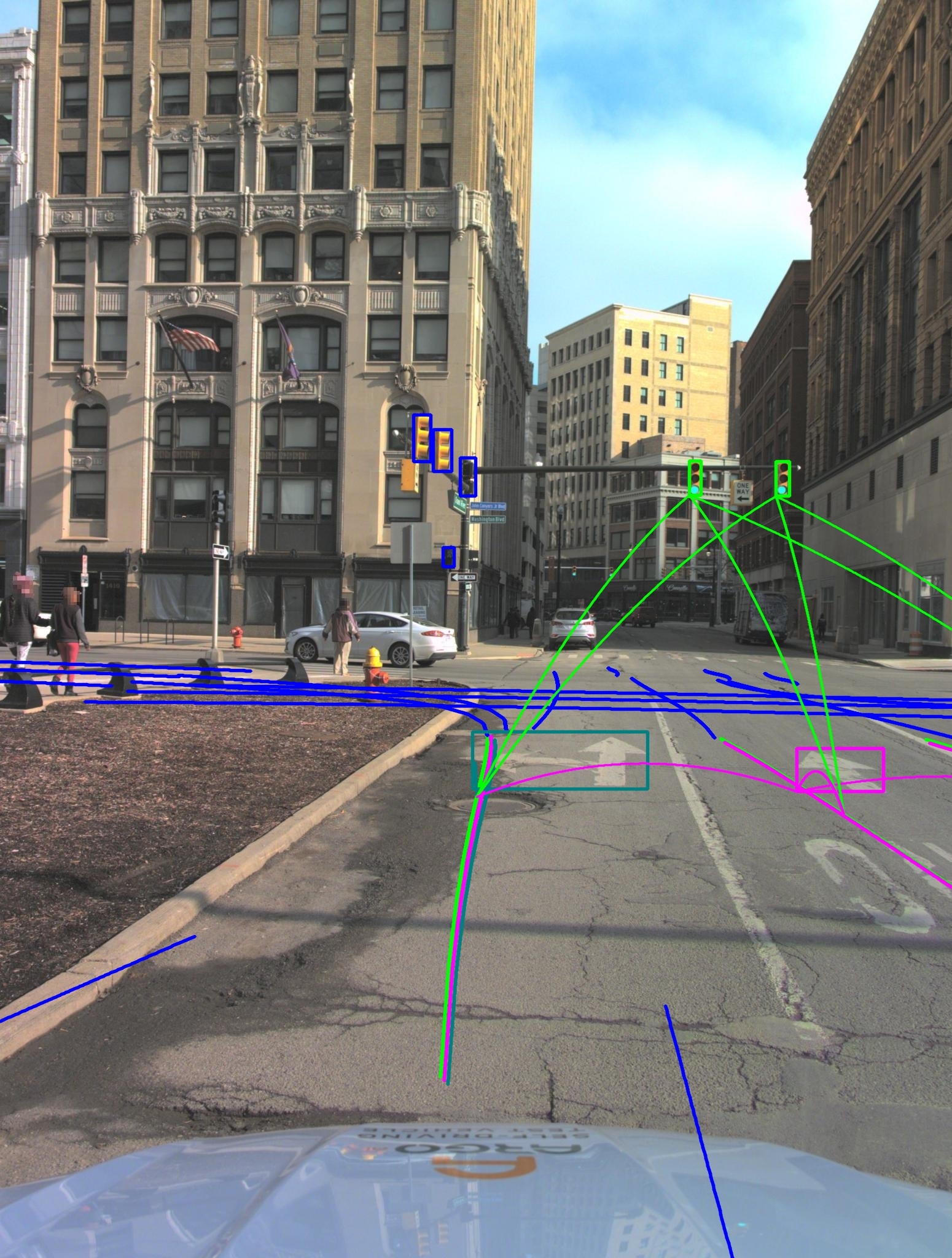}
        \caption{\centering TopoMLP \\+ \ourmodel}
        \label{fig:topomlp_ssl}
    \end{subfigure}
    \caption{Qualitative comparison of topology connections before and after enhancement.  
    }
    \label{fig:qualitative_comparison}
\end{figure}

\section{Conclusion}

Reliable topology connectivity is fundamental to autonomous driving, as downstream modules such as trajectory prediction and path planning rely on high-fidelity discrete topologies. In this work, we identify an often-overlooked gap in existing topology reasoning systems: while prior methods focus on predicting continuous topology scores, the reliability of the final decision-ready topology graph after thresholding is rarely considered or explicitly evaluated. To bridge this gap, we propose \ourmodel, a source-agnostic topology enhancement framework that refines topology connections using heterogeneous structural context. By reconstructing stochastically corrupted graphs, \ourmodel learns to resolve logical inconsistencies and recover structural consistency. Experiments on OpenLane-V2 show that \ourmodel consistently improves both topology score ranking (TOP) and discrete topology accuracy (TJS) across diverse sources without retraining. These results highlight structural refinement as a key step toward closing the performance gap in decision-ready topology. More broadly, this work emphasizes the necessity of moving beyond continuous topology score prediction to prioritize the structural reliability of the discrete connections used by autonomous systems. We hope our findings encourage future research to focus on producing robust, structurally consistent driving topology connections for safe and reliable autonomous driving.

\bibliography{iclr2027_conference}
\bibliographystyle{iclr2027_conference}

\clearpage
\appendix

\addcontentsline{toc}{section}{Appendix} 
\part{Appendix} 
\parttoc 
\newtheorem{proposition}{Proposition}
\newtheorem{corollary}{Corollary}
\newtheorem{remark}{Remark}
\newtheorem{lemma}{Lemma}

\section{Theoretical Diffusion-Based Interpretation of \ourmodel}
\label{app:diffusion}

In this section, we provide a theoretical diffusion-based interpretation of
\ourmodel (Sections~\ref{tns} and~\ref{sec:enhancement_model}). We first
show that the normalized relation-aware BCE loss
(Equation~\ref{eq:loss_definition}--Equation~\ref{eq:loss_all}) is a normalized Bernoulli negative log-likelihood (Lemma~\ref{lemma:bce}).
We then show that this denoising likelihood corresponds to the learnable
reconstruction term of a $T{=}1$ diffusion ELBO
(Proposition~\ref{prop:diffusion}). Together, these results justify
interpreting \ourmodel as a single-step diffusion-based denoising model on
topology graphs.

\paragraph{Setup.}
Let $\mathcal{G}^* = (\mathcal{V}^*, \mathcal{E}^*)$ denote a clean
ground-truth topology graph drawn from $p_{\mathrm{data}}(\mathcal{G})$.
The forward process (Section~\ref{tns}) defines a single-step corruption
distribution $q(\mathcal{G}_{\mathrm{noise}} \mid \mathcal{G}^*)$ that
stochastically transforms $\mathcal{G}^*$ into a noisy graph
$\mathcal{G}_{\mathrm{noise}} = (\mathcal{V}_{\mathrm{noise}},
\mathcal{E}_{\mathrm{noise}})$ by applying constrained Gaussian jitter to
true-positive detections and synthesizing false-positive detections. The
reverse process $f_\theta$ learns the conditional denoising distribution
$p_\theta(\mathcal{G}^* \mid \mathcal{G}_{\mathrm{noise}})$, supervised via
the normalized BCE objective
(Equation~\ref{eq:loss_definition}--Equation~\ref{eq:loss_all}). We further assume that
(i) the per-edge model output $\tilde{s}_{uv}\in(0,1)$ defines a valid
Bernoulli probability, and (ii) candidate edges are conditionally independent
given $\mathcal{G}_{\mathrm{noise}}$.

\begin{lemma}\label{lemma:bce}
The training objective $\mathcal{L}$ of \ourmodel{}
(Equation~\ref{eq:loss_definition}--Equation~\ref{eq:loss_all}) is a relation-balanced,
volume-normalized Bernoulli negative log-likelihood over
the candidate edge set $\mathcal{E}_{\mathrm{noise}}$.
\end{lemma}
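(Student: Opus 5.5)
The plan is to start from the factorized reverse distribution in Equation~\ref{eq:bernoulli}, which is justified by assumptions (i) and (ii) of the Setup, and take its negative logarithm. Since every $\tilde{s}_{uv}\in(0,1)$ by the sigmoid in Equation~\ref{eq:cos_sim}, all logarithms are finite. Because $y_{uv}\in\{0,1\}$, each Bernoulli factor $\tilde{s}_{uv}^{y_{uv}}(1-\tilde{s}_{uv})^{1-y_{uv}}$ reduces to $\tilde{s}_{uv}$ when $y_{uv}=1$ and to $1-\tilde{s}_{uv}$ when $y_{uv}=0$. This gives
\begin{equation*}
-\log p_\theta(\mathcal{G}^*\mid\mathcal{G}_{\mathrm{noise}}) = -\sum_{(u,v)\in\mathcal{E}_{\mathrm{noise}}}\bigl[y_{uv}\log\tilde{s}_{uv}+(1-y_{uv})\log(1-\tilde{s}_{uv})\bigr],
\end{equation*}
which is the standard BCE over all candidate edges.

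Next, I would partition the candidate set by relation type, $\mathcal{E}_{\mathrm{noise}}=\mathcal{E}_{ll}\,\dot\cup\,\mathcal{E}_{lt}$. This is valid because each candidate pair is either lane--lane or lane--traffic under the heterogeneous representation of Section~\ref{sec:problem_formulation}; any traffic--traffic pairs carry no label and are excluded from $\mathcal{E}_{\mathrm{noise}}$ by the relation set $\mathcal{R}$. Within each $\mathcal{E}_r$, I would further split the edges by label into $\mathcal{E}_r^{\mathrm{pos}}$ and $\mathcal{E}_r^{\mathrm{neg}}$, using the labeling of Equation~\ref{eq:gt_label}. Under assumption (ii), the likelihood factorizes as $p_\theta=\prod_r p_\theta^{(r)}$. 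Each relation-specific NLL $-\log p_\theta^{(r)}$ is then exactly the bracketed sum in Equation~\ref{eq:loss_definition}.

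The final step is to observe that $\mathcal{L}^{(r)}=\frac{1}{|\mathcal{E}_r|}\bigl(-\log p_\theta^{(r)}\bigr)$. Equivalently, $\mathcal{L}^{(r)}=-\log\bigl(p_\theta^{(r)}\bigr)^{1/|\mathcal{E}_r|}$, which is the NLL of the per-edge geometric-mean likelihood, i.e.\ a volume-normalized Bernoulli NLL. Summing over $r$ gives $\mathcal{L}=\sum_r -\frac{1}{|\mathcal{E}_r|}\log p_\theta^{(r)}$, which matches Equation~\ref{eq:loss_all}. This is a reweighted NLL with relation weights $1/|\mathcal{E}_r|$, and it is relation-balanced in the sense that each relation contributes an average per-edge NLL on the same scale, independent of the candidate volume. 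I would also note that the reweighting does not change the per-edge minimizer: each term is minimized at $\tilde{s}_{uv}=\mathbb{E}[y_{uv}\mid\mathcal{G}_{\mathrm{noise}}]$, so calibration is preserved.

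The main obstacle is not computational but definitional: making precise what ``relation-balanced, volume-normalized Bernoulli NLL'' means. I plan to handle this by stating it explicitly as the weighted log-likelihood $\sum_r w_r\log p^{(r)}_\theta$ with $w_r=1/|\mathcal{E}_r|$. A secondary point is that $|\mathcal{E}_r|$ must be nonzero. This holds because $\mathcal{V}_{\mathrm{noise}}$ contains at least one lane and one traffic element, since false-positive synthesis populates it to the expected detection cardinality.
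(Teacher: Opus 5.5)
Your argument is correct and is essentially the paper's proof. You take the negative log of the factorized likelihood in Equation~\ref{eq:bernoulli} to get the BCE over $\mathcal{E}_{\mathrm{noise}}$, partition it by relation type, and divide each block by $|\mathcal{E}_r|$ to recover $\mathcal{L}^{(r)}$ and hence $\mathcal{L}$. Like the paper, this tacitly restricts $\mathcal{E}_{\mathrm{noise}}=\mathcal{V}_{\mathrm{noise}}\times\mathcal{V}_{\mathrm{noise}}$ to lane--lane and lane--traffic pairs; note that traffic--traffic pairs are not unlabeled but receive $y_{uv}=0$ from Equation~\ref{eq:gt_label}.

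Your extra remarks are not needed for the lemma, and two of them overreach. First, the per-edge minimizer $\mathbb{E}[y_{uv}\mid\mathcal{G}_{\mathrm{noise}}]$ need not be attainable. Equation~\ref{eq:cos_sim} confines $\tilde{s}_{uv}$ to $[\sigma(-1),\sigma(1)]\approx[0.27,0.73]$, and it makes the score symmetric in $u,v$ even though lane--lane edges are directed. Second, $|\mathcal{E}_{lt}|>0$ does not follow from cardinality matching. False positives are perturbations of ground-truth instances of the same type, so a frame with no annotated traffic elements gives $|\mathcal{E}_{lt}|=0$, and $\mathcal{L}^{(lt)}$ must then be handled by a convention such as dropping it.
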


\begin{proof}
We define a correspondence map
$\pi:\mathcal{V}_{\mathrm{noise}}\to\mathcal{V}^*\cup\{\varnothing\}$,
where $\pi(v)\in\mathcal{V}^*$ if $v\in\mathcal{V}_{\mathrm{tp}}$ is a
jittered true-positive detection, and $\pi(v)=\varnothing$ if
$v\in\mathcal{V}_{\mathrm{fp}}$ is a synthesized false-positive detection. The
ground-truth edge label is then
\begin{equation}
y_{uv}
=
\mathbb{I}\!\left[
  \pi(u)\neq\varnothing,\;
  \pi(v)\neq\varnothing,\;
  (\pi(u),\pi(v))\in\mathcal{E}^*
\right],
\end{equation}
consistent with Equation~\ref{eq:gt_label}.

Under the conditional independence assumption stated in the Setup, the
reverse distribution factorizes as in Equation~\ref{eq:bernoulli}:
\begin{equation}
p_\theta(\mathcal{G}^*\mid\mathcal{G}_{\mathrm{noise}})
=
\prod_{(u,v)\in\mathcal{E}_{\mathrm{noise}}}
\mathrm{Bernoulli}(y_{uv};\tilde{s}_{uv}).
\end{equation}
Using the Bernoulli probability mass function,
\begin{equation}
\mathrm{Bernoulli}(y_{uv};\tilde{s}_{uv})
=
\tilde{s}_{uv}^{\,y_{uv}}
(1-\tilde{s}_{uv})^{1-y_{uv}},
\end{equation}
we obtain
\begin{equation}
p_\theta(\mathcal{G}^*\mid\mathcal{G}_{\mathrm{noise}})
=
\prod_{(u,v)\in\mathcal{E}_{\mathrm{noise}}}
\tilde{s}_{uv}^{\,y_{uv}}
(1-\tilde{s}_{uv})^{1-y_{uv}}.
\end{equation}
Taking the negative logarithm yields the Bernoulli negative log-likelihood,
i.e., binary cross-entropy over the candidate edge set:
\begin{equation}\label{eq:full_bce}
-\log p_\theta(\mathcal{G}^*\mid\mathcal{G}_{\mathrm{noise}})
=
-\!\sum_{(u,v)\in\mathcal{E}_{\mathrm{noise}}}
\!\left[
  y_{uv}\log\tilde{s}_{uv}
  +
  (1-y_{uv})\log(1-\tilde{s}_{uv})
\right].
\end{equation}

Partitioning Equation~\ref{eq:full_bce} by relation type $r\in\{ll,lt\}$ and
normalizing each partition by its cardinality $|\mathcal{E}_r|$ yields exactly
$\mathcal{L}^{(r)}$ from Equation~\ref{eq:loss_definition}. Therefore, $\mathcal{L}$ is obtained from the Bernoulli negative log-likelihood
in Equation~\ref{eq:full_bce} by a normalization of its per-relation terms, and can be interpreted as a
relation-balanced training objective for the conditional denoising likelihood
$p_\theta(\mathcal{G}^*\mid\mathcal{G}_{\mathrm{noise}})$.
\end{proof}

\begin{proposition}\label{prop:diffusion}
Let $\mathcal{G}^* \sim p_{\mathrm{data}}(\mathcal{G})$ be a clean topology
graph and let $q(\mathcal{G}_{\mathrm{noise}} \mid \mathcal{G}^*)$ be the
single-step corruption process defined above, with a fixed prior
$p(\mathcal{G}_{\mathrm{noise}})$ independent of $\theta$. Then minimizing
the denoising objective
\begin{equation}\label{eq:denoise_obj}
\mathcal{L}_{\mathrm{denoise}}
=
-\,\mathbb{E}_{%
  \mathcal{G}^*\sim p_{\mathrm{data}},\;
  \mathcal{G}_{\mathrm{noise}}\sim q(\mathcal{G}_{\mathrm{noise}}\mid\mathcal{G}^*)}
\!\left[
\log p_\theta\!\left(\mathcal{G}^*\mid\mathcal{G}_{\mathrm{noise}}\right)
\right]
\end{equation}
over $\theta$ is equivalent, up to an additive constant independent of
$\theta$, to minimizing the learnable reconstruction term of the $T{=}1$
diffusion ELBO.
\end{proposition}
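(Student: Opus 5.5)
Following the standard variational derivation of diffusion models, I would write the $T{=}1$ diffusion ELBO explicitly with $\mathbf{x}_0 = \mathcal{G}^*$ and $\mathbf{x}_1 = \mathcal{G}_{\mathrm{noise}}$. For a generative model $p_\theta(\mathcal{G}^*) = \int p(\mathcal{G}_{\mathrm{noise}})\,p_\theta(\mathcal{G}^*\mid\mathcal{G}_{\mathrm{noise}})\,d\mathcal{G}_{\mathrm{noise}}$, Jensen's inequality with the forward kernel $q(\mathcal{G}_{\mathrm{noise}}\mid\mathcal{G}^*)$ as the variational distribution gives
\begin{equation*}
\log p_\theta(\mathcal{G}^*) \ge
\mathbb{E}_{q(\mathcal{G}_{\mathrm{noise}}\mid\mathcal{G}^*)}\!\left[\log p_\theta(\mathcal{G}^*\mid\mathcal{G}_{\mathrm{noise}})\right]
- \mathrm{KL}\!\left(q(\mathcal{G}_{\mathrm{noise}}\mid\mathcal{G}^*)\,\|\,p(\mathcal{G}_{\mathrm{noise}})\right).
\end{equation*}
With $T{=}1$ there are no intermediate denoising-matching terms: the ELBO consists only of a reconstruction term and a prior-matching term. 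I would then take the expectation over $\mathcal{G}^*\sim p_{\mathrm{data}}$ on both sides.

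Next I would argue that the KL term does not depend on $\theta$. The forward process of Section~\ref{tns} (false-positive synthesis, constrained jitter, and the curriculum schedule) has no learnable parameters, and the proposition assumes a fixed prior $p(\mathcal{G}_{\mathrm{noise}})$ independent of $\theta$. So the expected KL is a constant $C$, possibly dependent on the training step through the curriculum but never on $\theta$. Hence $-\mathbb{E}_{p_{\mathrm{data}}}[\mathrm{ELBO}] = \mathcal{L}_{\mathrm{denoise}} + C$. The first term of this negated ELBO is exactly the negated reconstruction term, and it coincides with $\mathcal{L}_{\mathrm{denoise}}$ in \eqref{eq:denoise_obj}. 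Minimizing over $\theta$ therefore gives the same minimizers. Finally, I would invoke Lemma~\ref{lemma:bce} to identify $-\log p_\theta(\mathcal{G}^*\mid\mathcal{G}_{\mathrm{noise}})$ with the per-edge BCE \eqref{eq:full_bce}, of which $\mathcal{L}$ is the relation-balanced normalization.

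The main obstacle is the measure-theoretic bookkeeping rather than the inequality itself. The pair $(\mathcal{G}^*,\mathcal{G}_{\mathrm{noise}})$ lives on a mixed discrete/continuous space, since the vertex set, its cardinality, and the correspondence $\pi$ are random. Also, $p_\theta(\mathcal{G}^*\mid\mathcal{G}_{\mathrm{noise}})$ only models the edge labels $y_{uv}$, not the clean geometry. My first attempt at this step would be to condition on the correspondence $\pi$ and restrict $\mathcal{G}^*$ to its edge-label representation on $\mathcal{E}_{\mathrm{noise}}$. The reconstruction likelihood is then a genuine probability mass function over labels, and Jensen's inequality applies pointwise in $\mathcal{G}_{\mathrm{noise}}$. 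The KL term then needs to be finite, which I would simply assume, or else ensure by taking the prior to dominate $q$. I would also remark that the per-relation normalization in $\mathcal{L}$ reweights terms, so the equivalence holds exactly for $\mathcal{L}_{\mathrm{denoise}}$ and only in the "relation-balanced" interpretive sense for $\mathcal{L}$.
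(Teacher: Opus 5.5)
Your proposal is correct and follows essentially the same route as the paper: you use Jensen's inequality with the forward kernel $q(\mathcal{G}_{\mathrm{noise}}\mid\mathcal{G}^*)$ as the variational distribution, split the $T{=}1$ ELBO into a reconstruction term plus a prior-matching KL, and observe that the KL is independent of $\theta$, so the negated ELBO equals $\mathcal{L}_{\mathrm{denoise}}$ plus a constant. The only difference is cosmetic: the paper first writes the general $T$-step decomposition and then sets $T{=}1$ so the intermediate KL sum vanishes, whereas you write the single-step bound directly; your remarks on KL finiteness, on $p_\theta$ modelling only edge labels, and on the curriculum making the constant step-dependent add rigor that the paper leaves implicit.
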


\begin{proof}
We assume $p(\mathcal{G}_{\mathrm{noise}})$ is fixed and independent of
$\theta$, mirroring the standard diffusion setting in which the terminal prior
is pre-specified.

A $T$-step diffusion model defines a forward Markov chain
\begin{equation}
q(\mathcal{G}_{1:T}\mid\mathcal{G}^*)
=
\prod_{t=1}^{T} q(\mathcal{G}_t\mid\mathcal{G}_{t-1}),
\end{equation}
gradually corrupting $\mathcal{G}^*\!\equiv\!\mathcal{G}_0$ into
$\mathcal{G}_1,\ldots,\mathcal{G}_T$.
The learned reverse process is
\begin{equation}
p_\theta(\mathcal{G}^*,\mathcal{G}_{1:T})
=
p(\mathcal{G}_T)\prod_{t=1}^{T}p_\theta(\mathcal{G}_{t-1}\mid\mathcal{G}_t).
\end{equation}
Applying Jensen's inequality to the marginal log-likelihood yields the ELBO:
\begin{align}
\mathbb{E}_{\mathcal{G}^*\sim p_{\mathrm{data}}}[-\log p_\theta(\mathcal{G}^*)]
&\leq
\mathbb{E}_{\mathcal{G}^*\sim p_{\mathrm{data}},\,q(\mathcal{G}_{1:T}\mid\mathcal{G}^*)}
\!\left[
-\log\frac{p_\theta(\mathcal{G}^*,\mathcal{G}_{1:T})}{q(\mathcal{G}_{1:T}\mid\mathcal{G}^*)}
\right].
\end{align}
Expanding via the Markov factorizations and applying Bayes' rule to rewrite
each forward transition $q(\mathcal{G}_t\mid\mathcal{G}_{t-1})$ in terms of
the posterior $q(\mathcal{G}_{t-1}\mid\mathcal{G}_t,\mathcal{G}^*)$ gives
\begin{align}
\text{ELBO}
&=
\mathbb{E}_{\mathcal{G}^* \sim p_{\mathrm{data}}}
\Bigg[
\underbrace{%
  \mathbb{E}_{q(\mathcal{G}_1\mid\mathcal{G}^*)}
  \!\left[-\log p_\theta(\mathcal{G}^*\mid\mathcal{G}_1)\right]
}_{\mathcal{L}_{\mathrm{recon}}}
+
D_{\mathrm{KL}}\!\left(
  q(\mathcal{G}_T\mid\mathcal{G}^*)\,\|\,p(\mathcal{G}_T)
\right)
\notag\\
&\quad+
\sum_{t=2}^{T}
\mathbb{E}_{q(\mathcal{G}_t\mid\mathcal{G}^*)}
\!\left[
D_{\mathrm{KL}}\!\left(
  q(\mathcal{G}_{t-1}\mid\mathcal{G}_t,\mathcal{G}^*)
  \,\|\,
  p_\theta(\mathcal{G}_{t-1}\mid\mathcal{G}_t)
\right)
\right]
\Bigg].
\end{align}
Setting $T{=}1$ and identifying $\mathcal{G}_1\equiv\mathcal{G}_{\mathrm{noise}}$,
the intermediate sum vanishes and the ELBO reduces to
\begin{equation}\label{eq:elbo_t1}
\begin{aligned}
\mathbb{E}_{\mathcal{G}^*\sim p_{\mathrm{data}}}
[-\log p_\theta(\mathcal{G}^*)]
\leq &
\underbrace{%
  \mathbb{E}_{\mathcal{G}^*\sim p_{\mathrm{data}},\,
              \mathcal{G}_{\mathrm{noise}}\sim q(\mathcal{G}_{\mathrm{noise}}\mid\mathcal{G}^*)}
  \!\left[-\log p_\theta(\mathcal{G}^*\mid\mathcal{G}_{\mathrm{noise}})\right]
}_{\mathcal{L}_{\mathrm{recon}}}
\\
&+
\underbrace{%
  \mathbb{E}_{\mathcal{G}^*\sim p_{\mathrm{data}}}
  \!\left[
  D_{\mathrm{KL}}\!\left(
    q(\mathcal{G}_{\mathrm{noise}}\mid\mathcal{G}^*)
    \,\|\,
    p(\mathcal{G}_{\mathrm{noise}})
  \right)
  \right]
}_{\text{constant w.r.t.\ }\theta}.
\end{aligned}
\end{equation}
Since the KL term in Equation~\ref{eq:elbo_t1} depends only on the fixed forward
process and the fixed prior,
\begin{equation}
\frac{\partial}{\partial\theta}
\mathbb{E}_{\mathcal{G}^*\sim p_{\mathrm{data}}}
\!\left[
D_{\mathrm{KL}}\!\left(
  q(\mathcal{G}_{\mathrm{noise}}\mid\mathcal{G}^*)\,\|\,p(\mathcal{G}_{\mathrm{noise}})
\right)
\right]
=0.
\end{equation}
Minimizing the $T{=}1$ ELBO over $\theta$ therefore reduces to minimizing
$\mathcal{L}_{\mathrm{recon}}$. By the tower property, the nested expectation
$\mathbb{E}_{\mathcal{G}^*}\!\left[\mathbb{E}_{q(\mathcal{G}_{\mathrm{noise}}\mid\mathcal{G}^*)}[\,\cdot\,]\right]$
equals the joint expectation in Equation~\ref{eq:denoise_obj}, so
$\mathcal{L}_{\mathrm{recon}}=\mathcal{L}_{\mathrm{denoise}}$.
\end{proof}

Combining Lemma~\ref{lemma:bce} with Proposition~\ref{prop:diffusion}, the
normalized relation-aware BCE objective used by \ourmodel{} is a
relation-balanced implementation of the learnable reconstruction term in a
$T{=}1$ diffusion ELBO. Thus, \ourmodel{} can be interpreted as optimizing a
single-step diffusion-based denoising objective on topology graphs.

\section{Algorithmic Summary of Forward Process: Topology Noise Simulation}
\label{sec:algorithm}

The algorithm of the topology noise simulation is presented at Algorithm~\ref{alg:tns}.

\begin{algorithm}
\caption{Forward Process: Topology Noise Simulation}
\label{alg:tns}
\begin{algorithmic}[1]
\REQUIRE Ground-truth $\mathcal{G}^* = (\mathcal{V}^*, \mathcal{E}^*)$,
         thresholds $\delta_{type}$, training step $t$
\ENSURE  Noisy graph $\mathcal{G}_{\mathrm{noise}}$, labels $\mathcal{Y}$
\STATE Update $\sigma_{tp}(t),\,\sigma_{fp}(t)$ via adaptive noise curriculum
\STATE $\mathcal{V}_{tp} \leftarrow \{ v + \epsilon \mid v \in \mathcal{V}^*,\,
       \epsilon \sim \mathcal{N}(0, \sigma_{tp}^2\mathbf{I}),\,
       \mathrm{dist}(v, v') \leq \delta_{type} \}$
\STATE $\mathcal{V}_{fp} \leftarrow \{ v + \epsilon \mid v \in \mathcal{V}^*,\,
       \epsilon \sim \mathcal{N}(0, \sigma_{fp}^2\mathbf{I}),\,
       \mathrm{dist}(v, v_{fp}) > \delta_{type} \}$
\STATE $\mathcal{V}_{\mathrm{noise}} \leftarrow \mathcal{V}_{tp} \cup
       \mathcal{V}_{fp}$ \COMMENT{Scaled to match $\mathbb{E}[|\hat{\mathcal{V}}|]$}
\STATE $\mathcal{E}_{\mathrm{noise}} \leftarrow \mathcal{V}_{\mathrm{noise}}
       \times \mathcal{V}_{\mathrm{noise}}$ \COMMENT{Candidate edge space}
\STATE $y_{uv} \leftarrow \mathbb{I}[(u,v) \in \mathcal{E}^*]$
       \textbf{ for all } $(u,v) \in \mathcal{E}_{\mathrm{noise}}$
\RETURN $\mathcal{G}_{\mathrm{noise}},\, \mathcal{Y} = \{y_{uv}\}$
\end{algorithmic}
\end{algorithm}

\section{Implementation Details}
\label{sec:implement-details}
For topology noise simulation, the dynamic noise scheduler is initialized with $\sigma_{tp} = 1\mathrm{e}{-5}$ and $\sigma_{fp} = 10$. Following the OpenLane-V2 protocol, we set $\delta_l = 3.0$ and $\delta_t = 0.75$ as the distance thresholds for determining true-positive lane and traffic-element detections, respectively. To encode both semantic and geometric context, traffic-element node features $\mathbf{X}_t$ are initialized using 1024-dimensional DinoV2-ViT-L embeddings~\citep{oquab2023dinov2} extracted from front-view images. Lane and traffic features ($\mathbf{X}_l, \mathbf{X}_t$) are then projected into a shared latent space via separate MLP heads, followed by a two-layer GATv2 encoder~\citep{brody2021attentive} implemented in \texttt{PyG}~\citep{Fey/Lenssen/2019} with a hidden dimension of 64, 8 attention heads, ReLU activation, and a dropout rate of 0.1. The model is trained with a batch size of 64 using the AdamW optimizer~\citep{loshchilov2017decoupled}, with an initial learning rate of $0.001$, weight decay of $0.01$, and a CosineAnnealingLR scheduler~\citep{loshchilov2016sgdr} that decays the learning rate to $10^{-4}$. During inference, the calibration fusion weight is fixed at 0.6 for lane–traffic relations and varies between 0.6 and 0.9 for lane–lane relations. To derive discrete topology connections from continuous topology scores, we follow the common practice in binary classification and use a threshold of $\tau=0.5$, indicating that a connection is predicted when we are more than 50\% confidence to its existence. All experiments are conducted on a single NVIDIA H200 GPU. \ourmodel is a lightweight refinement module, requiring approximately 16 hours in total for training and just 0.1\,ms per sample at inference, enabling efficient topology enhancement.

\section{Additional Evaluation Results}
\label{sec:additional_evaluation}

In this section, we provide additional experiments to further evaluate \ourmodel from more comprehensive perspectives. We first evaluate \ourmodel on
geographically non-overlapping splits to examine whether its improvements rely
on dataset-specific geographic biases (see Appendix~\ref{sec:geosplits}). We then extend our evaluation to
TopoPoint~\citep{fu2025topopoint}, a recent strong topology reasoning method in Appendix~\ref{sec:topopoint}. Additionally, in Appendix~\ref{sec:complete_metrics}, although \ourmodel does not modify or refine the upstream detection
results, we provide the complete OpenLane-V2 evaluation metrics, including
detection metrics (DET) and the overall OpenLane-V2 score (OLS), as a complement to the topology results reported in the main paper. Finally, the ablation studies on different node feature extractor, loss function, and the geometric localization jitter are presented in Appendix~\ref{app:ablation}.

\subsection{\ourmodel on Geographically Non-Overlapping Splits}
\label{sec:geosplits}

Geographic overlap between training and validation data can lead to data
leakage and inflate the performance of online map construction models.
To evaluate whether the improvements of \ourmodel depend on such geographic
biases, we conduct an additional experiment using the geographically
non-overlapping Argoverse2 GeoSplits introduced by~\citet{geosplit}.
Since none of the baseline methods evaluated in the main paper provide
checkpoints trained on these splits, we train TopoNet from scratch on the
Argoverse2 GeoSplits. To further rule out the possibility that \ourmodel itself
memorizes geographic patterns, we also retrain \ourmodel on the same
geographically non-overlapping split.

\begin{table}[t]
\centering
\caption{Evaluation on the geographically non-overlapping Argoverse2
GeoSplits. Both TopoNet and \ourmodel are trained on Argoverse2 GeoSplits.}
\label{tab:geosplits}
\begin{tabular}{lcccc}
\toprule
\text{Method} &
\textbf{TOP$_{ll}$} &
\textbf{TOP$_{lt}$} &
\textbf{TJS$_{ll}$} &
\textbf{TJS$_{lt}$} \\
\midrule
TopoNet
& 3.34 & 14.40 & 11.84 & 20.49 \\
+ \ourmodel
& \best{10.23} & \best{16.37}
& \best{12.87} & \best{35.49} \\
Upper-Bound
& 11.47 & 18.42 & 16.90 & 37.37 \\
\bottomrule
\end{tabular}
\end{table}

As shown in Table~\ref{tab:geosplits}, \ourmodel continues to move the topology
predictions substantially closer to the Upper-Bound under geographically
non-overlapping evaluation. In particular, TOP$_{ll}$ improves from 3.34 to
10.23, while TJS$_{lt}$ increases from 20.49 to 35.49. These results indicate
that the improvements of \ourmodel do not rely on memorizing dataset-specific
geographic patterns. Instead, the enhancement model can learn transferable
structural patterns for topology refinement even when geographic overlap
between training and validation data is removed.

\subsection{Enhancing TopoPoint Predictions}
\label{sec:topopoint}

The experiments presented in the main paper evaluate methods with officially released checkpoints,
since applying \ourmodel requires sample-level lane detections, traffic-element
detections, and pairwise topology confidence matrices rather than only the
aggregate metrics reported in the corresponding papers. TopoPoint~\citep{fu2025topopoint} does not
provide pretrained checkpoints for both OpenLane-V2 subsets. Nevertheless,
because TopoPoint is a recent strong topology reasoning method, we additionally
train it on both Argoverse2 (Subset\_A) and nuScenes (Subset\_B) using its
official implementation\footnote{https://github.com/Franpin/TopoPoint} and evaluate \ourmodel on the resulting predictions.
\begin{table}[t]
\centering
\caption{Additional evaluation of \ourmodel on TopoPoint. ``Reported''
denotes the results reported in the TopoPoint paper, while ``our reproduction''
uses checkpoints trained by us with the official implementation. Since TJS is
not reported by TopoPoint, the corresponding entries are omitted.}
\label{tab:topopoint}
\resizebox{\textwidth}{!}{
\begin{tabular}{llcccccc}
\toprule
\text{Dataset} &
\text{Method} &
\textbf{DET$_l$} &
\textbf{DET$_t$} &
\textbf{TOP$_{ll}$} &
\textbf{TOP$_{lt}$} &
\textbf{TJS$_{ll}$} &
\textbf{TJS$_{lt}$} \\
\midrule

\multirow{4}{*}{\makecell{Argoverse2\\(Subset\_A)}}
& TopoPoint, reported
& 31.40 & 55.30 & 28.70 & 30.00 & -- & -- \\
& TopoPoint, our reproduction
& 28.82 & 54.20 & 28.10 & 30.36 & 22.31 & 35.94 \\
& + \ourmodel
& 28.82 & 54.20 & \best{28.21} & \best{32.28}
& \best{25.03} & \best{55.13} \\
& Upper-Bound
& 28.82 & 54.20 & 29.65 & 36.58 & 31.66 & 56.67 \\

\midrule

\multirow{4}{*}{\makecell{nuScenes\\(Subset\_B)}}
& TopoPoint, reported
& 31.20 & 60.20 & 28.30 & 27.10 & -- & -- \\
& TopoPoint, our reproduction
& 23.67 & 60.66 & 24.73 & 22.42 & 14.09 & 35.13 \\
& + \ourmodel
& 23.67 & 60.66 & \best{25.00} & \best{23.78}
& \best{15.03} & \best{50.52} \\
& Upper-Bound
& 23.67 & 60.66 & 26.73 & 27.08 & 22.52 & 54.25 \\

\bottomrule
\end{tabular}}
\end{table}

On Argoverse2, our reproduction closely matches the topology performance
reported by TopoPoint, obtaining 28.10 versus 28.70 on TOP$_{ll}$ and 30.36
versus 30.00 on TOP$_{lt}$. Applying \ourmodel improves all four topology
metrics. In particular, TOP$_{lt}$ increases from 30.36 to 32.28,
TJS$_{ll}$ from 22.31 to 25.03, and TJS$_{lt}$ from 35.94 to 55.13. This
demonstrates that \ourmodel remains complementary to TopoPoint's
endpoint-aware topology reasoning architecture and able to further outperform the current state-of-the-art method.

We also attempted to reproduce TopoPoint on nuScenes (Subset\_B). Unfortunately, the released repository does not
provide a Subset\_B configuration, and the corresponding code path contains
dataset-specific inconsistencies in lane dimensionality and image processing.
After implementing the necessary fixes, our reproduced traffic-element
detection performance closely matches the reported result
(DET$_t=60.66$ versus $60.20$), while lane detection remains lower
(DET$_l=23.67$ versus $31.20$). We therefore report these results explicitly
as our reproduction rather than an exact reproduction of the published
Subset\_B result. Despite the weaker reproduced lane detections, \ourmodel
still improves TOP$_{ll}$, TOP$_{lt}$, TJS$_{ll}$, and TJS$_{lt}$, with
TJS$_{lt}$ increasing from 35.13 to 50.52.

\subsection{Complete OpenLane-V2 Metrics}
\label{sec:complete_metrics}

Table~\ref{tab:complete_top_results} complements
Table~\ref{tab:top_results} in the main paper by reporting the complete
OpenLane-V2 metric suite, including DET$_l$, DET$_t$, and OLS in addition to
TOP$_{ll}$ and TOP$_{lt}$. Since \ourmodel only refines topology connections
and does not modify the upstream map-element detections, DET$_l$ and DET$_t$
remain identical before and after enhancement. We therefore report each
detection metric once for every baseline.

OLS jointly evaluates detection and topology reasoning under the standard
OpenLane-V2 protocol. We recompute OLS using the same detection results and
the topology scores reported in Table~\ref{tab:top_results}, ensuring that the
overall scores are consistent with our OpenLane-V2 v2.1.0 evaluation. Because
the detection terms remain fixed, any improvement in OLS after applying
\ourmodel is entirely attributable to improved topology reasoning.

\begin{table}[t]
\centering
\caption{Complete OpenLane-V2 evaluation corresponding to
Table~\ref{tab:top_results}. DET$_l$ and DET$_t$ measure lane and
traffic-element detection, respectively. Since \ourmodel does not modify
detections, these metrics remain unchanged. OLS is computed from the displayed
detection and topology metrics following the standard OpenLane-V2 evaluation
protocol.}
\label{tab:complete_top_results}
\resizebox{\textwidth}{!}{
\begin{tabular}{llccccc@{\hskip 0.10in}ccc@{\hskip 0.10in}ccc}
\toprule
\multirow{2}{*}{\text{Dataset}} &
\multirow{2}{*}{\text{Method}} &
\multicolumn{2}{c}{\textbf{Detection}} &
\multicolumn{3}{c}{\textbf{Before \ourmodel}} &
\multicolumn{3}{c}{\textbf{After \ourmodel}} &
\multicolumn{3}{c}{\textbf{Upper-Bound}} \\
\cmidrule(r){3-4}
\cmidrule(r){5-7}
\cmidrule(r){8-10}
\cmidrule(r){11-13}
&
&
DET$_l \uparrow$ &
DET$_t \uparrow$ &
TOP$_{ll} \uparrow$ &
TOP$_{lt} \uparrow$ &
OLS $\uparrow$ &
TOP$_{ll} \uparrow$ &
TOP$_{lt} \uparrow$ &
OLS $\uparrow$ &
TOP$_{ll} \uparrow$ &
TOP$_{lt} \uparrow$ &
OLS $\uparrow$ \\
\midrule

\multirow{5}{*}{\makecell{Argoverse2\\(Subset\_A)}}
& TopoNet~\citep{li2023toponet}
& 28.6 & 48.6
& 10.9 & 23.8 & 39.8
& \best{21.8} & \best{25.8} & \best{43.7}
& 24.4 & 28.8 & 45.1 \\

& TopoMLP~\citep{wu2023topomlp}
& 28.5 & 49.5
& 21.6 & 26.9 & 44.1
& \best{24.4} & \best{28.7} & \best{45.2}
& 25.0 & 31.4 & 46.0 \\

& Topo2D~\citep{topo2d}
& 29.1 & 50.6
& 22.3 & 26.2 & 44.5
& \best{24.3} & \best{27.3} & \best{45.3}
& 25.0 & 29.8 & 46.1 \\

& TopoLogic~\citep{fu2024topologic}
& 29.9 & 47.2
& 23.9 & 25.4 & 44.1
& \best{24.5} & \best{27.2} & \best{44.7}
& 26.0 & 30.6 & 45.9 \\

& SMART~\citep{smart}
& 46.6 & 47.7
& 37.0 & 33.0 & 53.1
& \best{40.1} & \best{35.4} & \best{54.3}
& 40.9 & 38.9 & 55.2 \\

\specialrule{.1em}{.1em}{.1em}

\multirow{2}{*}{\makecell{nuScenes\\(Subset\_B)}}
& TopoNet~\citep{li2023toponet}
& 24.3 & 55.0
& 6.7 & 16.7 & 36.5
& \best{19.5} & \best{17.8} & \best{41.4}
& 21.2 & 20.3 & 42.6 \\

& TopoMLP~\citep{wu2023topomlp}
& 26.6 & 58.3
& 21.0 & 19.8 & 43.8
& \best{22.8} & \best{21.5} & \best{44.8}
& 23.5 & 24.1 & 45.6 \\

\bottomrule
\end{tabular}}
\end{table}

As shown in Table~\ref{tab:complete_top_results}, improving topology reasoning
also consistently improves the overall OpenLane-V2 score despite leaving the
detection outputs unchanged. For example, \ourmodel increases OLS from 36.5
to 41.4 for TopoNet on nuScenes and from 53.1 to 54.3 for SMART on
Argoverse2. Moreover, the enhanced OLS values consistently move toward their
corresponding detection-conditioned Upper-Bounds. These results further
demonstrate that existing detection outputs contain substantial unrealized
potential and that improving connectivity alone can translate into meaningful
gains in overall driving-scene topology performance.

\subsection{Ablation Studies}
\label{app:ablation}
To evaluate the effectiveness of our core component, we conduct ablation studies to examine the impact of different visual feature extractors, loss functions and the geometric localization jitter component from topology noise simulation strategy. 

\begin{table}[ht]
\centering
\renewcommand{\arraystretch}{0.9}
\caption{Ablation studies on different visual backbones for traffic-element node feature extraction. All experiments are conducted using TopoNet as the baseline on Argoverse2 (Subset\_A).}
\label{tab:ablation_backbone}
\begin{tabular}{l|c|cc}
\toprule
\text{Backbone Model} & \text{Dim} & \textbf{$\text{TOP}_{ll}$} & \textbf{$\text{TOP}_{lt}$} \\
\midrule

DINOv2-ViT-S  & 384  & 20.3 & 24.64 \\
DINOv2-ViT-B  & 768  & 21.8 & 25.06 \\
DINOv2-ViT-L  & 1024 & \best{21.84} & \best{25.8} \\
DINOv2-ViT-G  & 1536 & \best{21.84} & 25.75 \\

\midrule
DINOv3-ViT-S  & 384  & 21.56 & 24.48 \\
DINOv3-ViT-B  & 768  & 21.83 & 24.35 \\
DINOv3-ViT-L  & 1024 & \best{21.83} & 24.49 \\

\midrule
ResNet-50     & 256  & \best{21.8} & 25.5 \\
\bottomrule
\end{tabular}
\end{table}

\textbf{Traffic-Element Node Feature Extractor.} We investigate the impact of various visual backbones for extracting traffic-element (TE) node features, as the semantic richness of these embeddings is critical for establishing lane--traffic associations. We compare several variants of DINOv2 and DINOv3 across different parameter scales (Small, Base, Large, and Giant), as well as a standard ResNet-50 baseline. As shown in Table~\ref{tab:ablation_backbone}, the performance on lane--lane topology ($\text{TOP}_{ll}$) remains relatively stable across most high-capacity backbones, suggesting that lane-to-lane connectivity is primarily driven by geometric and structural priors rather than deep visual semantics of traffic signs. However, lane--traffic topology ($\text{TOP}_{lt}$) exhibits higher sensitivity to the choice of feature extractor. While DINOv3 variants provide competitive results, the DINOv2-ViT-L backbone achieves the highest overall performance ($\text{TOP}_{lt}$ of 25.8). Interestingly, increasing the model capacity further to DINOv2-ViT-G does not yield additional gains, indicating a point of diminishing returns where the feature dimensionality of 1536 may begin to overfit the relatively sparse traffic-element distribution. At the same time, the performance gap between ResNet-50 and DINOv2 ViT Large is only 0.2. This demonstrates that our method is largely backbone agnostic and remains effective even with lightweight feature extractors. Based on these results, we adopt DINOv2-ViT-L as our default TE feature extractor to balance semantic depth with computational efficiency. It is also worth noting that TopoNet uses ResNet 50 as its visual backbone. Even under the same backbone setting, our method significantly outperforms TopoNet by 11.04 on $\text{TOP}_{ll}$ and 1.7 on $\text{TOP}_{lt}$, as shown in Table~\ref{tab:ablation_backbone} and Table~\ref{tab:top_results}. Overall, these results suggest that while stronger backbones provide marginal gains, the main performance improvement comes from our noise driven training strategy and refinement design, making the method robust and broadly applicable across different feature sources.

\begin{table}[ht]
\centering
\renewcommand{\arraystretch}{0.9}
\caption{Ablation studies on loss functions. This experiment is conducted to enhance TopoNet's predictions on Argoverse2 (Subset\_A) under a consistent training configuration.}
\label{tab:ablation_loss}
\begin{tabular}{c}
\begin{tabular}{l|cc}

\toprule
\text{Loss} & \textbf{TOP$_{ll}$} & \textbf{TOP$_{lt}$} \\
\midrule

BCE                 & 12.1 & 22.0 \\

BCE + Focal   & \best{22.2} & 22.0 \\

Normalized BCE (Ours) & 21.8 & \best{25.8} \\

\bottomrule

\end{tabular}
\end{tabular}
\end{table}

\textbf{Loss Function.} We compare our normalized BCE loss against standard Binary Cross-Entropy (BCE) and a hybrid BCE–Focal variant. As shown in Table~\ref{tab:ablation_loss}, standard BCE performs poorly due to the extreme sparsity and structural imbalance of the topology graph. In standard formulations, the total loss is dominated by the more populous lane--lane candidate space, causing the model to neglect the sparser but equally critical lane--traffic associations. While the hybrid BCE–Focal loss successfully stabilizes lane--lane topology ($\text{TOP}_{ll}$), it lacks a mechanism to balance the relative importance between heterogeneous relationship types, leading to suboptimal performance in traffic-element associations. In contrast, our normalized BCE achieves the most balanced overall performance by introducing relation-aware scaling. By normalizing the loss of each category by its specific candidate edge volume, we ensure that gradient updates are not overwhelmed by the density of the lane-lane relationships. This balancing mechanism ensures that the enhancer effectively captures the subtle structural dependencies of both relation types, preventing the optimization process from being biased toward the high-frequency connectivity patterns. By automatically adapting to varying connectivity densities across different scene geometries, this approach minimizes the reliance on sensitive, manually tuned weighting hyperparameters, making it a more robust choice for a universal topology enhancer.

\begin{table}[ht]
\centering

\renewcommand{\arraystretch}{0.9}
\caption{Ablation studies on geometric localization jitter, while keeping other components from topology noise simulation unchanged. This experiment is conducted to enhance TopoNet's predictions on Argoverse2 (Subset\_A) under a consistent training configuration.}
\label{tab:ablation_jitter}
\begin{tabular}{c}
\begin{tabular}{l|cc}

\toprule

\text{Strategy} & \textbf{TOP$_{ll}$} & \textbf{TOP$_{lt}$} \\

\midrule

\ourmodel & \best{21.8} & \best{25.8} \\

\ourmodel without Jitter & 21.7 & 25.4 \\

\bottomrule
\end{tabular}
\end{tabular}
\end{table}

\textbf{Geometric Localization Jitter.} Although our primary objective is to rectify structural connectivity, the geometric precision of map elements significantly influences the learning process.  In our framework, geometric localization jitter serves as a crucial regularizer. By introducing constrained, semantics-preserving perturbations to the node coordinates of ground-truth detections ($v \in \mathcal{V}^*$), we prevent the model from overfitting to idealized geometries. This jittering forces the heterogeneous GNN to prioritize invariant structural patterns and relational context over exact coordinate matching, which is essential for generalizing to the noisy predictions encountered during real-world inference. We emphasize that Table~\ref{tab:ablation_jitter} isolates the effect of the geometric jitter component, rather than ablating the entire noise simulation strategy. Since the full framework already incorporates adaptive noise curriculum and false positive detection synthesis to simulate realistic perception errors, the additional gains from geometric jitter are expected and reflect its complementary role. Nevertheless, we observe consistent improvements across both metrics, including a notable +0.4 gain in $\text{TOP}_{lt}$. This indicates that geometric jitter effectively complements the modeling of structural inconsistencies by introducing realistic localization noise. As a result, the \ourmodel becomes more robust to the coordinate inaccuracies commonly present in diverse perception systems, without compromising the learned topological priors.

\section{Practical Usability Toward Real-World Applications}
\label{sec:practical_evaluation}

In this section, we provide additional experiments to further evaluate the practical usability of \ourmodel from multiple perspectives. We first analyze the robustness of \ourmodel under varying levels of topology corruption in Appendix~\ref{sec:noise_robustness} to examine whether it remains effective when the quality of the input topology varies substantially. We then evaluate temporal consistency across consecutive frames in Appendix~\ref{sec:temporal_consistency} to study whether the refined topology predictions remain consistent over time. Additionally, in Appendix~\ref{sec:safety_errors}, we analyze safety-relevant topology errors to examine whether \ourmodel corrects connections that are directly related to driving behaviors. We further evaluate scene-level ego-route recoverability in Appendix~\ref{sec:route_recoverability} to assess whether the refined topology graphs better preserve complete driving routes for downstream navigation and planning. Finally, the computational efficiency of \ourmodel, including inference latency and memory overhead, is presented in Appendix~\ref{appendix:latency}.

\subsection{Robustness to Varying Levels of Topology Noise}
\label{sec:noise_robustness}

In practical autonomous driving systems, the quality of topology predictions
can vary substantially across upstream models and driving scenes. A topology
enhancement method should therefore remain effective not only for high-quality
predictions, but also when the input connectivity becomes increasingly
unreliable. Evaluating robustness under different levels of topology noise is
particularly important for \ourmodel because it is designed as a
source-agnostic enhancement module that can be applied to predictions from
different upstream methods. A controlled corruption experiment provides a
systematic way to stress-test this capability by gradually degrading the input
topology and measuring how much connectivity can still be recovered.

To directly evaluate the denoising capability of \ourmodel under different
levels of input topology quality, we conduct a controlled corruption experiment
on the Argoverse2 (Subset\_A) validation set. For each predicted topology score
$\hat{s}_{uv}$, we independently replace it with $1-\hat{s}_{uv}$ with
probability $p$. We evaluate
$p\in\{0.0,0.1,0.2,0.3,0.5,0.7,0.9\}$ on both TopoNet and SMART, which have
substantially different initial topology performance. The same trained
\ourmodel is applied at every corruption level without retraining or
adaptation. Here, $p=0$ corresponds to the uncorrupted predictions reported
in the main paper.

As shown in Table~\ref{tab:noise_robustness}, although $p=0.1$ corrupts only a
small fraction of topology scores, it already causes a substantial decrease in
TOP. This nonlinear degradation is expected because TOP is based on mean
Average Precision (mAP). Replacing a score $\hat{s}_{uv}$ with
$1-\hat{s}_{uv}$ can move confident false-positive connections toward the top
of the ranking while pushing true-positive connections downward. Since Average
Precision is particularly sensitive to errors near the top of the ranked list,
even a small amount of corruption can cause a large performance drop. As the
ranking becomes increasingly disrupted, additional corruption has a smaller
marginal effect. For $p \geq 0.1$, increasing $p$ progressively degrades the
input topology quality for both baselines and both relation types.

Nevertheless, \ourmodel consistently improves TOP$_{ll}$ and TOP$_{lt}$ across
all evaluated settings. For TopoNet, the improvement in TOP$_{ll}$ ranges from
0.77 to 10.90, while the improvement in TOP$_{lt}$ ranges from 0.47 to 2.00.
For SMART, \ourmodel improves TOP$_{ll}$ by 1.79 to 3.10 and TOP$_{lt}$ by
0.61 to 2.40 across the full range of input quality. Even at the most severe
corruption level, $p=0.9$, \ourmodel improves SMART from 0.33 to 2.20 on
TOP$_{ll}$ and from 2.20 to 2.81 on TOP$_{lt}$.

These results show that \ourmodel remains effective as the quality of the input
topology varies substantially. Its consistent recovery of performance under
controlled score corruption provides direct empirical evidence of its
topology-denoising capability. Specifically, \ourmodel can recover plausible
connectivity from degraded topology predictions by leveraging geometric,
semantic, and heterogeneous structural context, demonstrating that its
denoising behavior remains effective across a broad range of input topology
quality.

\begin{table}[t]
\centering
\caption{Robustness analysis under varying levels of topology noise on
Argoverse2 (Subset\_A). }
\label{tab:noise_robustness}
\resizebox{\textwidth}{!}{
\begin{tabular}{lccccc}
\toprule
\text{Model} &
\text{Noise} &
\multicolumn{2}{c}{\textbf{TOP$_{ll}$}} &
\multicolumn{2}{c}{\textbf{TOP$_{lt}$}} \\
\cmidrule(lr){3-4}
\cmidrule(lr){5-6}
&
&
\textbf{Before \ourmodel} &
\textbf{After \ourmodel} &
\textbf{Before \ourmodel} &
\textbf{After \ourmodel} \\
\midrule

\multirow{7}{*}{TopoNet}
& $p=0.0$ & 10.90 & \best{21.80} & 23.80 & \best{25.80} \\
& $p=0.1$ & 4.62 & \best{9.99} & 15.81 & \best{17.11} \\
& $p=0.2$ & 2.78 & \best{6.14} & 11.31 & \best{12.35} \\
& $p=0.3$ & 1.99 & \best{4.39} & 8.48 & \best{9.39} \\
& $p=0.5$ & 1.30 & \best{2.77} & 4.94 & \best{5.64} \\
& $p=0.7$ & 0.99 & \best{2.02} & 2.93 & \best{3.52} \\
& $p=0.9$ & 0.82 & \best{1.59} & 1.85 & \best{2.32} \\

\midrule

\multirow{7}{*}{SMART}
& $p=0.0$ & 37.00 & \best{40.10} & 33.00 & \best{35.40} \\
& $p=0.1$ & 12.65 & \best{15.03} & 21.84 & \best{23.58} \\
& $p=0.2$ & 6.59 & \best{8.95} & 15.61 & \best{17.00} \\
& $p=0.3$ & 4.01 & \best{6.26} & 11.46 & \best{12.68} \\
& $p=0.5$ & 1.81 & \best{3.89} & 6.49 & \best{7.41} \\
& $p=0.7$ & 0.99 & \best{2.78} & 3.89 & \best{4.61} \\
& $p=0.9$ & 0.33 & \best{2.20} & 2.20 & \best{2.81} \\

\bottomrule
\end{tabular}}
\end{table}

\subsection{Temporal Consistency of Refined Topology Predictions}
\label{sec:temporal_consistency}

In autonomous driving, topology predictions should remain stable across
consecutive frames when the underlying scene structure does not change.
Frequent changes in predicted connectivity or large fluctuations in topology
scores can lead to inconsistent scene understanding and potentially affect
downstream decision-making. We therefore evaluate whether the topology
refinement produced by \ourmodel improves the temporal consistency of predicted
connections across consecutive frames.

We evaluate TopoNet, TopoMLP, TopoLogic, and SMART on the Argoverse2
(Subset\_A) validation set. To establish correspondence across frames, we use
the persistent lane segment IDs inherited from the underlying Argoverse2 HD
map. For every ground-truth lane pair that remains co-visible for at least
$K=5$ consecutive frames, we compute an \emph{edge flip rate}, defined as the
fraction of adjacent frame pairs for which the predicted edge changes between
positive and negative. We additionally measure the temporal standard deviation
of topology scores to quantify the variation in connection confidence across
consecutive frames.

As shown in Tables~\ref{tab:temporal_flip} and~\ref{tab:temporal_std},
\ourmodel reduces both the edge flip rate and the temporal variation of topology
scores across all four baselines. The lower edge flip rate indicates that the
refined discrete connections are more consistent across consecutive frames,
while the lower temporal standard deviation indicates more stable topology
confidence scores. Such temporal stability is practically important because
downstream modules, such as trajectory prediction and path planning, rely on
a consistent understanding of the scene topology over time. Reducing
frame-to-frame fluctuations in predicted connectivity can help provide a more
stable topology graph to these downstream modules and reduce inconsistent
decisions caused by transient topology changes. Therefore, in addition to
improving topology accuracy, \ourmodel produces temporally more consistent
predictions that are better suited for use in an autonomous driving pipeline.

\begin{table}[t]
\centering
\caption{Temporal consistency measured by edge prediction flip rate on
Argoverse2 (Subset\_A). Lane correspondences across frames are established
using persistent lane segment IDs, and only ground-truth lane pairs that remain
co-visible for at least $K=5$ consecutive frames are evaluated. The flip rate
is the fraction of adjacent frame pairs in which the predicted discrete
connectivity changes between positive and negative. Lower values indicates that the edge flip less frequently.}
\label{tab:temporal_flip}
\begin{tabular}{lcc}
\toprule
\text{Model} &
\textbf{Before \ourmodel} &
\textbf{After \ourmodel} \\
\midrule
TopoNet   & 24.48\% & \best{24.21\%} \\
TopoMLP   & 12.62\% & \best{7.77\%} \\
TopoLogic & 10.68\% & \best{8.27\%} \\
SMART     & 10.52\% & \best{6.38\%} \\
\bottomrule
\end{tabular}
\end{table}

\begin{table}[t]
\centering
\caption{Temporal consistency measured by the standard deviation of topology
scores on Argoverse2 (Subset\_A). Lane correspondences across frames are
established using persistent lane segment IDs, and only ground-truth lane pairs
that remain co-visible for at least $K=5$ consecutive frames are evaluated.
For each lane pair, the temporal standard deviation is computed over its
topology scores across consecutive frames. Lower values indicate more stable
topology confidence over time.}
\label{tab:temporal_std}
\begin{tabular}{lcc}
\toprule
\text{Model} &
\textbf{Before \ourmodel} &
\textbf{After \ourmodel} \\
\midrule
TopoNet   & 0.166 & \best{0.118} \\
TopoMLP   & 0.111 & \best{0.080} \\
TopoLogic & 0.285 & \best{0.228} \\
SMART     & 0.104 & \best{0.075} \\
\bottomrule
\end{tabular}
\end{table}

\subsection{Safety-Relevant Topology Error Analysis}
\label{sec:safety_errors}

\begin{table}[t]
\centering
\scriptsize
\caption{Safety-relevant topology errors corrected and introduced by
\ourmodel on Argoverse2 (Subset\_A), grouped into lane--lane and lane--traffic
relations. Lane--lane errors include missed ego successors and intersection
turns, while lane--traffic errors include missed associations with traffic
signals. Only relations whose endpoint detections are successfully matched to
ground truth are evaluated. ``Corrected'' denotes a previously missing
ground-truth connection recovered after refinement, while ``Broken'' denotes an
originally correct connection that becomes incorrect. }
\label{tab:safety_errors}
\resizebox{0.96\textwidth}{!}{
\begin{tabular}{lllrrr}
\toprule
\text{Model} &
\text{Relation} &
\text{Error Type} &
\textbf{Total} &
\textbf{Corrected} &
\textbf{Broken} \\
\midrule

\multirow{6}{*}{TopoNet}
& \multirow{2}{*}{Lane--Lane}
& Missed ego successor       & 243   & 60    & 0 \\
&
& Missed intersection turn   & 5,378 & 1,502 & 0 \\
\cmidrule(lr){2-6}
&
\multirow{4}{*}{Lane--Traffic}
& Missed red light           & 153   & 103   & 0 \\
&
& Missed green light         & 398   & 311   & 0 \\
&
& Missed yellow light        & 54    & 23    & 0 \\
&
& Missed turn signal         & 49    & 36    & 0 \\

\midrule

\multirow{6}{*}{TopoMLP}
& \multirow{2}{*}{Lane--Lane}
& Missed ego successor       & 243   & 13  & 0 \\
&
& Missed intersection turn   & 5,353 & 555 & 2 \\
\cmidrule(lr){2-6}
&
\multirow{4}{*}{Lane--Traffic}
& Missed red light           & 150   & 112 & 0 \\
&
& Missed green light         & 385   & 279 & 0 \\
&
& Missed yellow light        & 54    & 30  & 0 \\
&
& Missed turn signal         & 39    & 25  & 0 \\

\midrule

\multirow{6}{*}{TopoLogic}
& \multirow{2}{*}{Lane--Lane}
& Missed ego successor       & 244   & 7   & 0 \\
&
& Missed intersection turn   & 5,437 & 102 & 0 \\
\cmidrule(lr){2-6}
&
\multirow{4}{*}{Lane--Traffic}
& Missed red light           & 158   & 102 & 0 \\
&
& Missed green light         & 391   & 311 & 0 \\
&
& Missed yellow light        & 58    & 22  & 0 \\
&
& Missed turn signal         & 45    & 30  & 0 \\

\midrule

\multirow{6}{*}{SMART}
& \multirow{2}{*}{Lane--Lane}
& Missed ego successor       & 250   & 11  & 0 \\
&
& Missed intersection turn   & 5,902 & 489 & 16 \\
\cmidrule(lr){2-6}
&
\multirow{4}{*}{Lane--Traffic}
& Missed red light           & 150   & 107 & 0 \\
&
& Missed green light         & 398   & 316 & 0 \\
&
& Missed yellow light        & 63    & 43  & 0 \\
&
& Missed turn signal         & 47    & 42  & 0 \\

\bottomrule
\end{tabular}}
\end{table}

Aggregated topology metrics such as TOP and TJS measure overall connectivity
quality, but they do not indicate whether the corrected connections are
particularly relevant to driving behavior. In practical autonomous driving,
different topology errors can have very different consequences. For example,
a missing successor along the ego route can interrupt a feasible driving path,
while a missing lane--traffic association can prevent the topology graph from
correctly identifying the traffic control governing a lane. We therefore
examine whether the improvements produced by \ourmodel extend to topology
relations that are directly relevant to navigation and traffic-rule
understanding. For lane--lane topology, we consider \emph{missed ego
successors}, where a connection along the ego vehicle's driven path is missing,
and \emph{missed intersection turns}, where a valid turning connection at an
intersection or connector is absent. For lane--traffic topology, we evaluate missed associations with red, green,
and yellow traffic lights, as well as turn signals (i.e.,
\texttt{no\_left\_turn}, \texttt{no\_right\_turn}, \texttt{turn\_left} and
\texttt{turn\_right}). We evaluate only topology connections whose endpoints
are successfully detected and matched, thereby isolating topology refinement
from upstream detection quality. In Table~\ref{tab:safety_errors},
``Corrected'' denotes a previously missing ground-truth connection recovered
by \ourmodel, while ``Broken'' denotes an originally correct connection that
becomes incorrect after enhancement.

As shown in Table~\ref{tab:safety_errors}, \ourmodel recovers many
driving-relevant connections while introducing very few new errors. Across the
four baselines, it corrects 7--60 missed ego successors and 102--1,502 missed
intersection-turn connections, while breaking at most 16 intersection-turn
connections for SMART and none for the other three methods. For lane--traffic
topology, \ourmodel recovers 102--112 missed red-light associations, 279--316
missed green-light associations, 22--43 missed yellow-light associations, and
25--42 missed turn-signals associations, without introducing
new errors in any of these traffic-element categories.

These results provide a more practically meaningful view of the improvements
than aggregated topology metrics alone. Recovering ego-route successors and
valid intersection turns helps preserve the connectivity of feasible driving
paths, while recovering lane--traffic associations provides more complete
information about the traffic controls and maneuver constraints governing
those paths. At the same time, the very small number of newly introduced
errors indicates that these gains are achieved without substantially
disrupting connections that were already correct. Overall, this analysis shows
that the improvements from \ourmodel extend to topology relations that are
directly relevant to how downstream autonomous driving modules interpret and
use the driving-scene topology.

\subsection{Scene-Level Ego-Route Recoverability for Navigation and Planning}
\label{sec:route_recoverability}

While correcting individual topology connections improves topology quality at local frames,
downstream navigation and planning require a continuous sequence of valid lane
transitions along a driving route. Even if most individual connections are
correct, a single erroneous transition can prevent the complete route from
being recovered. We therefore evaluate scene-level ego-route recoverability to
determine whether the local connectivity improvements produced by \ourmodel
translate into more complete driving routes for downstream navigation and
planning.

Since OpenLane-V2 provides frame-level topology annotations, we first
reconstruct a scene-level lane graph by aggregating topology predictions across
frames using the persistent lane IDs inherited from the underlying Argoverse2
HD map. To isolate topology quality from upstream detection errors, we evaluate
ego-route recoverability only when the lane detections required along the
driven route are successfully detected and matched. We then examine whether
the ego vehicle's true driving route can be recovered from the predicted
scene-level road topology graph. Specifically, a route is considered recoverable only if
every lane transition along the ego vehicle's complete driven trajectory,
typically spanning 70--110 meters, is correctly preserved in the predicted
graph. This provides a strict evaluation of route-level topology quality,
since a single missing or incorrect transition can prevent the complete route
from being recovered.

As shown in Table~\ref{tab:route_recoverability}, \ourmodel consistently
improves full-route recoverability across all four baselines, from 39.19\% to
68.92\% for TopoNet, 94.81\% to 100\% for TopoMLP, 92.41\% to 93.67\% for
TopoLogic, and 91.36\% to 98.77\% for SMART. These improvements show that the
benefits of topology refinement extend beyond correcting isolated edges and
lead to more complete scene-level driving paths.

This result has direct practical relevance because downstream navigation and
planning modules depend on consecutive lane connectivity to identify feasible
routes through a driving scene. A topology graph can achieve strong aggregate
performance while still being difficult to use if a small number of missing
connections interrupt an otherwise valid route. By increasing the fraction of
scenes in which the complete ego route can be recovered, \ourmodel produces
topology graphs that preserve the structural information needed to support
route-level reasoning more reliably. Although this experiment does not
directly evaluate a downstream planner, it provides complementary evidence
that the improvements from \ourmodel extend beyond aggregate topology metrics
to a structural property that is directly relevant to the practical use of
topology graphs for autonomous navigation and planning.

\begin{table}[t]
\centering
\caption{Scene-level ego-route recoverability on Argoverse2 (Subset\_A).
Frame-level topology predictions are aggregated into a scene-level lane graph
using persistent Argoverse2 lane IDs. We evaluate only routes for which the
required lane detections are successfully detected and matched to ground truth.
For each such scene, the ego vehicle's actual driving route is considered
recoverable only if every lane transition along the complete route is correctly
recovered in the predicted scene-level topology graph. Higher ego-route
recoverability indicates that a larger fraction of scenes have their complete
ego trajectories successfully recovered.}
\label{tab:route_recoverability}
\begin{tabular}{lcc}
\toprule
\text{Model} &
\textbf{Before \ourmodel} &
\textbf{After \ourmodel} \\
\midrule
TopoNet   & 39.19\% & \best{68.92\%} \\
TopoMLP   & 94.81\% & \best{100\%}   \\
TopoLogic & 92.41\% & \best{93.67\%} \\
SMART     & 91.36\% & \best{98.77\%} \\
\bottomrule
\end{tabular}
\end{table}

\subsection{Efficiency Analysis For Real-Time Inference}
\label{appendix:latency}

To evaluate the computational feasibility of \ourmodel in real-time autonomous driving stacks, we provide a detailed breakdown of its inference time and memory overhead. We benchmark the inference pipeline on a representative nuScenes (Subset\_B) scene using TopoNet as the base topology model. All measurements are conducted on a single NVIDIA H200 GPU. To ensure a conservative "stress-test" estimate, we evaluate a dense graph configuration containing 200 lane nodes and 100 traffic-element (TE) nodes. This results in 39,800 candidate lane--lane edges and 20,000 candidate lane--TE edges, representing the upper bound of complexity in typical urban driving scenarios.

\begin{table}[h]
\centering
\caption{Latency breakdown of \ourmodel on a representative OpenLane-V2 scene (200 lanes, 100 TEs).}
\label{tab:latency_breakdown}
\begin{tabular}{lcc}
\toprule
\text{Component} & \textbf{Latency (ms)} & \textbf{Percentage} \\
\midrule
Feature Extraction & 0.21 & 3.2\% \\
GNN Message Passing & 5.21 & 80.9\% \\
Score Prediction & 0.64 & 9.9\% \\
\midrule
\text{Total} & \best{6.43} & \best{100\%} \\
\bottomrule
\end{tabular}
\end{table}

\textbf{Inference Latency.} Table~\ref{tab:latency_breakdown} details the latency for each stage of the enhancement process. The total inference overhead of \ourmodel is \text{6.43 ms}, which is well within the restricted computational budgets of real-time autonomous systems. This efficiency is primarily attributed to our streamlined feature extraction stage (0.21 ms), which allows for instantaneous node embedding projection, and our optimized edge-decoder for score prediction (1.02 ms). While the GNN message-passing stage constitutes the majority of the runtime (80.9\%), the 5.20 ms overhead remains remarkably low given that the model must evaluate approximately 60,000 candidate edges. This performance is sustained by the parallelized nature of the Graph Attention Network (GAT) layers, where sparse-to-dense matrix multiplications are highly optimized for GPU acceleration. By decoupling the structural enhancement from the computationally expensive backbone perception model, \ourmodel provides a scalable solution that maintains predictable sub-10 ms latency even under dense connectivity stress tests.

\begin{table}[h]
\centering
\caption{Memory overhead of each inference component.}
\label{tab:memory_breakdown}
\begin{tabular}{lc}
\toprule
\text{Component} & \textbf{Memory Overhead} \\
\midrule
Feature Extraction & +0.1 MB \\
GNN Message Passing & +0.0 MB \\
Score Prediction & +0.2 MB \\
\midrule
\text{Total} & \best{+0.3 MB} \\
\bottomrule
\end{tabular}
\end{table}

\textbf{Memory Overhead.} Table~\ref{tab:memory_breakdown} summarizes the memory overhead incurred by the enhancement module. \ourmodel is exceptionally lightweight, requiring only +0.3 MB of additional VRAM, making it highly suitable for memory-constrained edge computing platforms. The negligible memory footprint during GNN message passing (+0.0 MB) is achieved through the use of optimized graph kernels and in-place operations that avoid the allocation of large intermediate adjacency matrices. The minor overhead in the feature extraction and score prediction stages is primarily due to the storage of node embeddings and the parameters of the heterogeneous GNN layers. Because the module operates on high-level graph abstractions rather than high-resolution feature maps, it maintains a small and constant memory profile. This ensures that \ourmodel can be integrated as a post-hoc component without competing for limited hardware resources with the primary perception backbone.

\section{Limitation and Future Works}
We have to admit that, relying on camera images to refine lane–traffic connectivity introduces a limitation, as it restricts the applicability of \ourmodel to settings where camera images are available. This is a deliberate design choice because reasoning about lane–traffic connectivity requires semantic understanding of traffic elements. For example, the enhancement model must distinguish whether a traffic element corresponds to a "turn left" sign or a "go straight" sign in order to correctly refine lane–traffic connections. An alternative to camera images would be to use the semantic labels of traffic elements directly. However, such labels are typically unavailable at inference time and require an additional perception or annotation pipeline to obtain. In contrast, camera images are already available in most autonomous driving systems and naturally provide the semantic cues needed for refinement without relying on ground-truth semantic annotations. Moreover, this modality requirement only affects lane–traffic connectivity refinement. When camera images are unavailable, \ourmodel can still be applied to lane–lane connectivity, which does not depend on image features. Future work could explore alternative semantic representations or additional modalities to further broaden the applicability of lane–traffic refinement.

\section{Additional Qualitative Results}
\label{app:more_qualitative_results}
\begin{figure*}[htbp]
\centering
\captionsetup[subfigure]{labelformat=empty}
\begin{subfigure}[t]{0.31\textwidth}
  \centering\includegraphics[width=\linewidth]{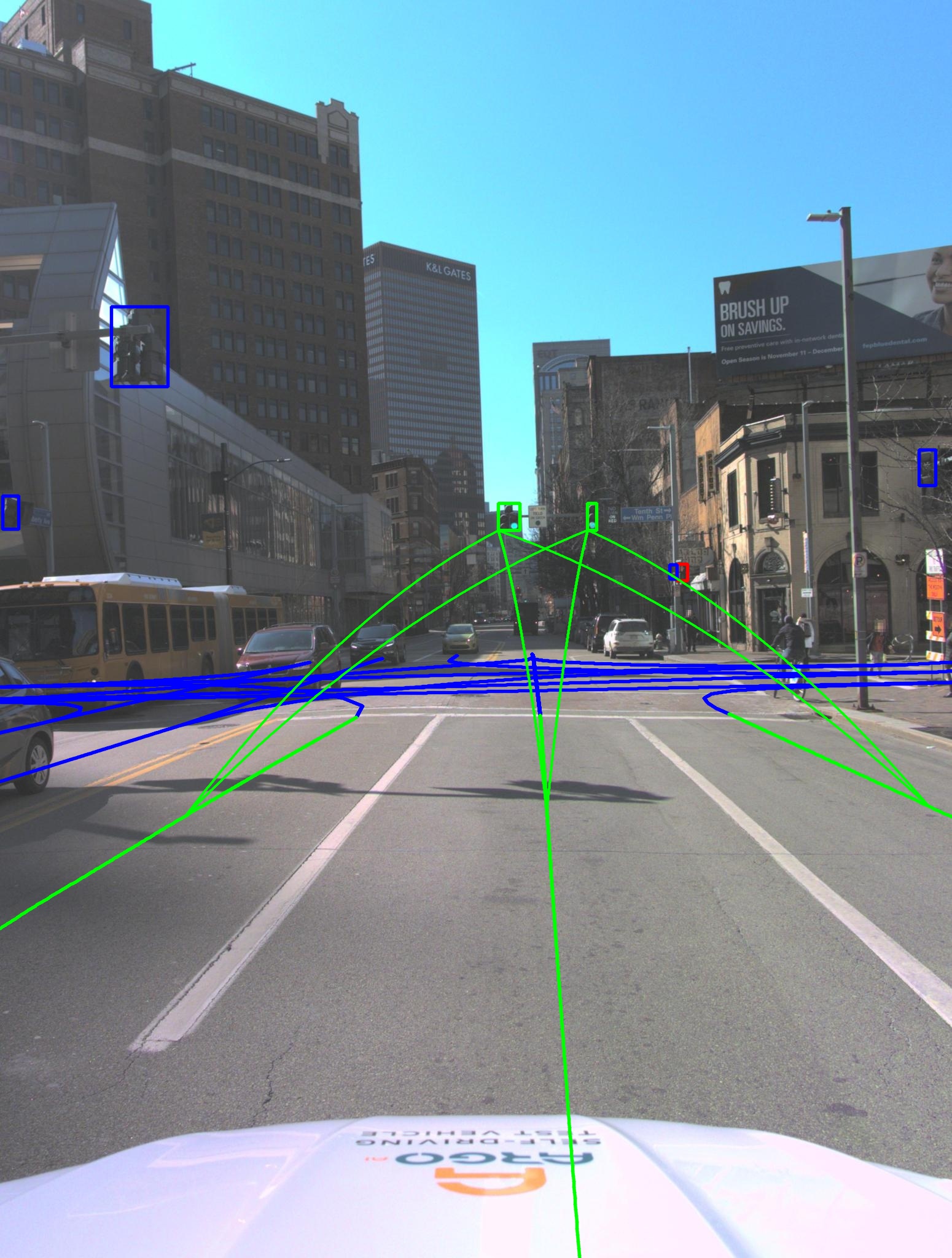}
  \caption{Ground-truth}
\end{subfigure}\hfill
\begin{subfigure}[t]{0.31\textwidth}
  \centering\includegraphics[width=\linewidth]{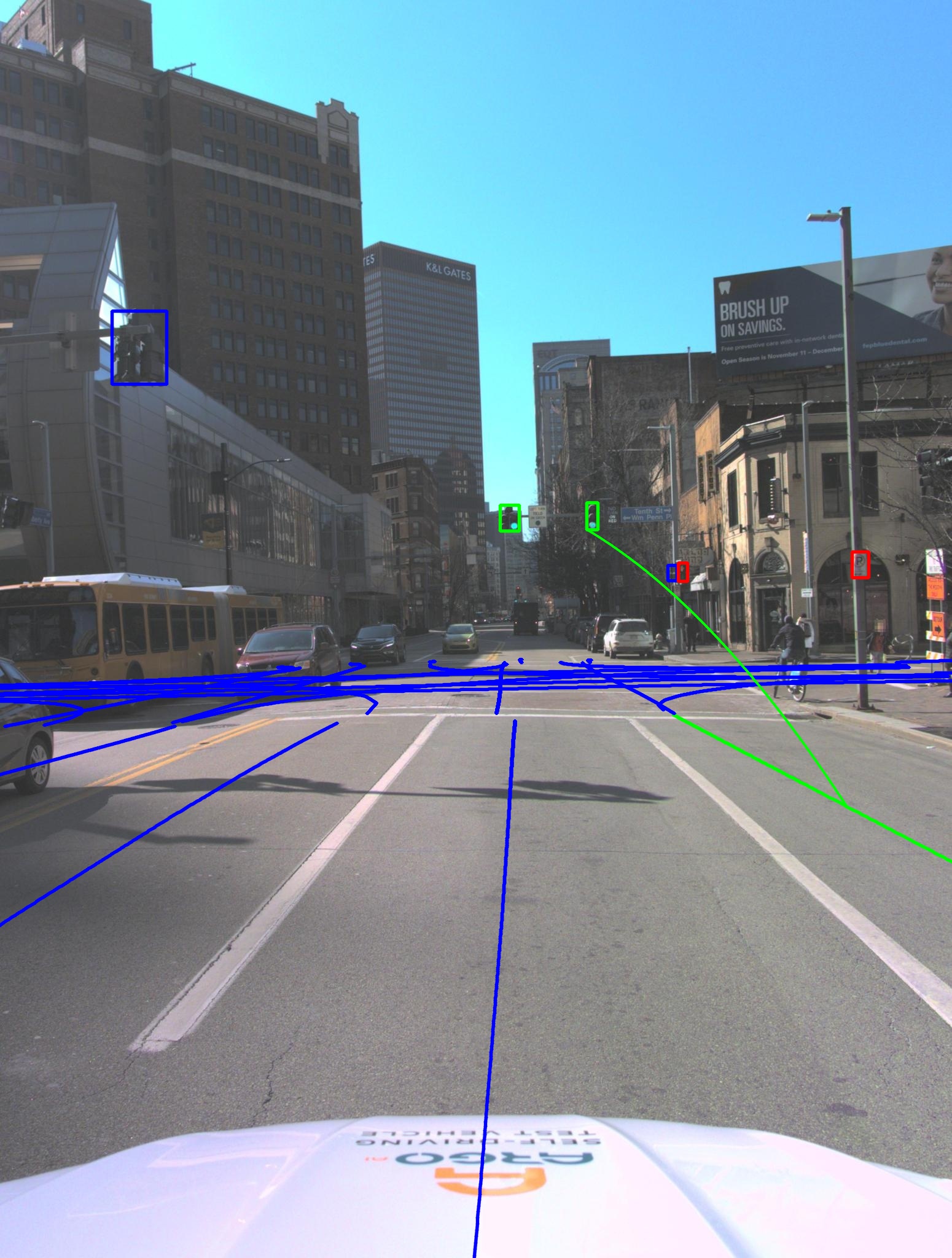}
  \caption{SMART}
\end{subfigure}\hfill
\begin{subfigure}[t]{0.31\textwidth}
  \centering\includegraphics[width=\linewidth]{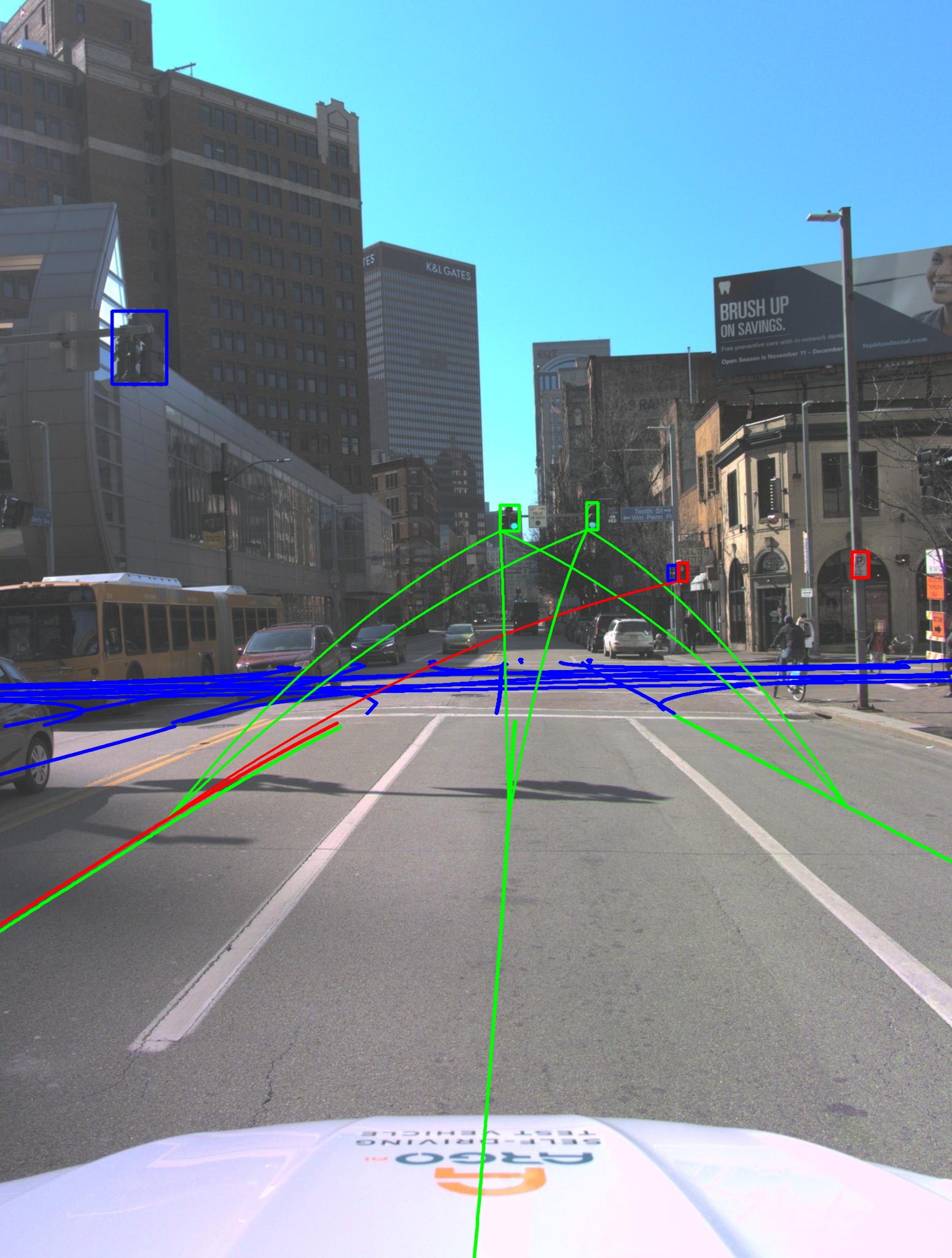}
  \caption{SMART + \ourmodel}
\end{subfigure}

\vspace{6pt}

\begin{subfigure}[t]{0.235\textwidth}
  \centering\includegraphics[width=\linewidth]{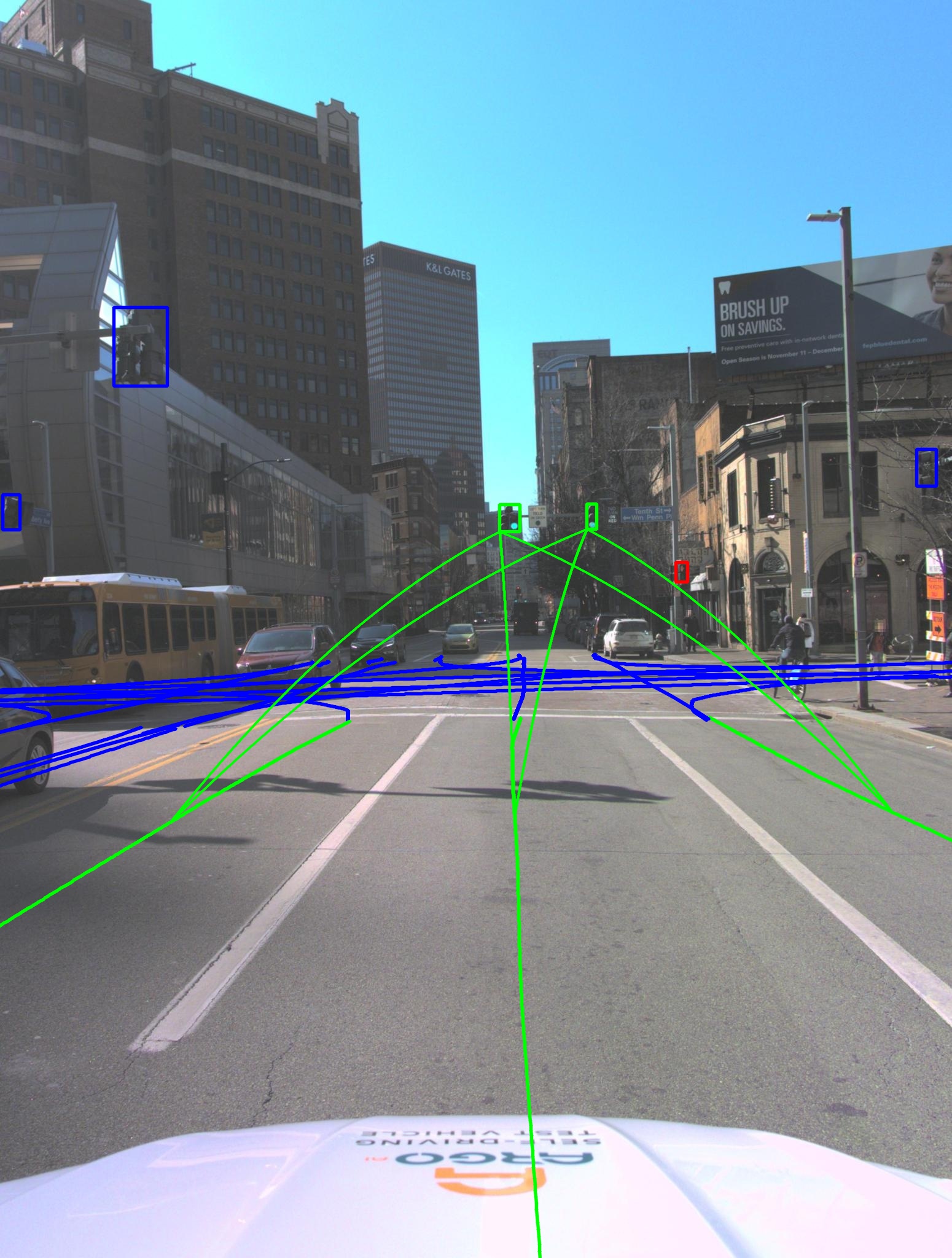}
  \caption{Topo2D }
\end{subfigure}\hfill
\begin{subfigure}[t]{0.235\textwidth}
  \centering\includegraphics[width=\linewidth]{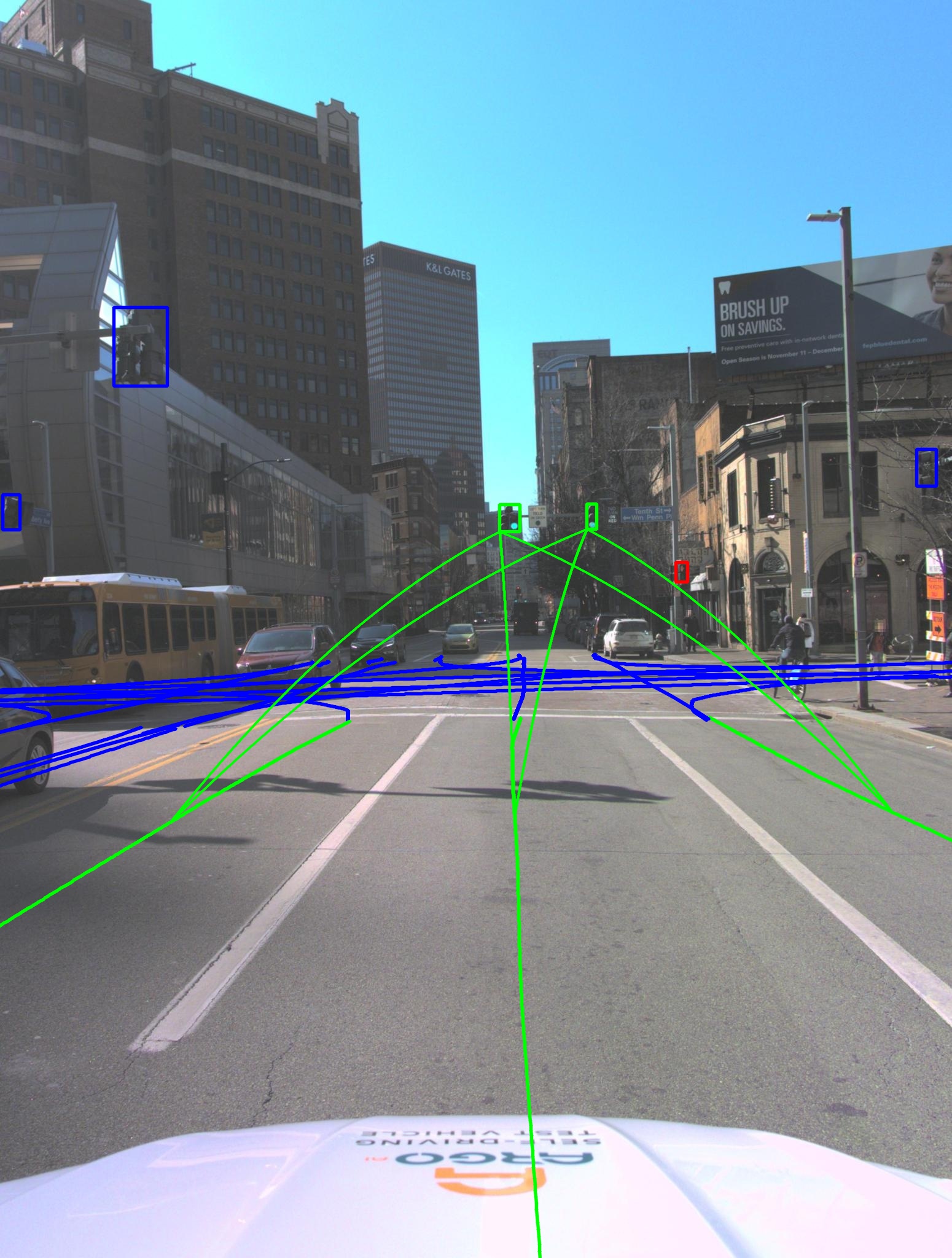}
  \caption{\centering Topo2D + \ourmodel}
\end{subfigure}\hfill
\begin{subfigure}[t]{0.235\textwidth}
  \centering\includegraphics[width=\linewidth]{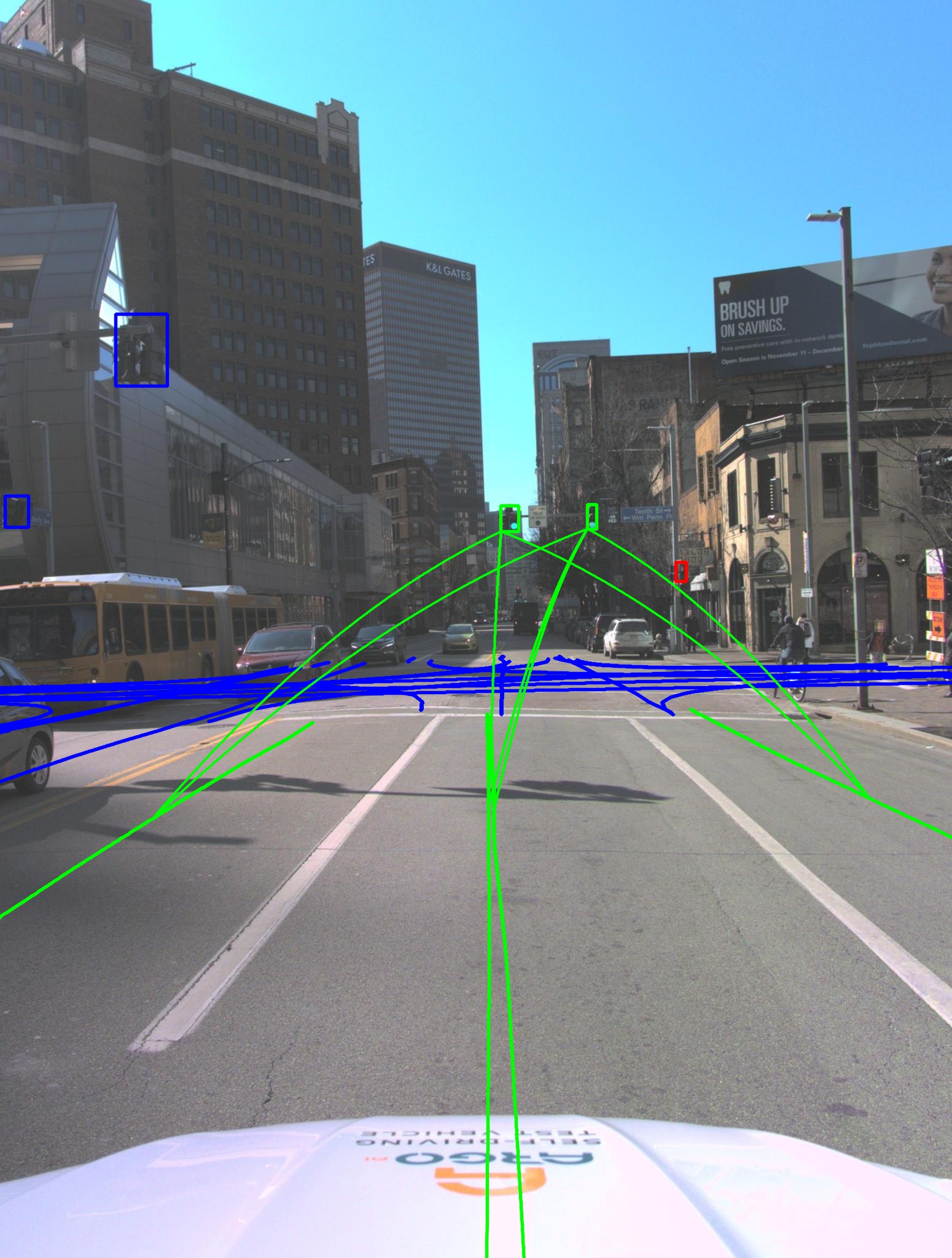}
  \caption{TopoLogic}
\end{subfigure}\hfill
\begin{subfigure}[t]{0.235\textwidth}
  \centering\includegraphics[width=\linewidth]{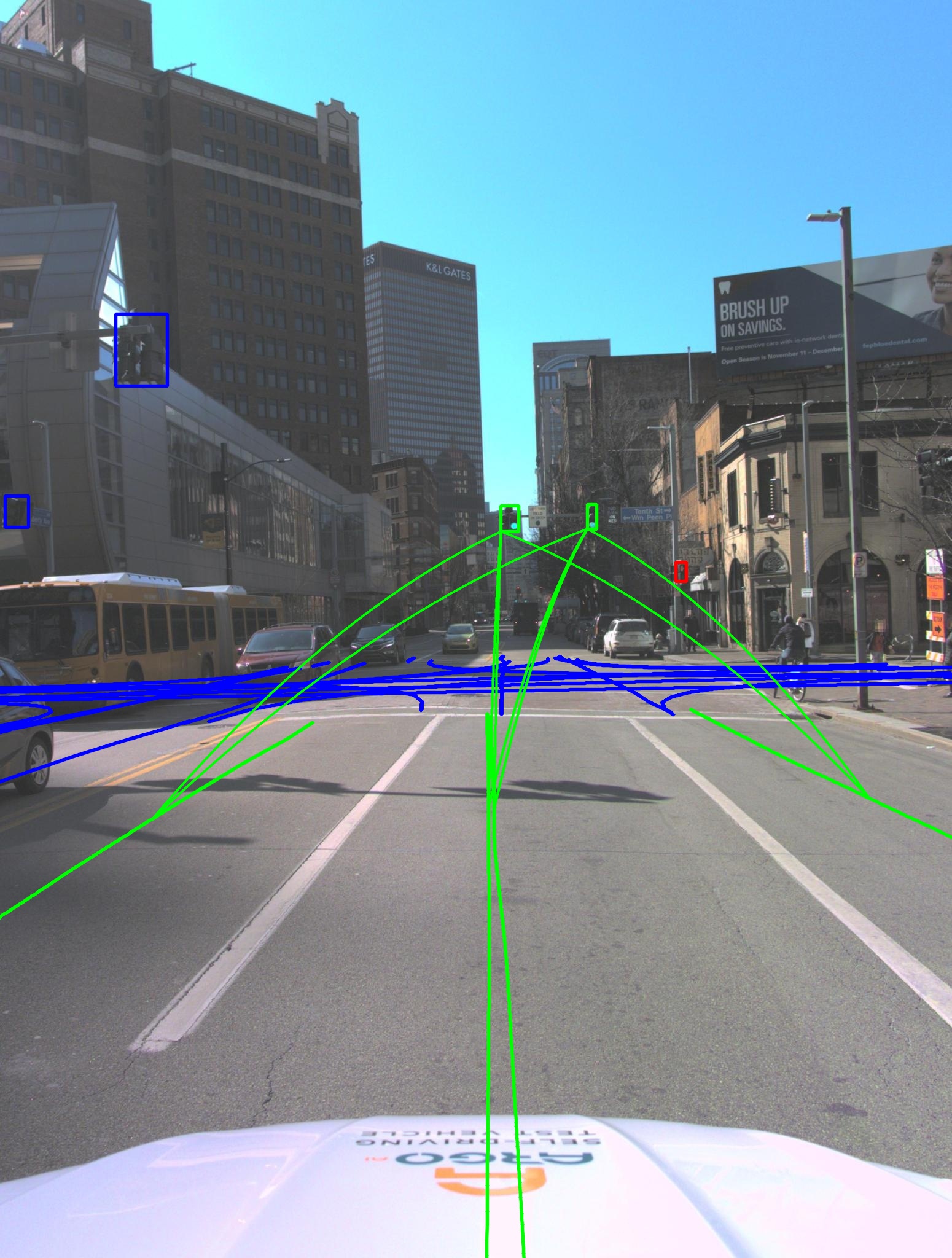}
  \caption{\centering TopoLogic + \ourmodel}
\end{subfigure}

\vspace{6pt}

\begin{subfigure}[t]{0.235\textwidth}
  \centering\includegraphics[width=\linewidth]{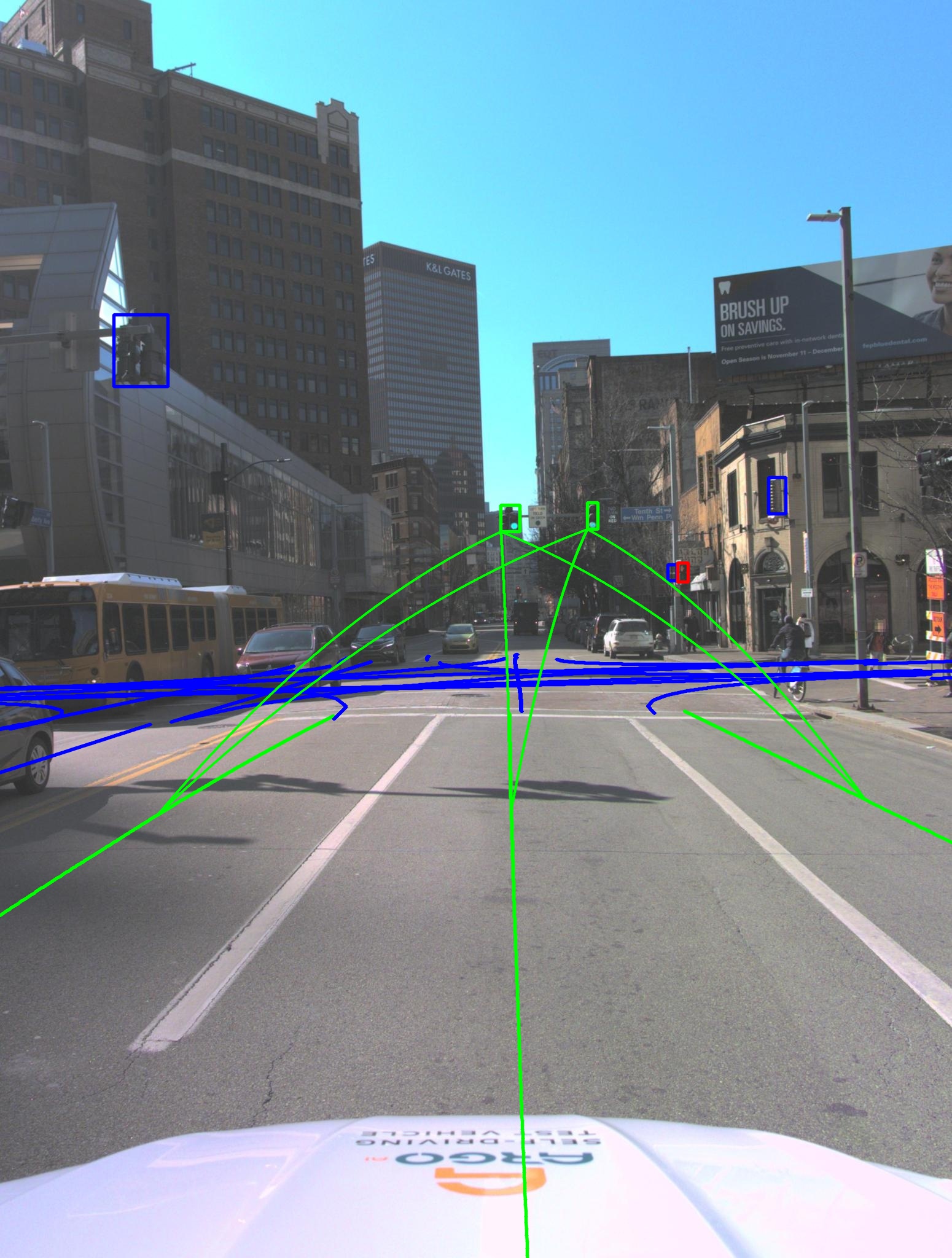}
  \caption{TopoMLP}
\end{subfigure}\hfill
\begin{subfigure}[t]{0.235\textwidth}
  \centering\includegraphics[width=\linewidth]{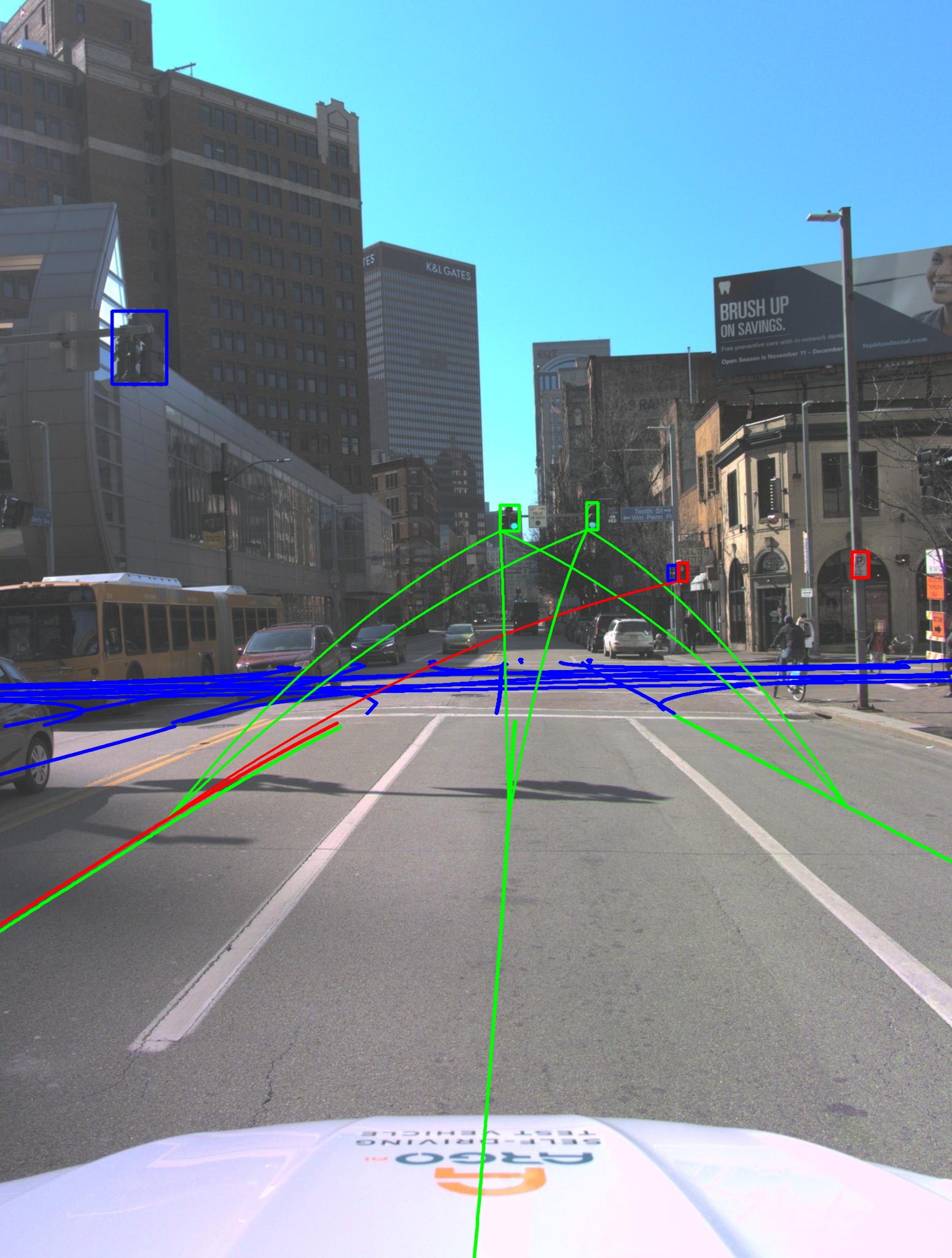}
  \caption{\centering TopoMLP + \ourmodel}
\end{subfigure}\hfill
\begin{subfigure}[t]{0.235\textwidth}
  \centering\includegraphics[width=\linewidth]{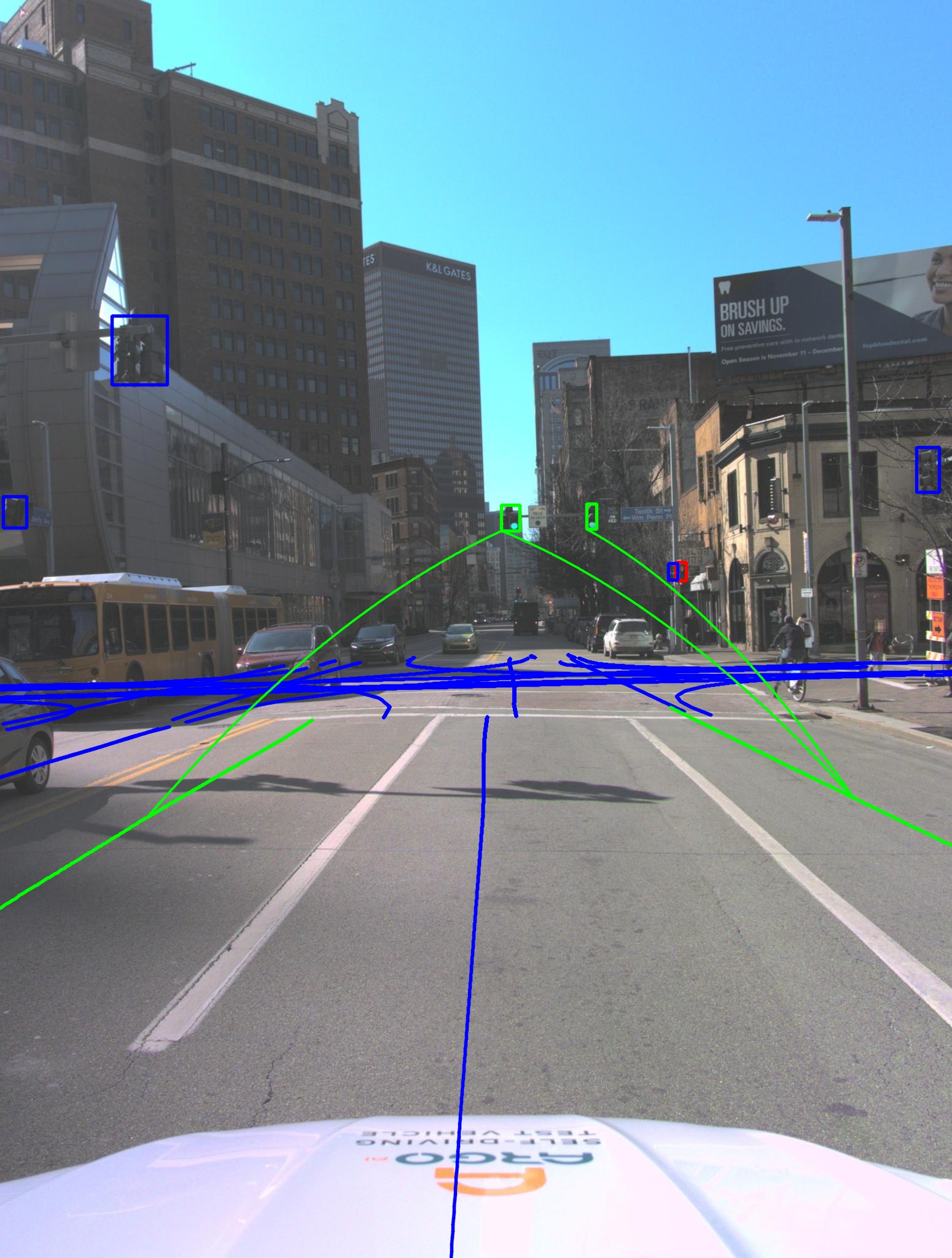}
  \caption{TopoNet}
\end{subfigure}\hfill
\begin{subfigure}[t]{0.235\textwidth}
  \centering\includegraphics[width=\linewidth]{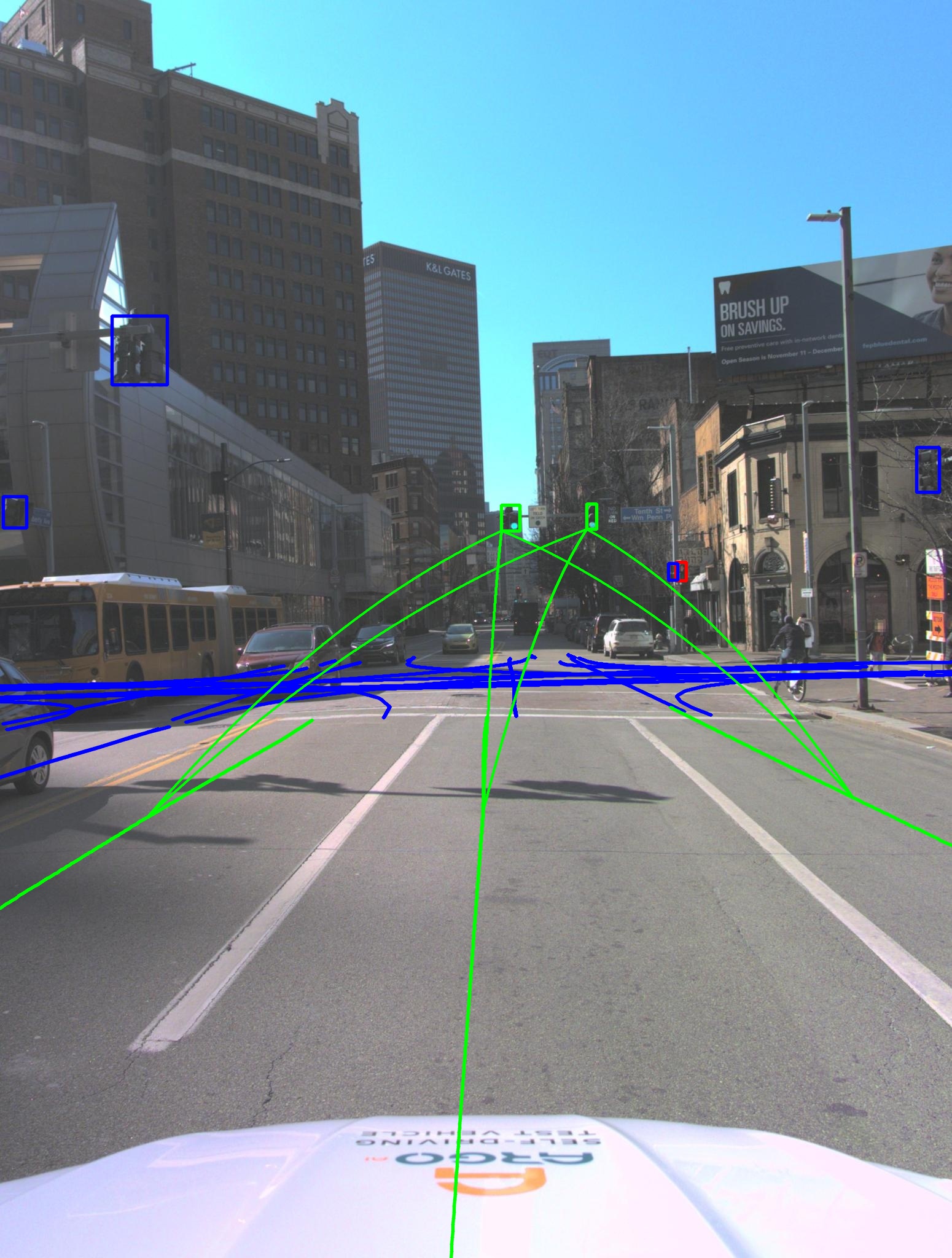}
  \caption{\centering TopoNet + \ourmodel}
\end{subfigure}

\caption{Qualitative results for scene 10023: predictions before and after refinement with \ourmodel across baselines.}
\label{fig:appx_qual_10023}
\end{figure*}

\begin{figure*}[htbp]
\centering
\captionsetup[subfigure]{labelformat=empty}
\begin{subfigure}[t]{0.31\textwidth}
  \centering\includegraphics[width=\linewidth]{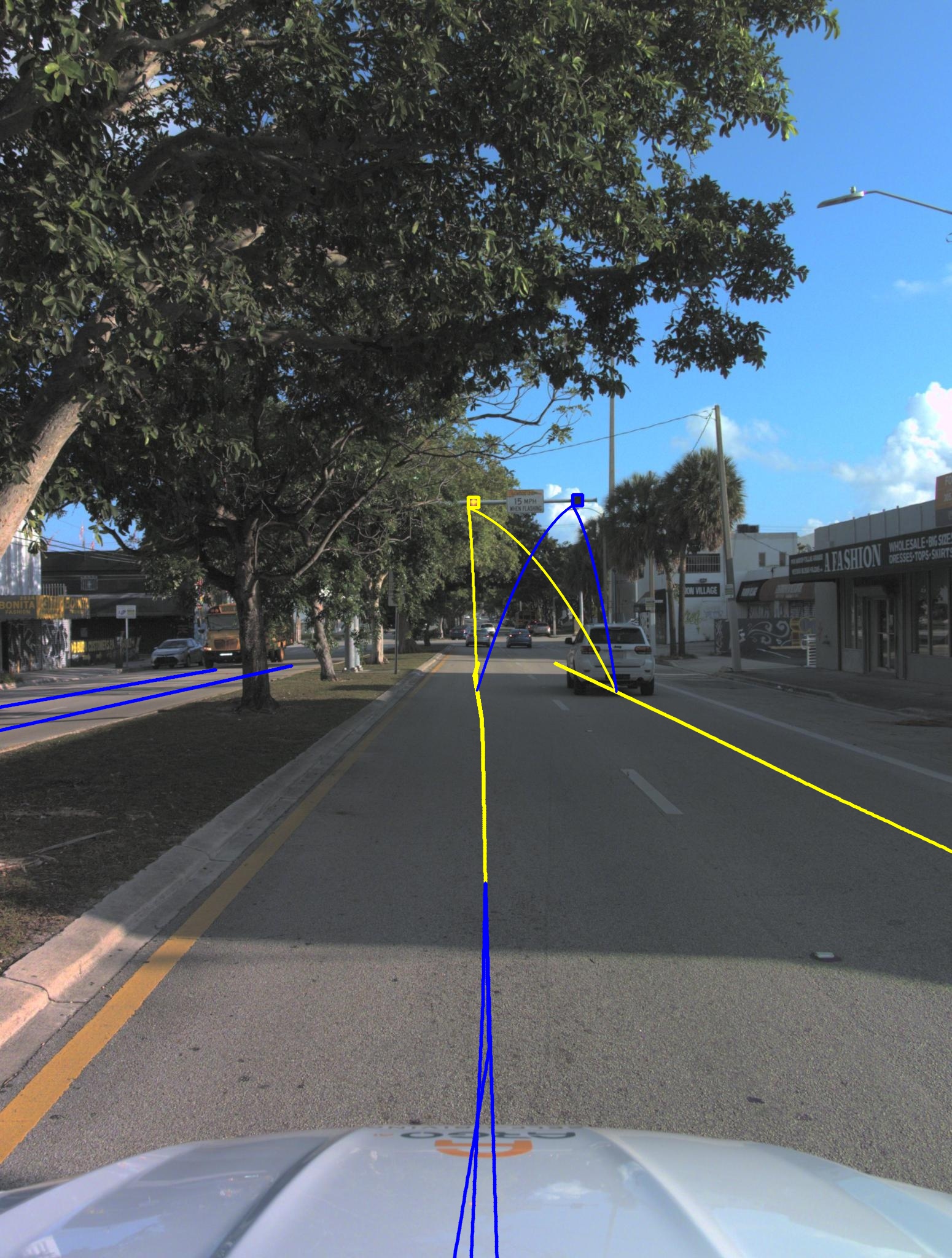}
  \caption{Ground-truth}
\end{subfigure}\hfill
\begin{subfigure}[t]{0.31\textwidth}
  \centering\includegraphics[width=\linewidth]{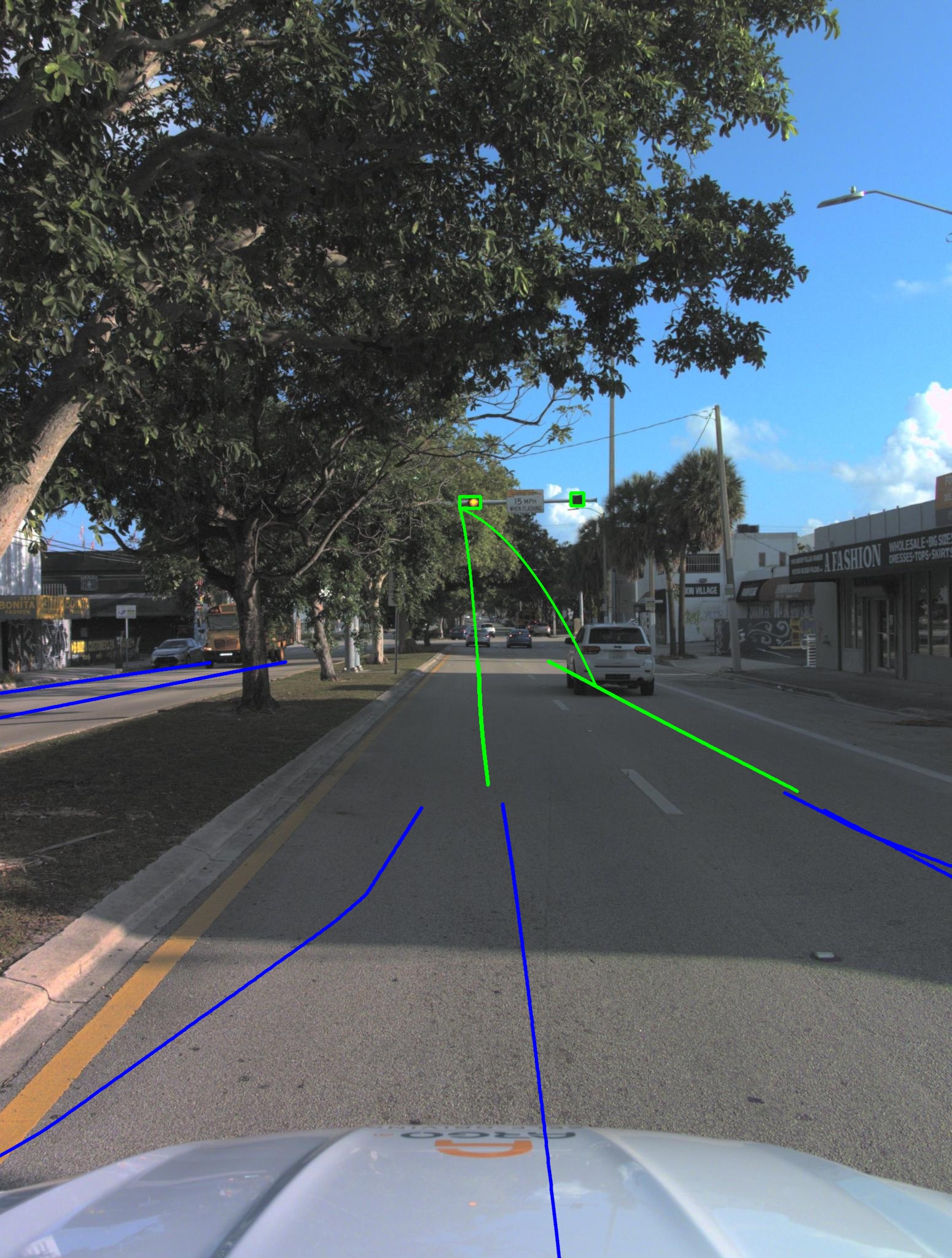}
  \caption{\centering SMART}
\end{subfigure}\hfill
\begin{subfigure}[t]{0.31\textwidth}
  \centering\includegraphics[width=\linewidth]{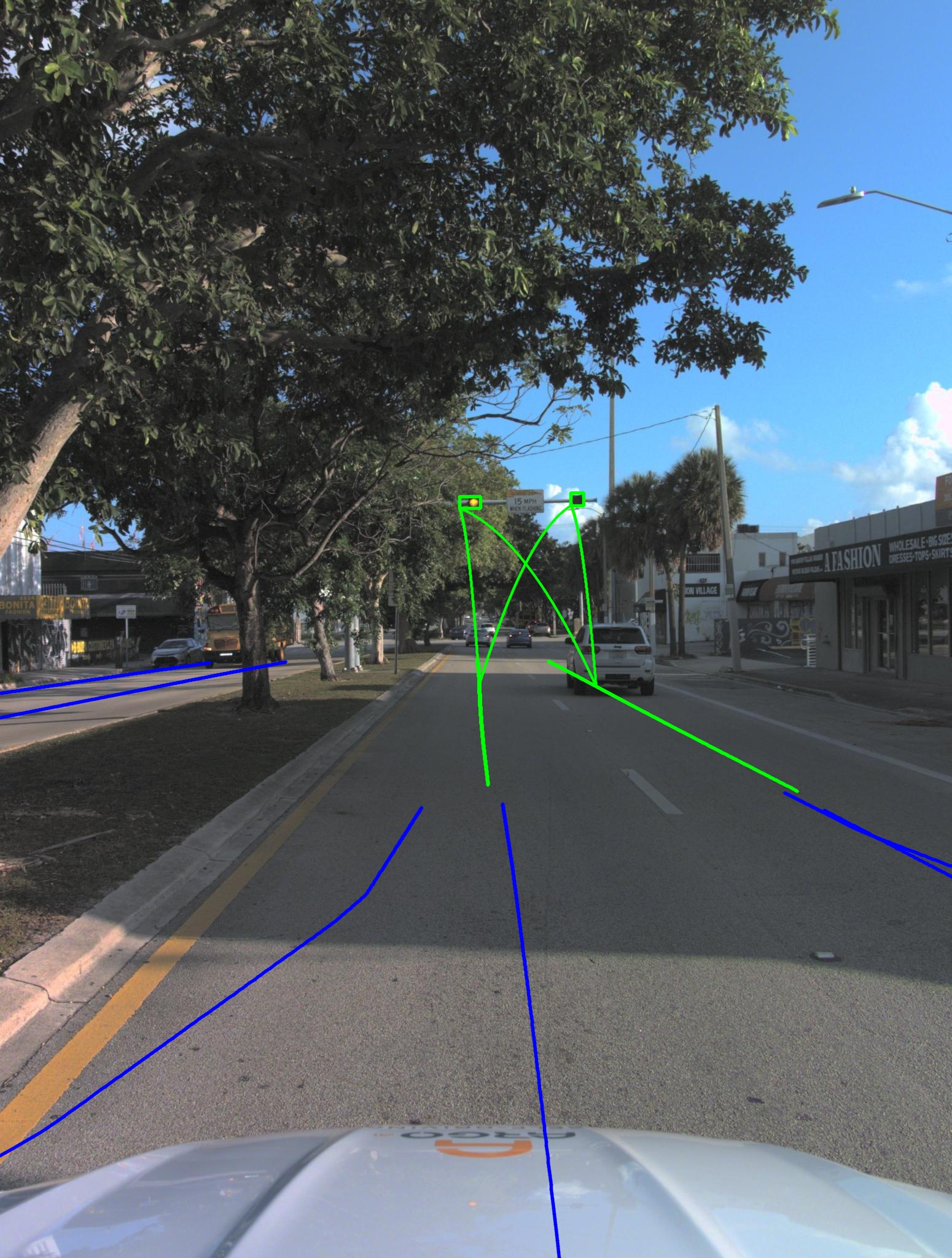}
  \caption{SMART + \ourmodel}
\end{subfigure}

\vspace{6pt}

\begin{subfigure}[t]{0.235\textwidth}
  \centering\includegraphics[width=\linewidth]{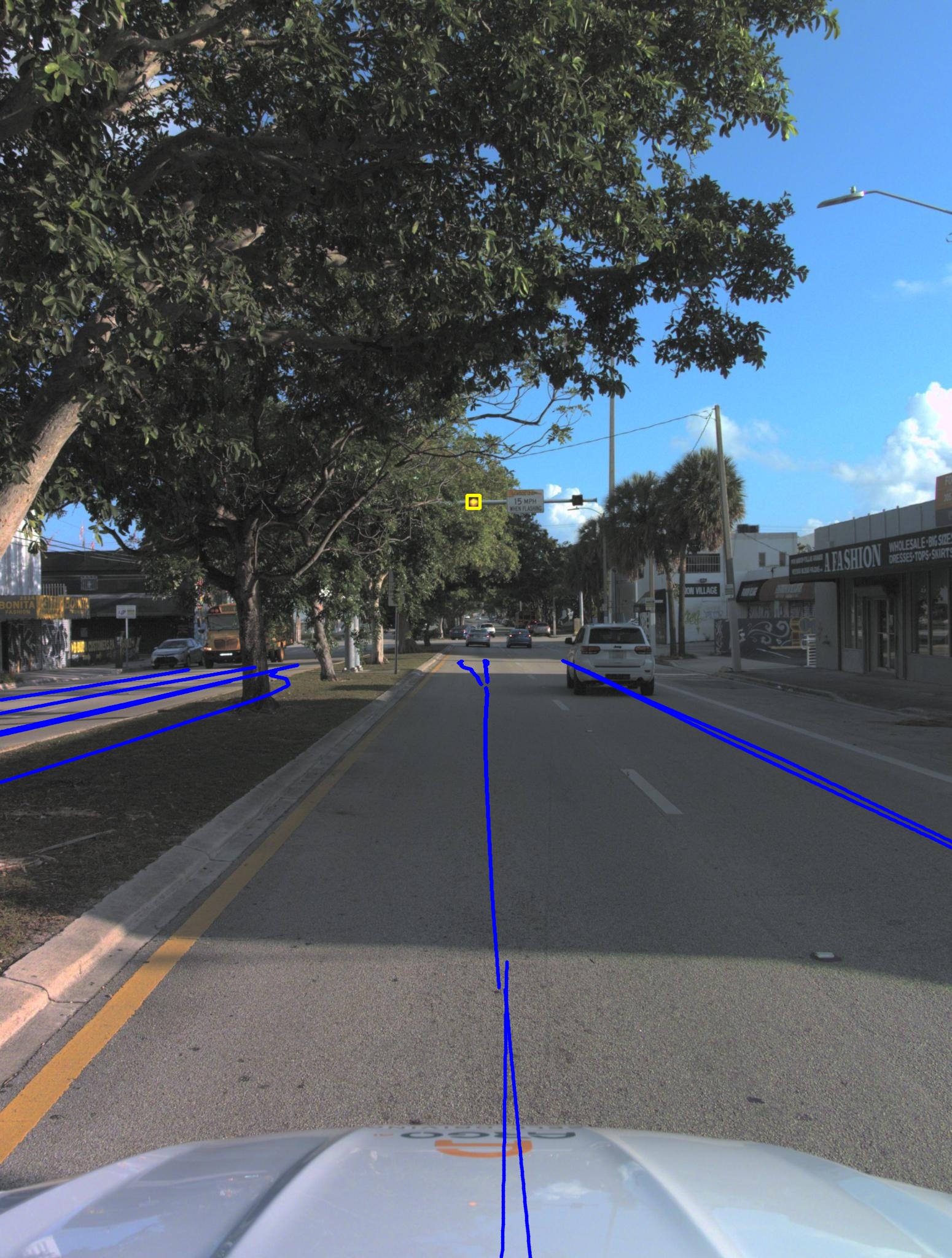}
  \caption{\centering Topo2D}
\end{subfigure}\hfill
\begin{subfigure}[t]{0.235\textwidth}
  \centering\includegraphics[width=\linewidth]{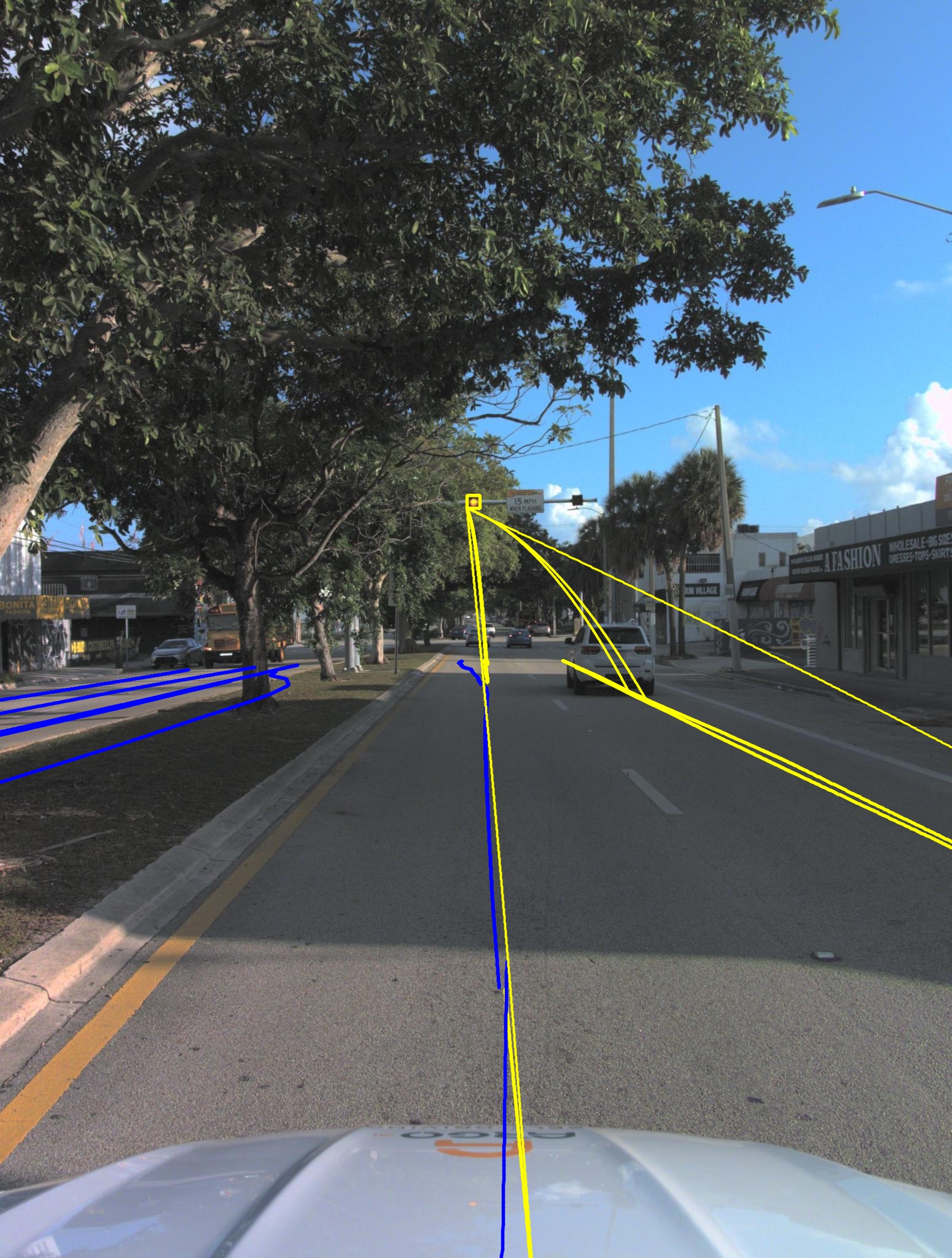}
  \caption{\centering Topo2D + \ourmodel}
\end{subfigure}\hfill
\begin{subfigure}[t]{0.235\textwidth}
  \centering\includegraphics[width=\linewidth]{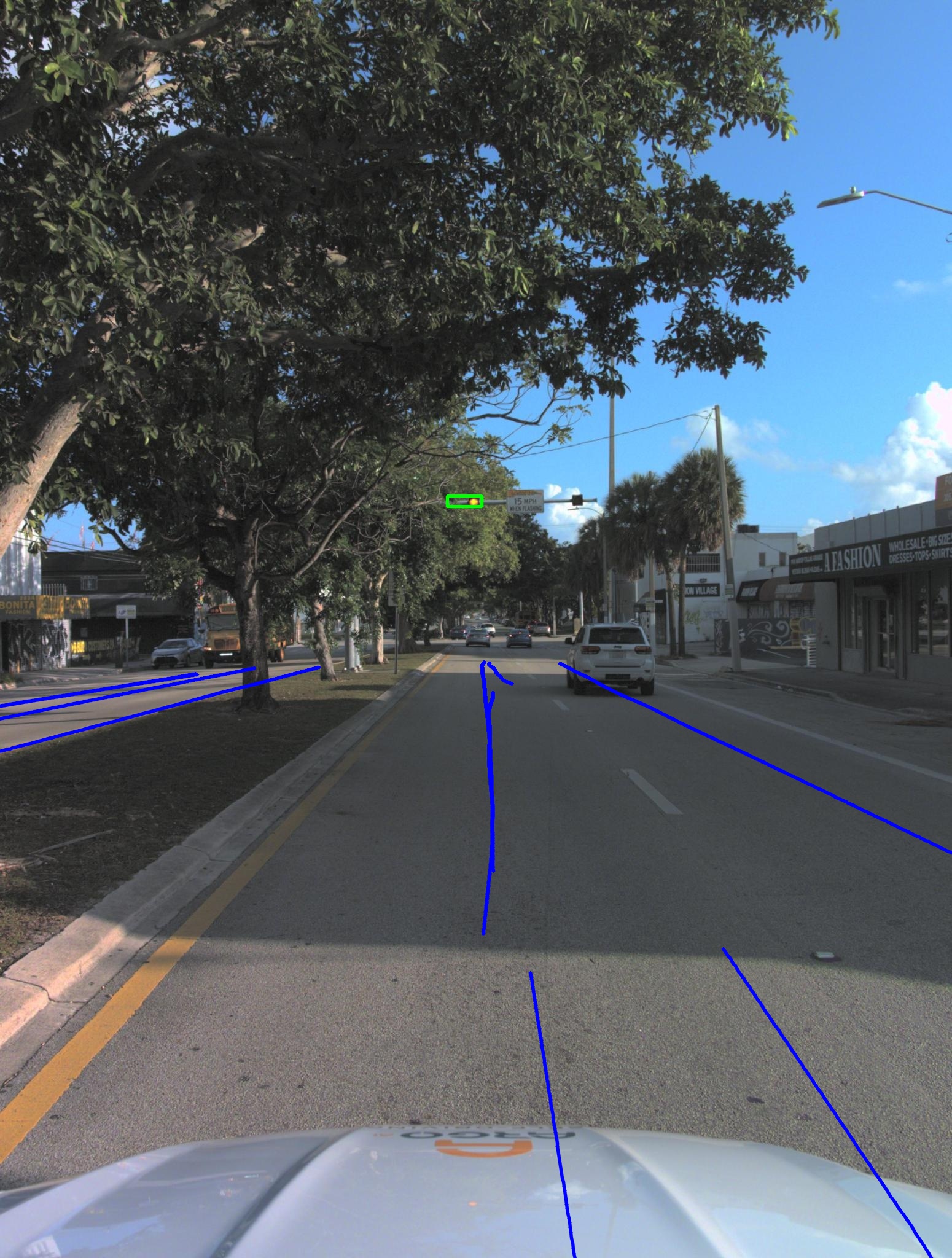}
  \caption{TopoLogic}
\end{subfigure}\hfill
\begin{subfigure}[t]{0.235\textwidth}
  \centering\includegraphics[width=\linewidth]{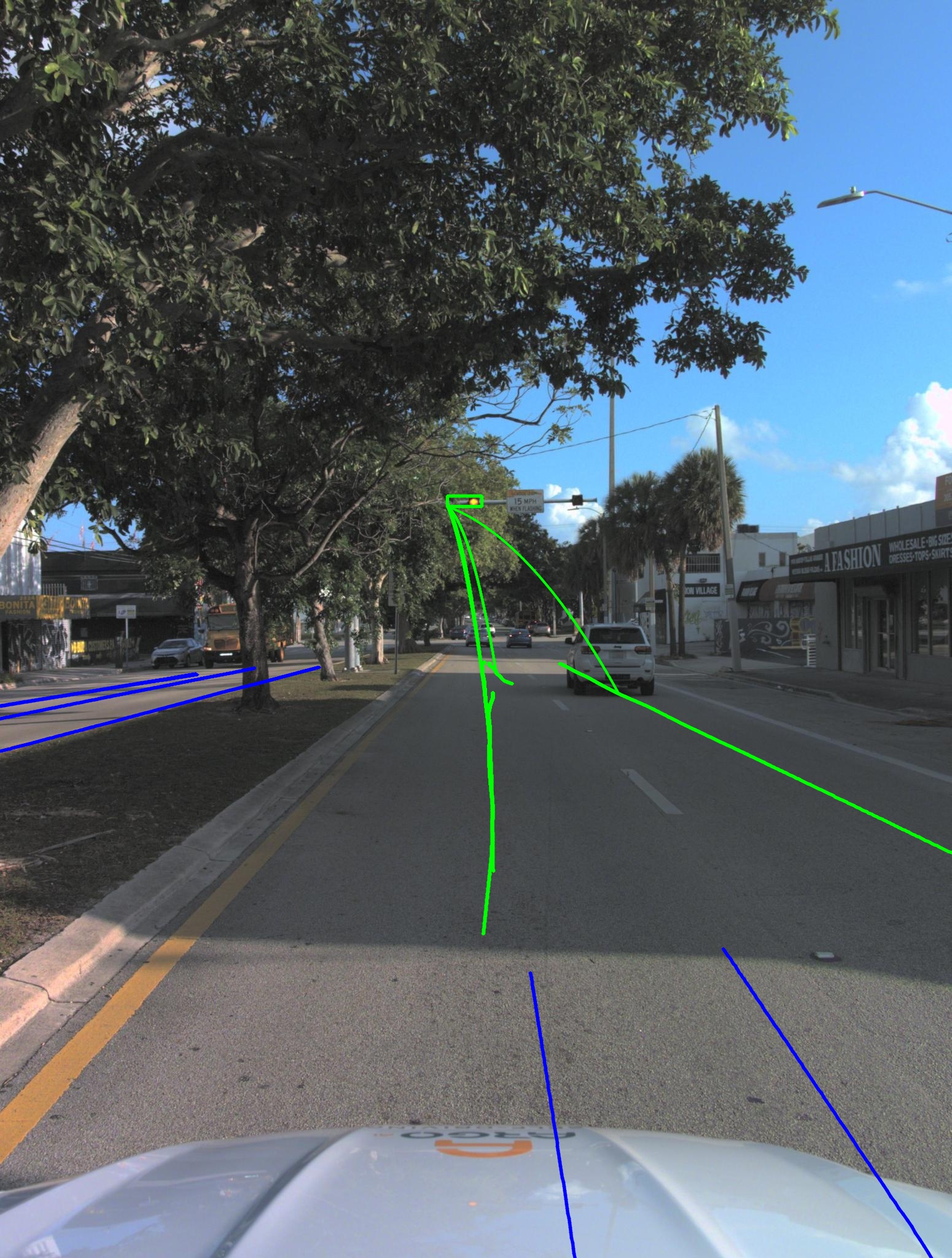}
  \caption{\centering TopoLogic + \ourmodel}
\end{subfigure}

\vspace{6pt}

\begin{subfigure}[t]{0.235\textwidth}
  \centering\includegraphics[width=\linewidth]{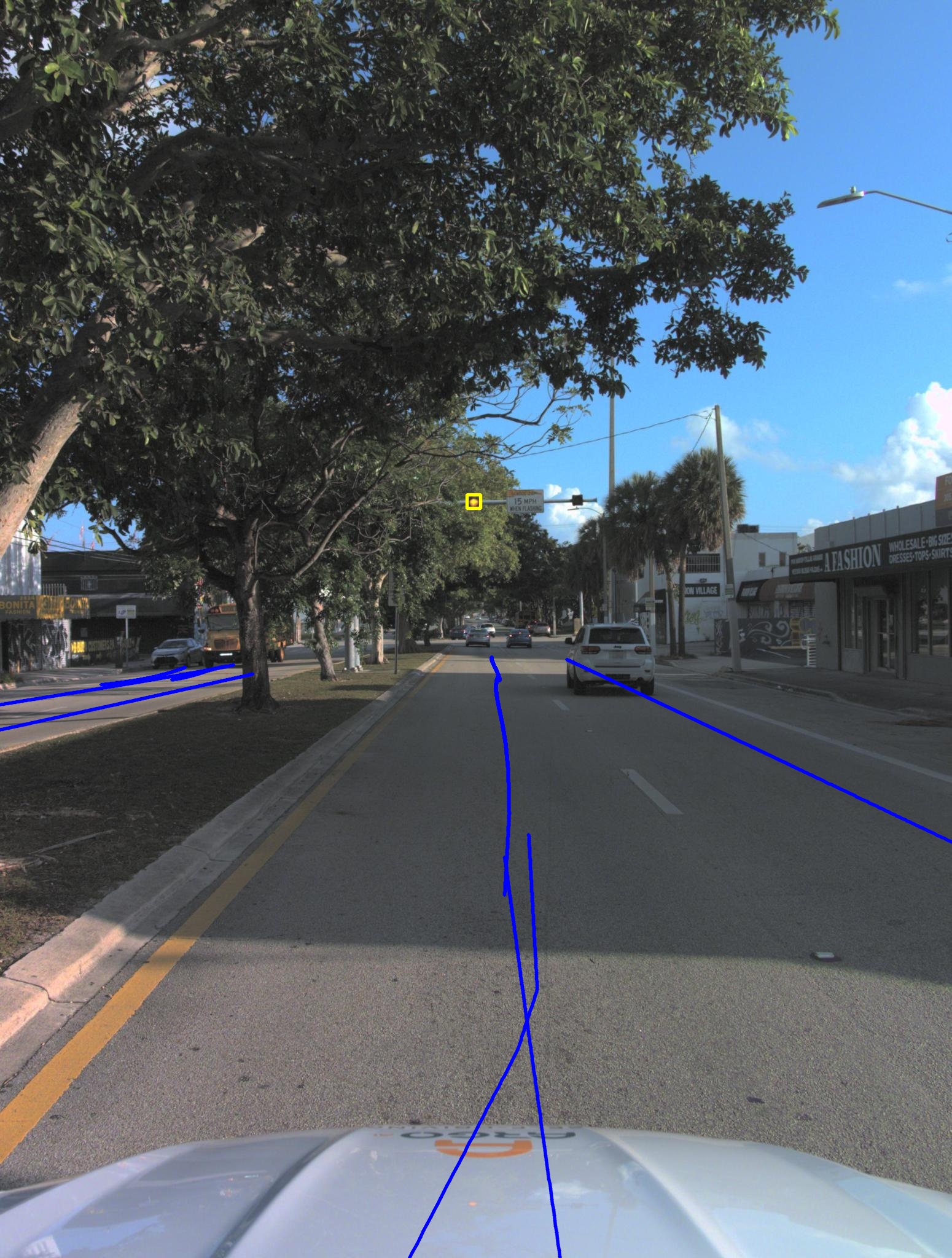}
  \caption{TopoMLP}
\end{subfigure}\hfill
\begin{subfigure}[t]{0.235\textwidth}
  \centering\includegraphics[width=\linewidth]{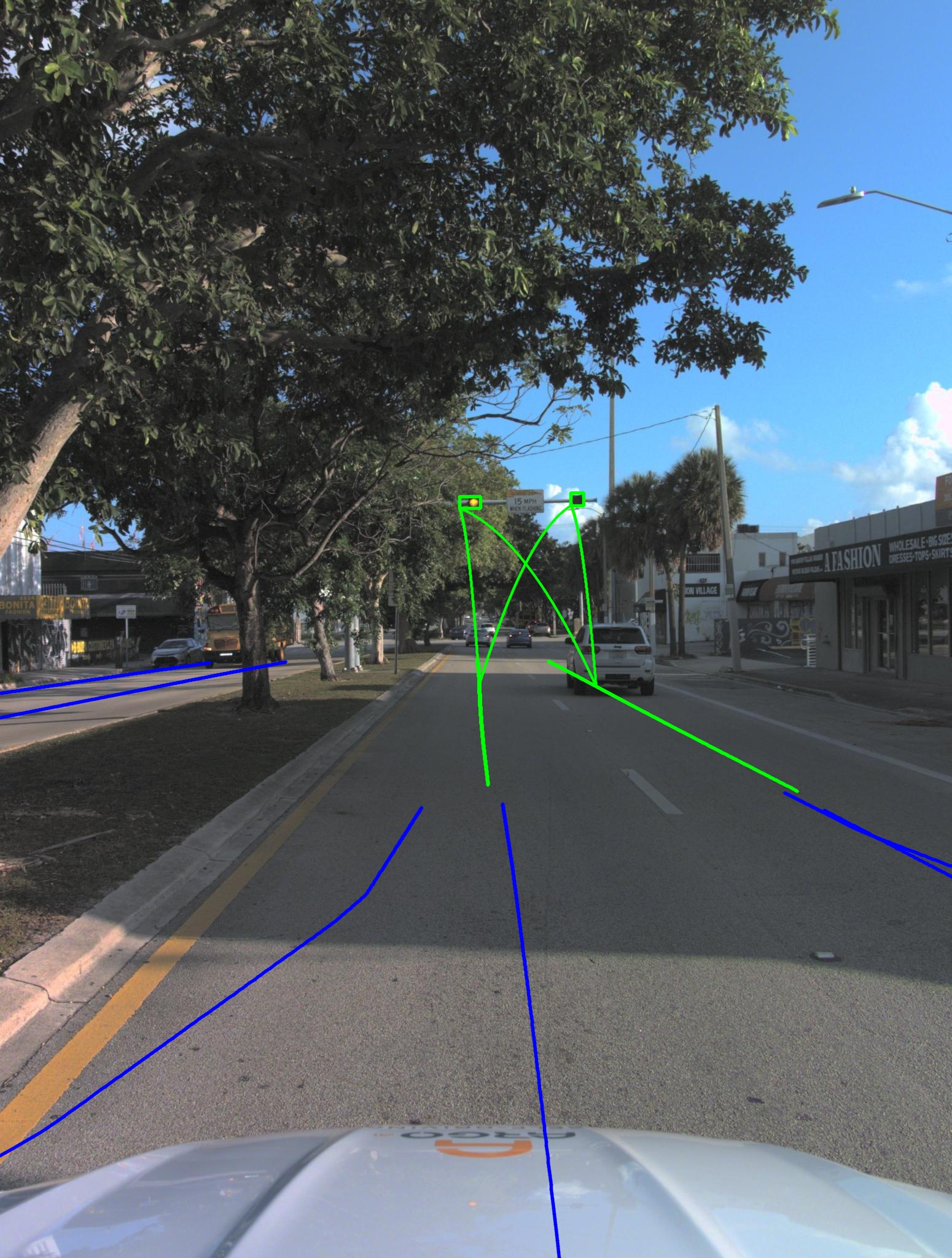}
  \caption{\centering TopoMLP + \ourmodel}
\end{subfigure}\hfill
\begin{subfigure}[t]{0.235\textwidth}
  \centering\includegraphics[width=\linewidth]{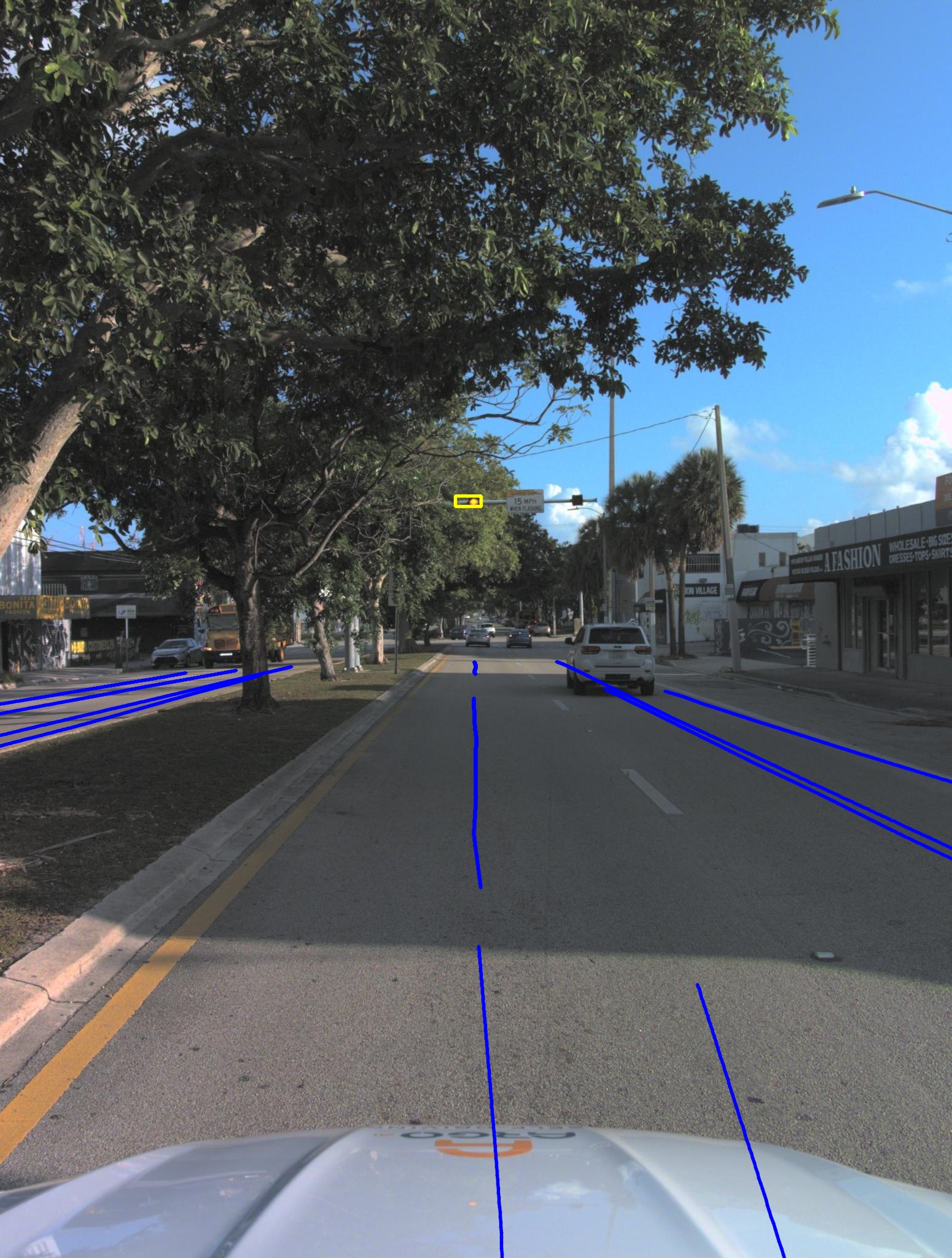}
  \caption{TopoNet}
\end{subfigure}\hfill
\begin{subfigure}[t]{0.235\textwidth}
  \centering\includegraphics[width=\linewidth]{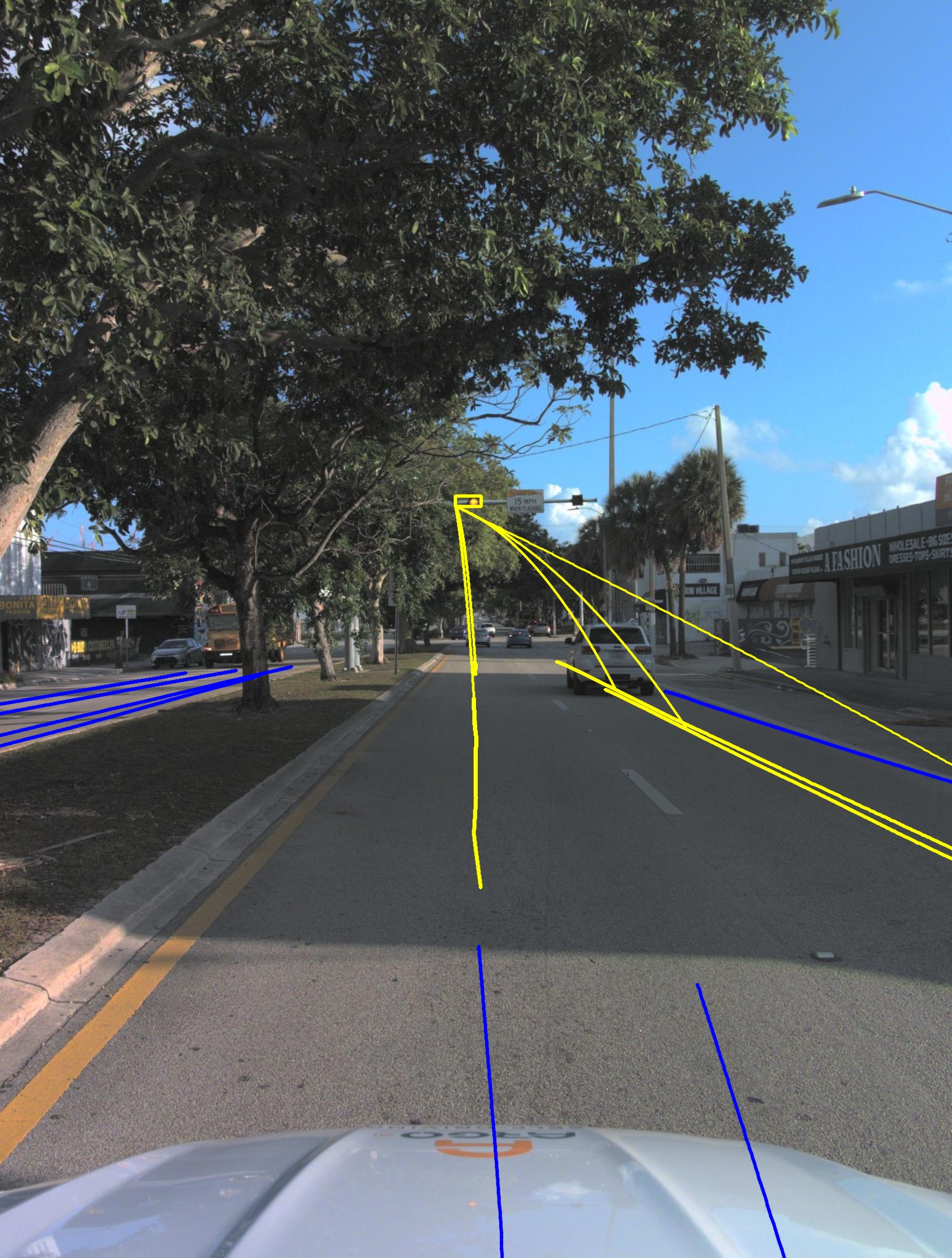}
  \caption{\centering TopoNet + \ourmodel}
\end{subfigure}

\caption{Qualitative results for scene 10013: predictions before and after refinement with \ourmodel across baselines (SMART, Topo2D, TopoLogic, TopoMLP, TopoNet). All panels correspond to the same ground-truth view (left).}
\label{fig:appx_qual_10013}
\end{figure*}

\begin{figure*}[htbp]
\centering
\captionsetup[subfigure]{labelformat=empty}
\begin{subfigure}[t]{0.31\textwidth}
  \centering\includegraphics[width=\linewidth]{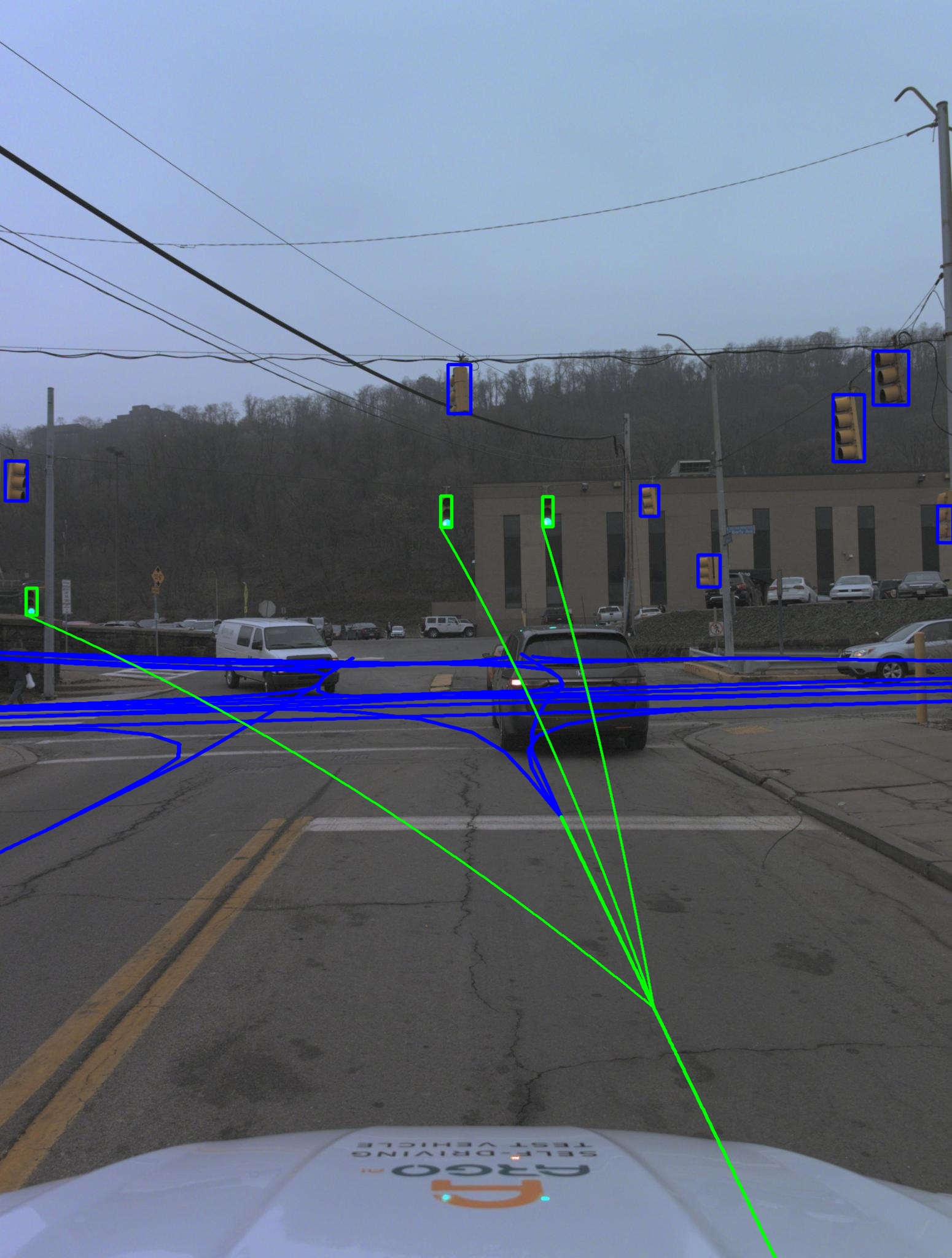}
  \caption{Ground-truth}
\end{subfigure}\hfill
\begin{subfigure}[t]{0.31\textwidth}
  \centering\includegraphics[width=\linewidth]{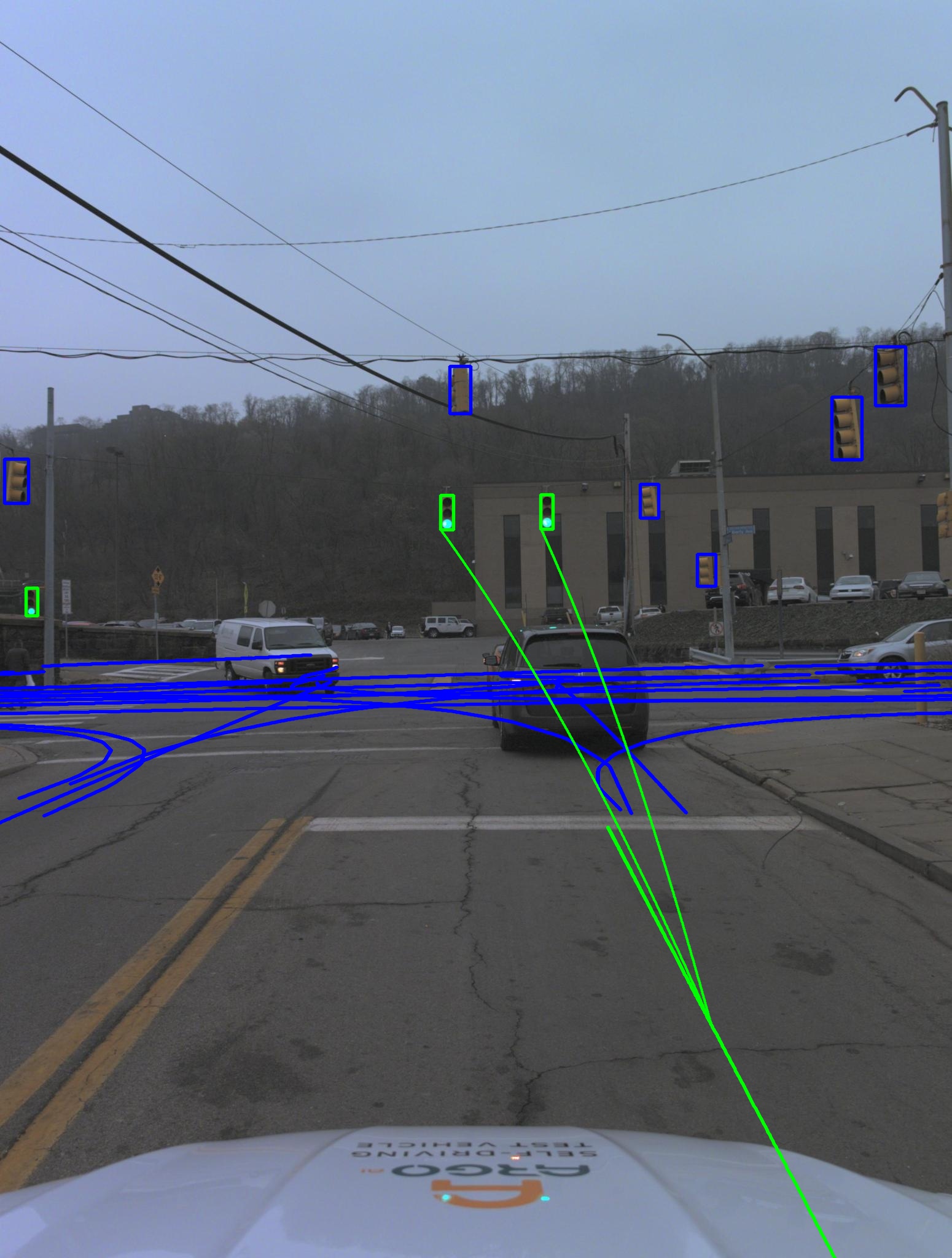}
  \caption{\centering SMART}
\end{subfigure}\hfill
\begin{subfigure}[t]{0.31\textwidth}
  \centering\includegraphics[width=\linewidth]{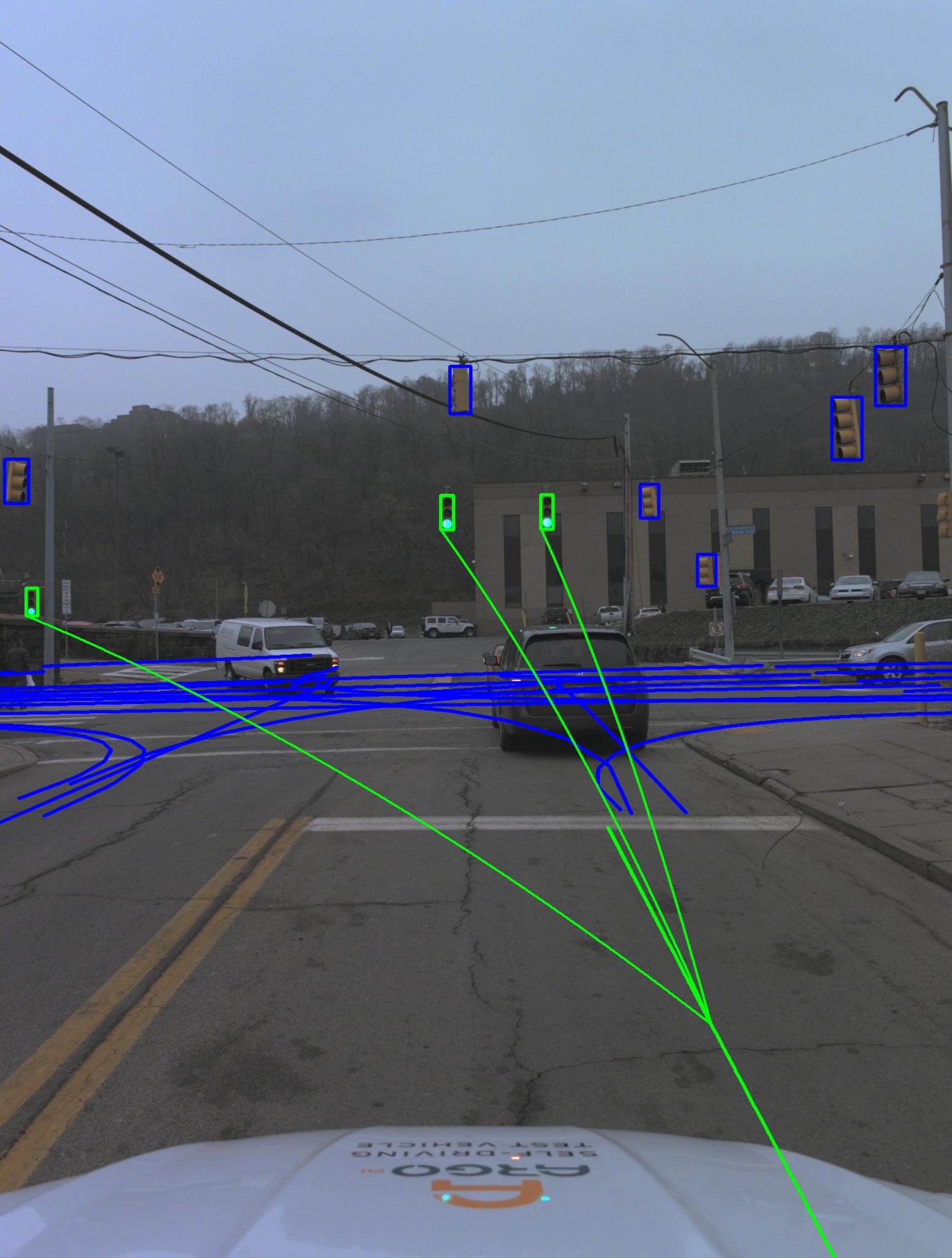}
  \caption{SMART + \ourmodel}
\end{subfigure}

\vspace{6pt}

\begin{subfigure}[t]{0.235\textwidth}
  \centering\includegraphics[width=\linewidth]{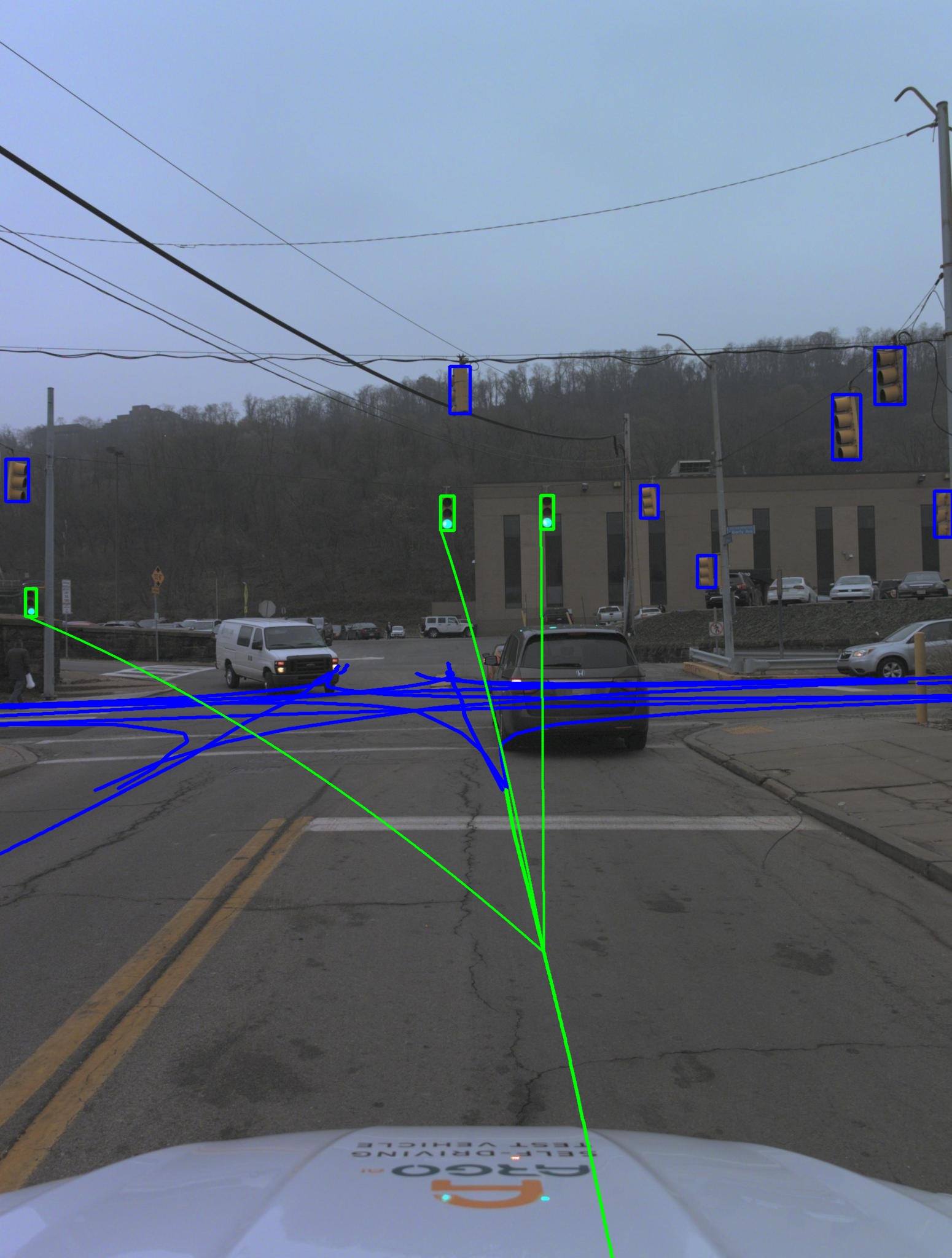}
  \caption{\centering Topo2D}
\end{subfigure}\hfill
\begin{subfigure}[t]{0.235\textwidth}
  \centering\includegraphics[width=\linewidth]{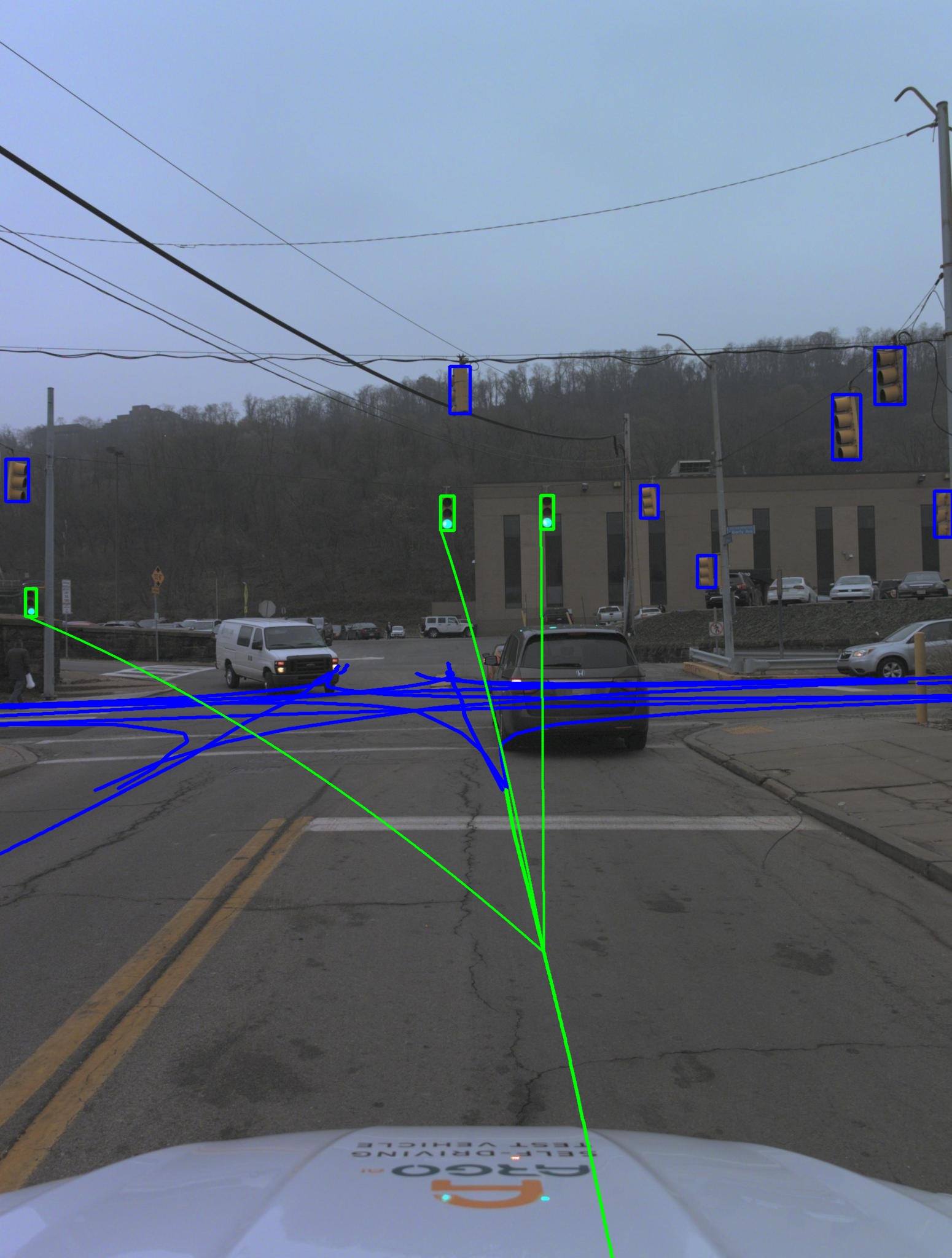}
  \caption{\centering Topo2D + \ourmodel}
\end{subfigure}\hfill
\begin{subfigure}[t]{0.235\textwidth}
  \centering\includegraphics[width=\linewidth]{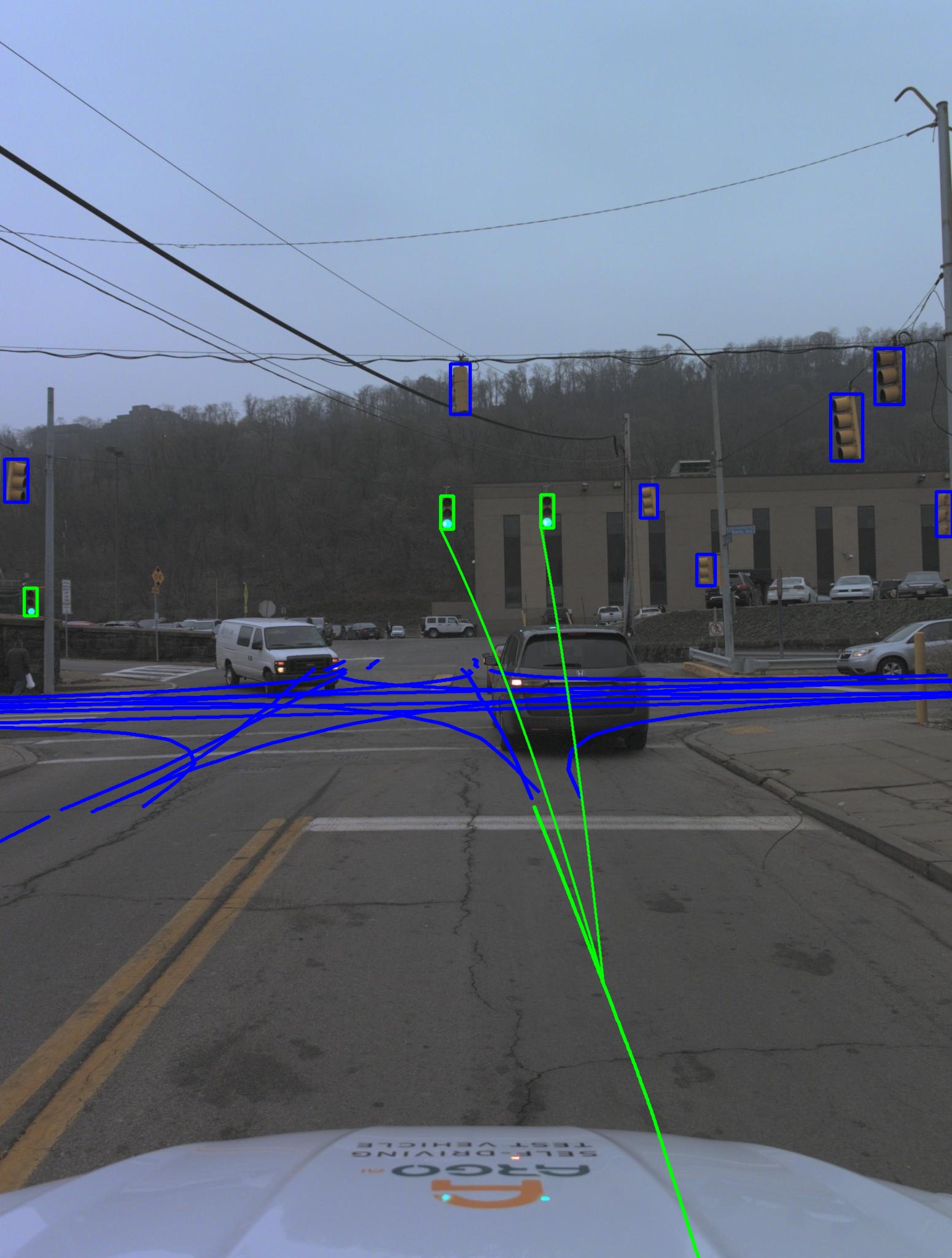}
  \caption{TopoLogic}
\end{subfigure}\hfill
\begin{subfigure}[t]{0.235\textwidth}
  \centering\includegraphics[width=\linewidth]{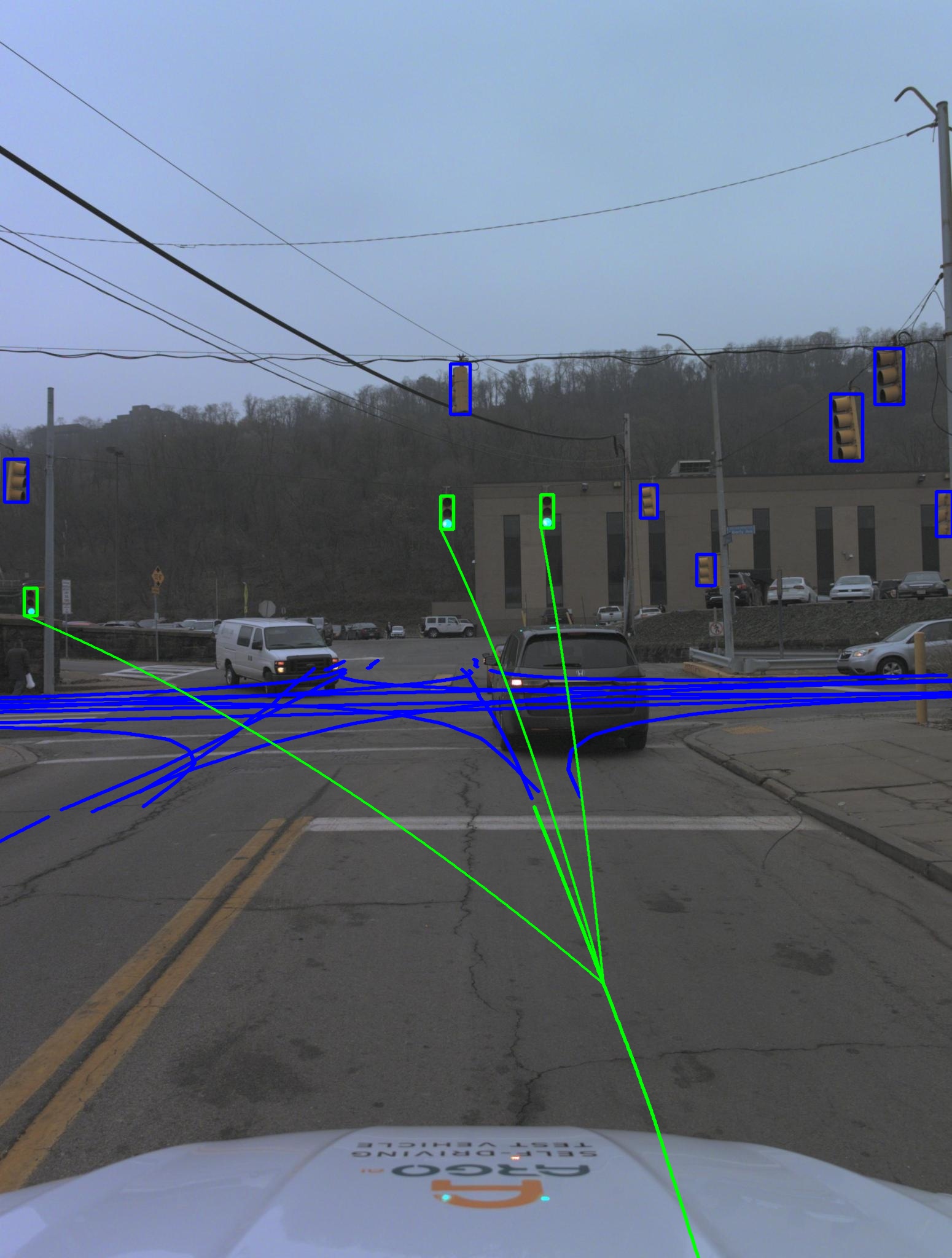}
  \caption{\centering TopoLogic + \ourmodel}
\end{subfigure}

\vspace{6pt}

\begin{subfigure}[t]{0.235\textwidth}
  \centering\includegraphics[width=\linewidth]{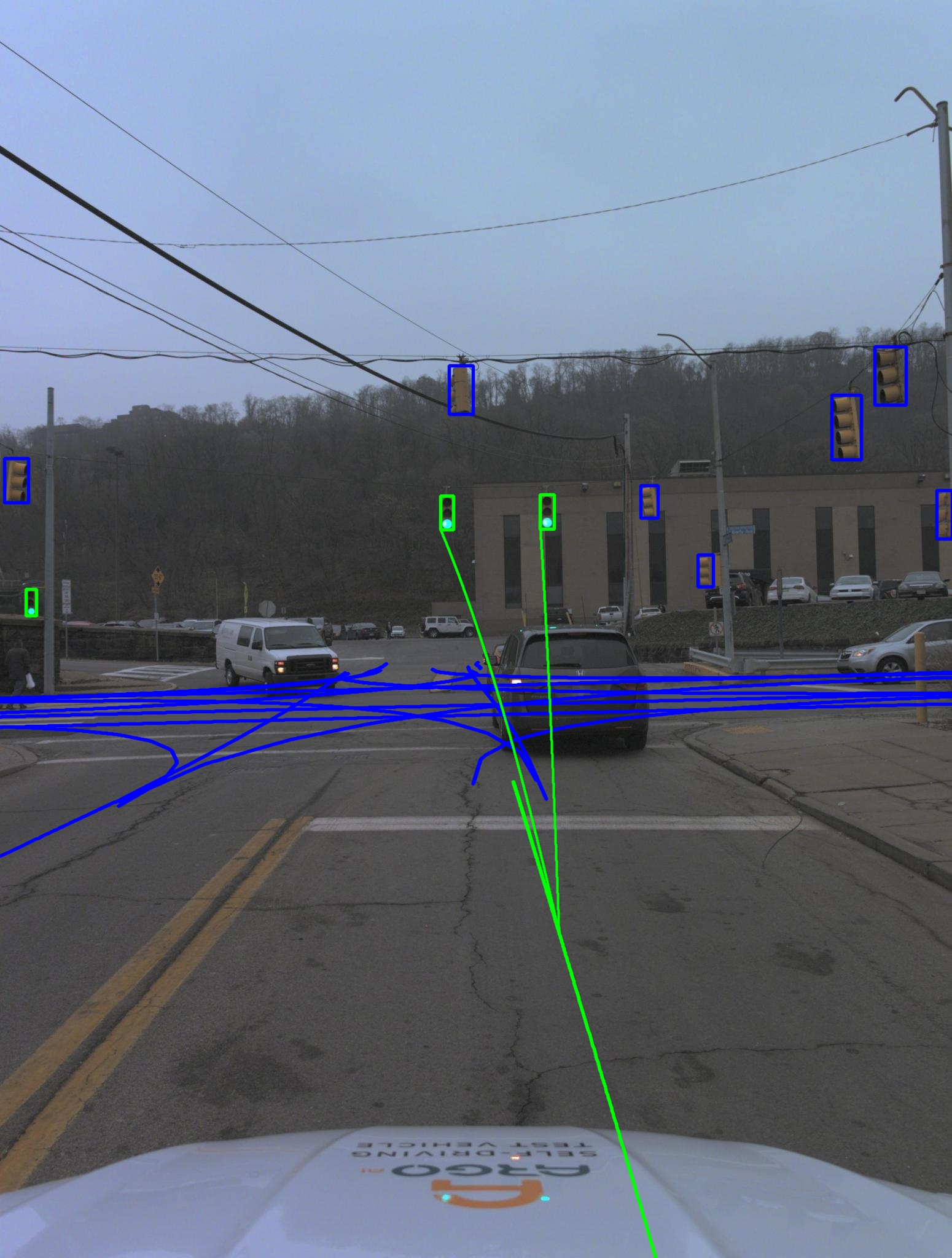}
  \caption{TopoMLP}
\end{subfigure}\hfill
\begin{subfigure}[t]{0.235\textwidth}
  \centering\includegraphics[width=\linewidth]{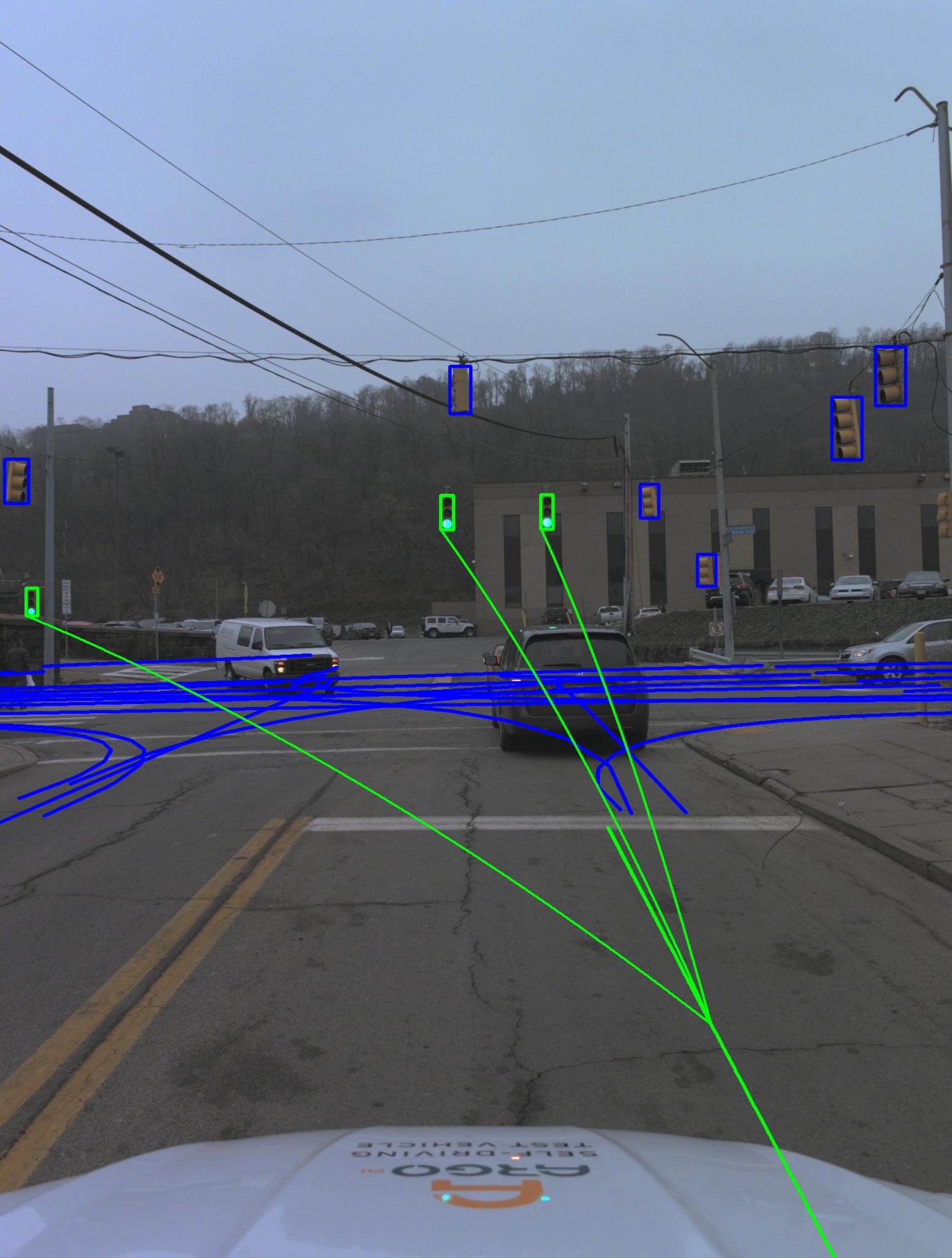}
  \caption{\centering TopoMLP + \ourmodel}
\end{subfigure}\hfill
\begin{subfigure}[t]{0.235\textwidth}
  \centering\includegraphics[width=\linewidth]{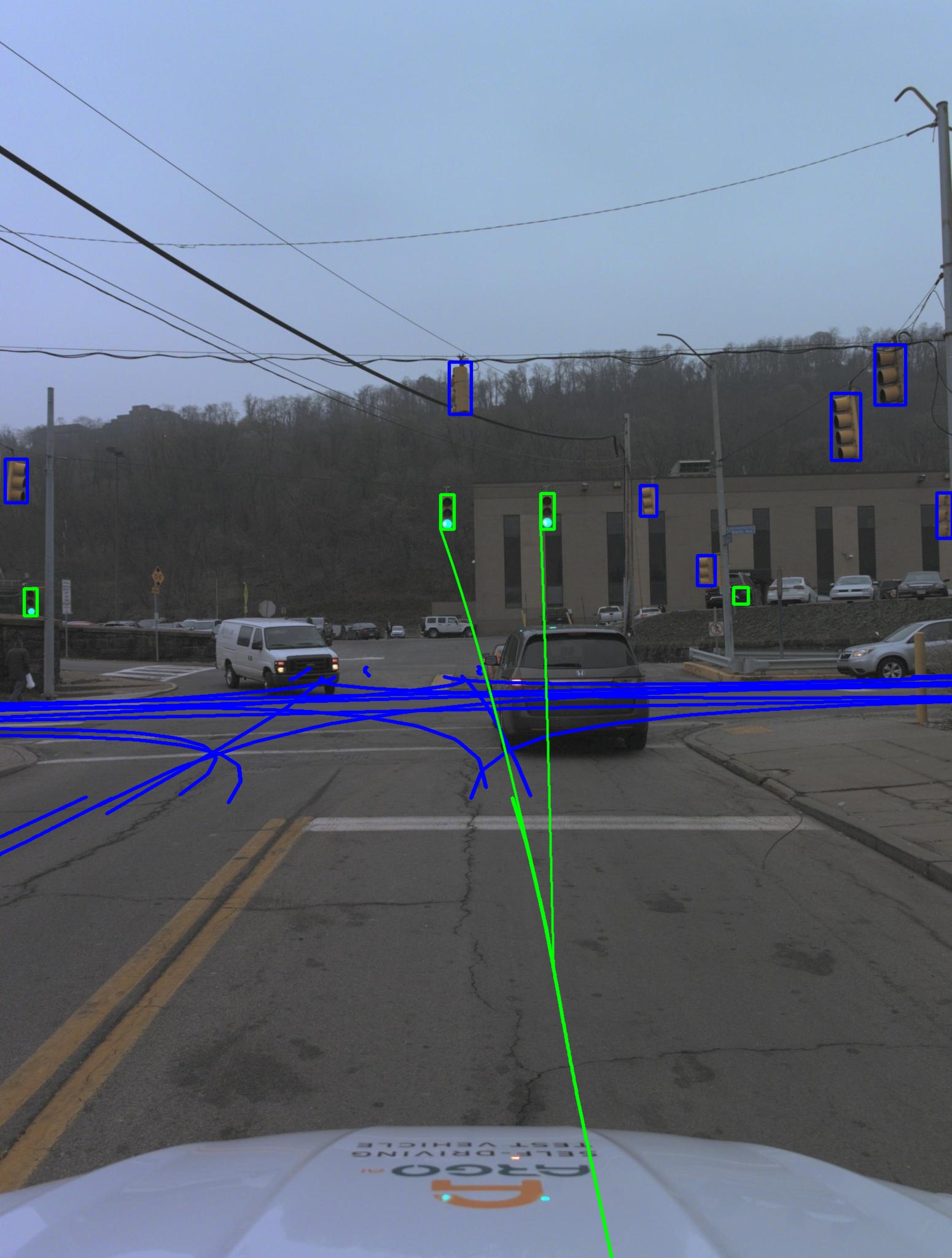}
  \caption{TopoNet}
\end{subfigure}\hfill
\begin{subfigure}[t]{0.235\textwidth}
  \centering\includegraphics[width=\linewidth]{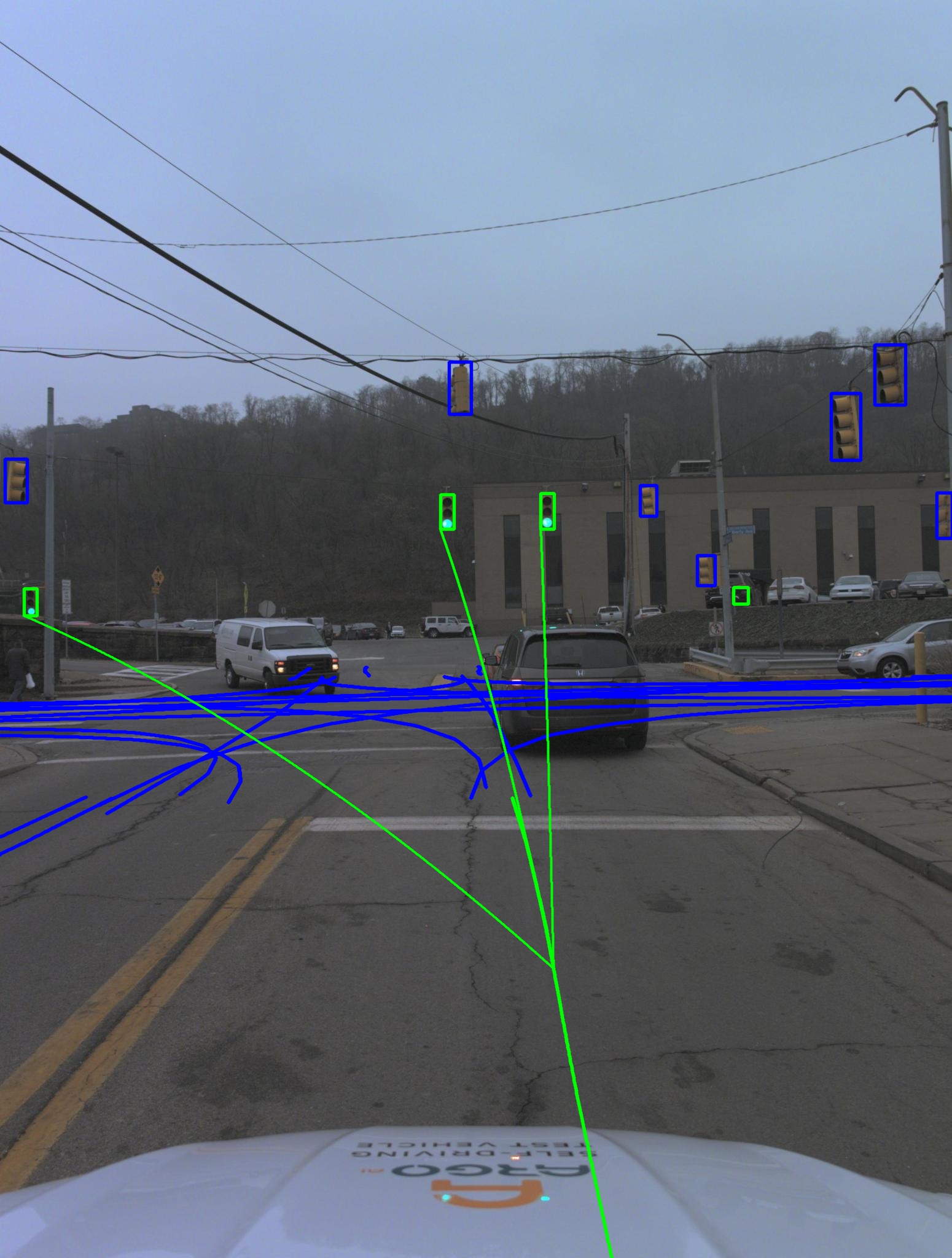}
  \caption{\centering TopoNet + \ourmodel}
\end{subfigure}

\caption{Qualitative results for scene 10021: predictions before and after refinement with \ourmodel across baselines (SMART, Topo2D, TopoLogic, TopoMLP, TopoNet). All panels correspond to the same ground-truth view (left).}
\label{fig:appx_qual_10021}
\end{figure*}

\begin{figure*}[htbp]
\centering
\captionsetup[subfigure]{labelformat=empty}
\begin{subfigure}[t]{0.31\textwidth}
  \centering\includegraphics[width=\linewidth]{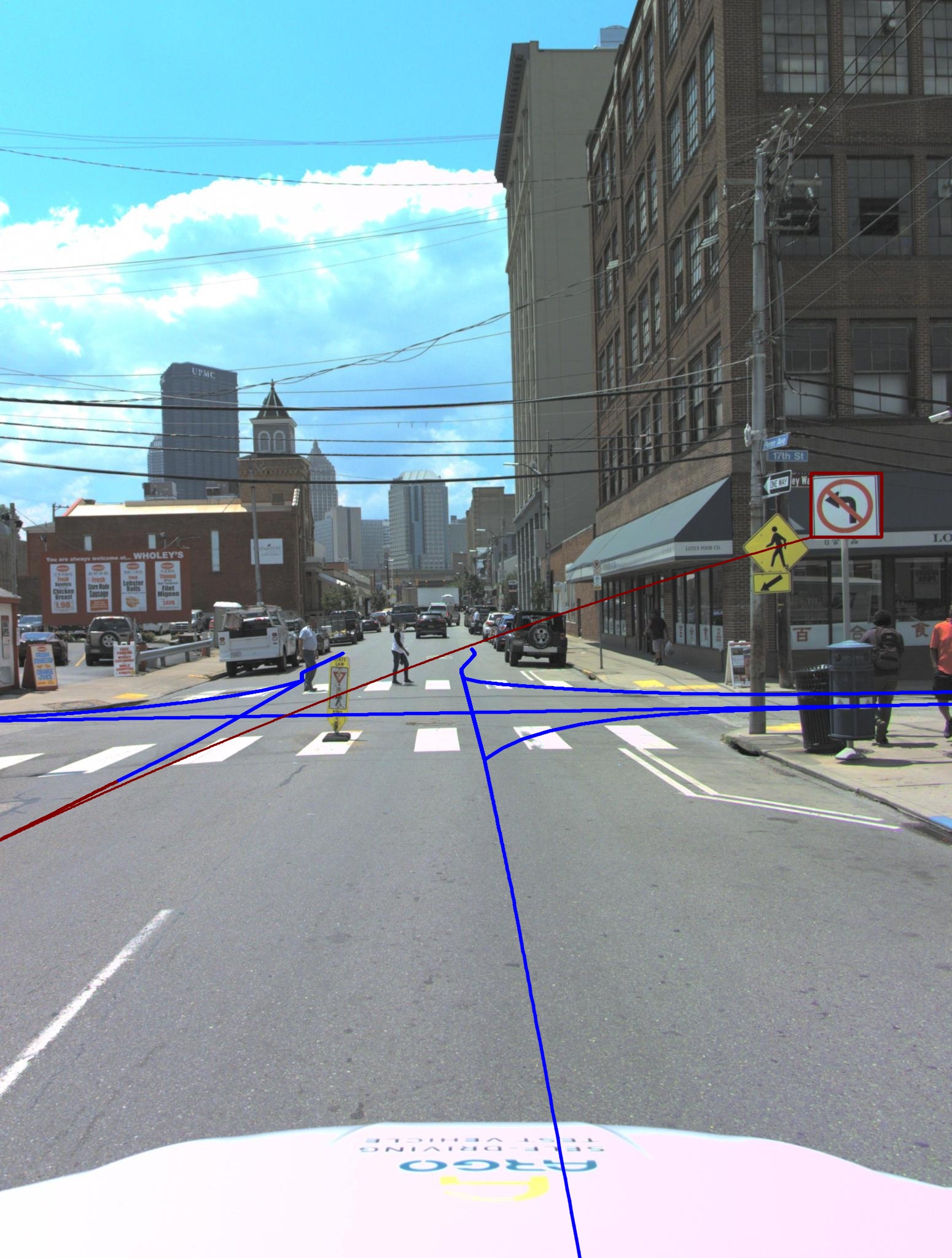}
  \caption{Ground-truth}
\end{subfigure}\hfill
\begin{subfigure}[t]{0.31\textwidth}
  \centering\includegraphics[width=\linewidth]{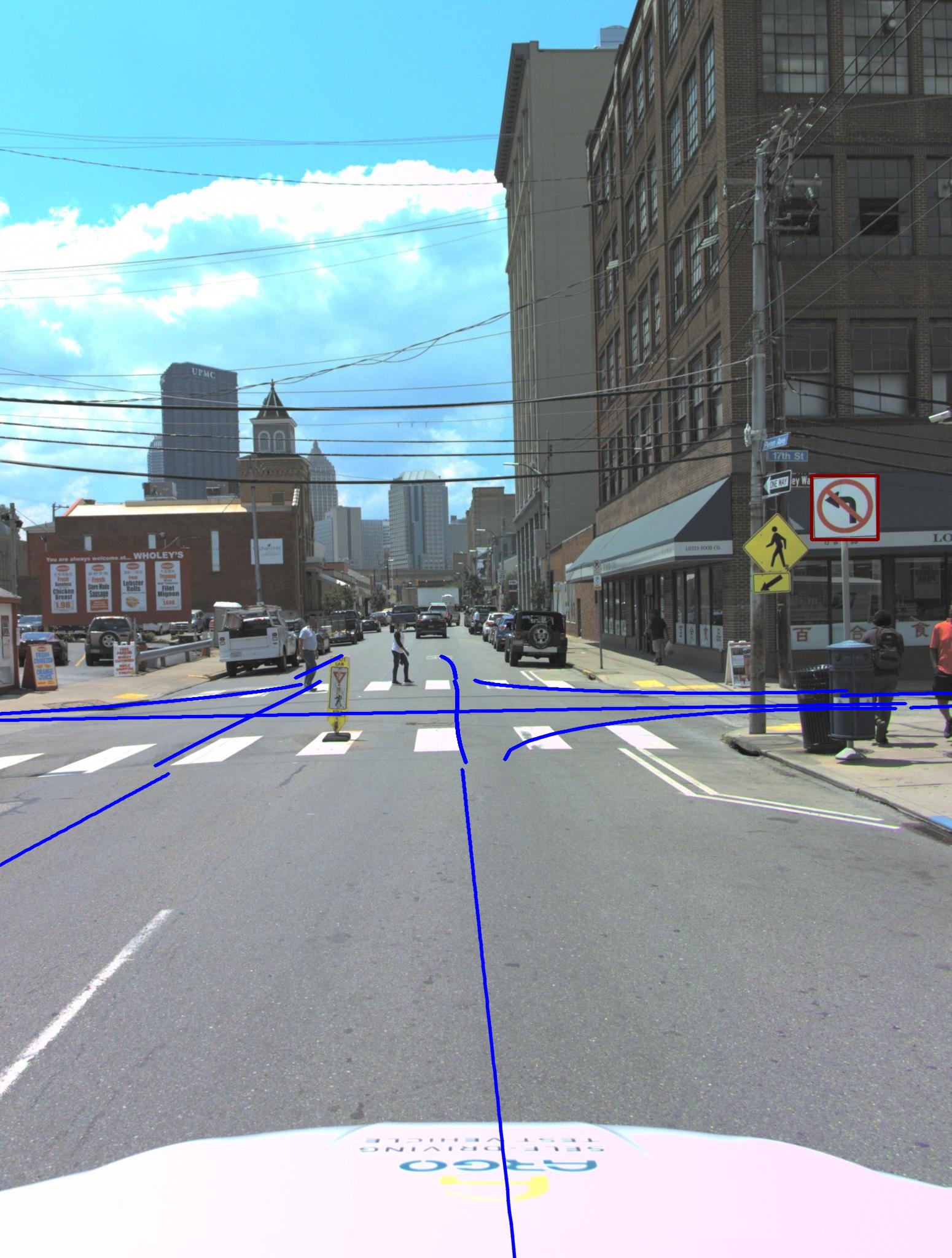}
  \caption{SMART}
\end{subfigure}\hfill
\begin{subfigure}[t]{0.31\textwidth}
  \centering\includegraphics[width=\linewidth]{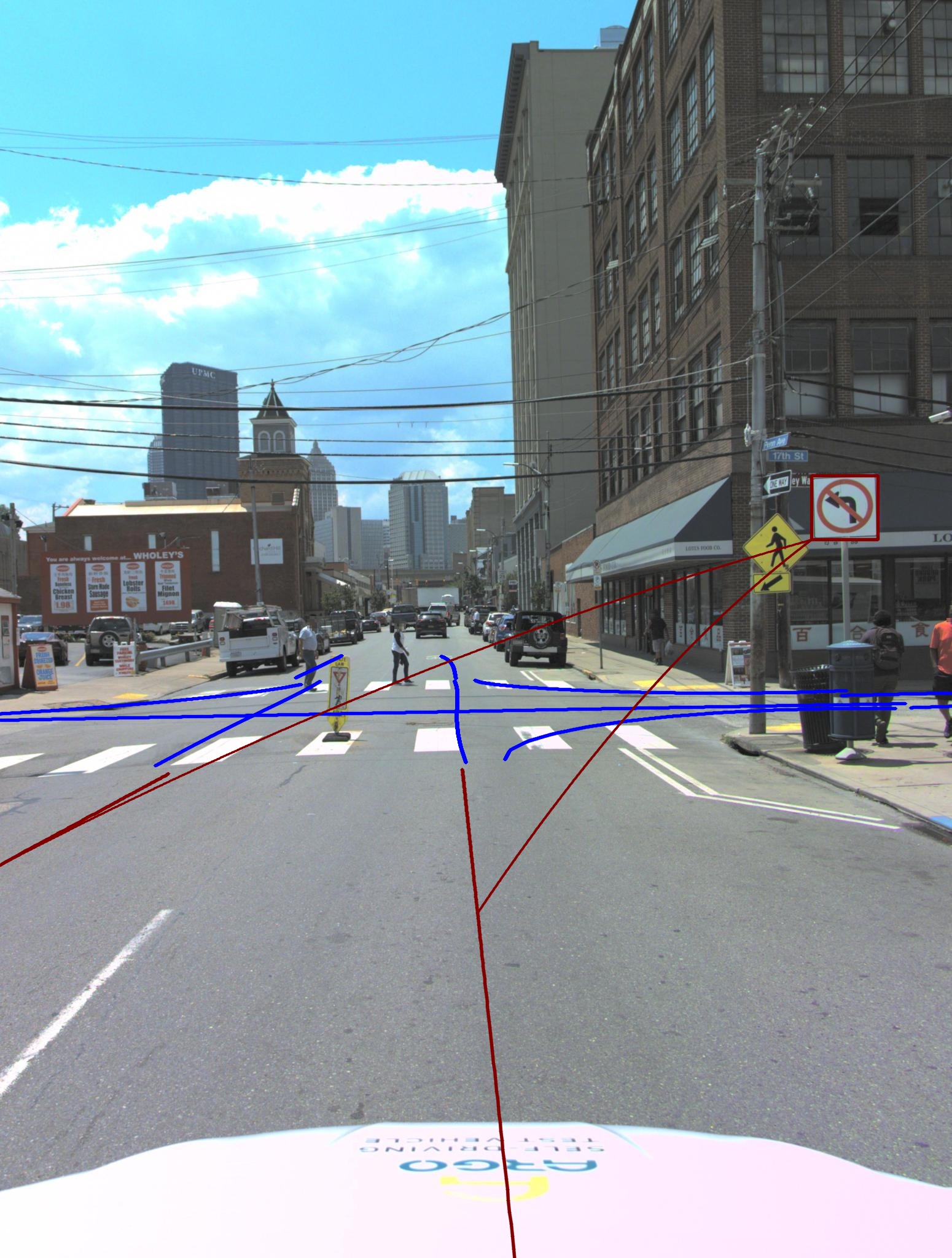}
  \caption{SMART + \ourmodel}
\end{subfigure}

\vspace{6pt}

\begin{subfigure}[t]{0.235\textwidth}
  \centering\includegraphics[width=\linewidth]{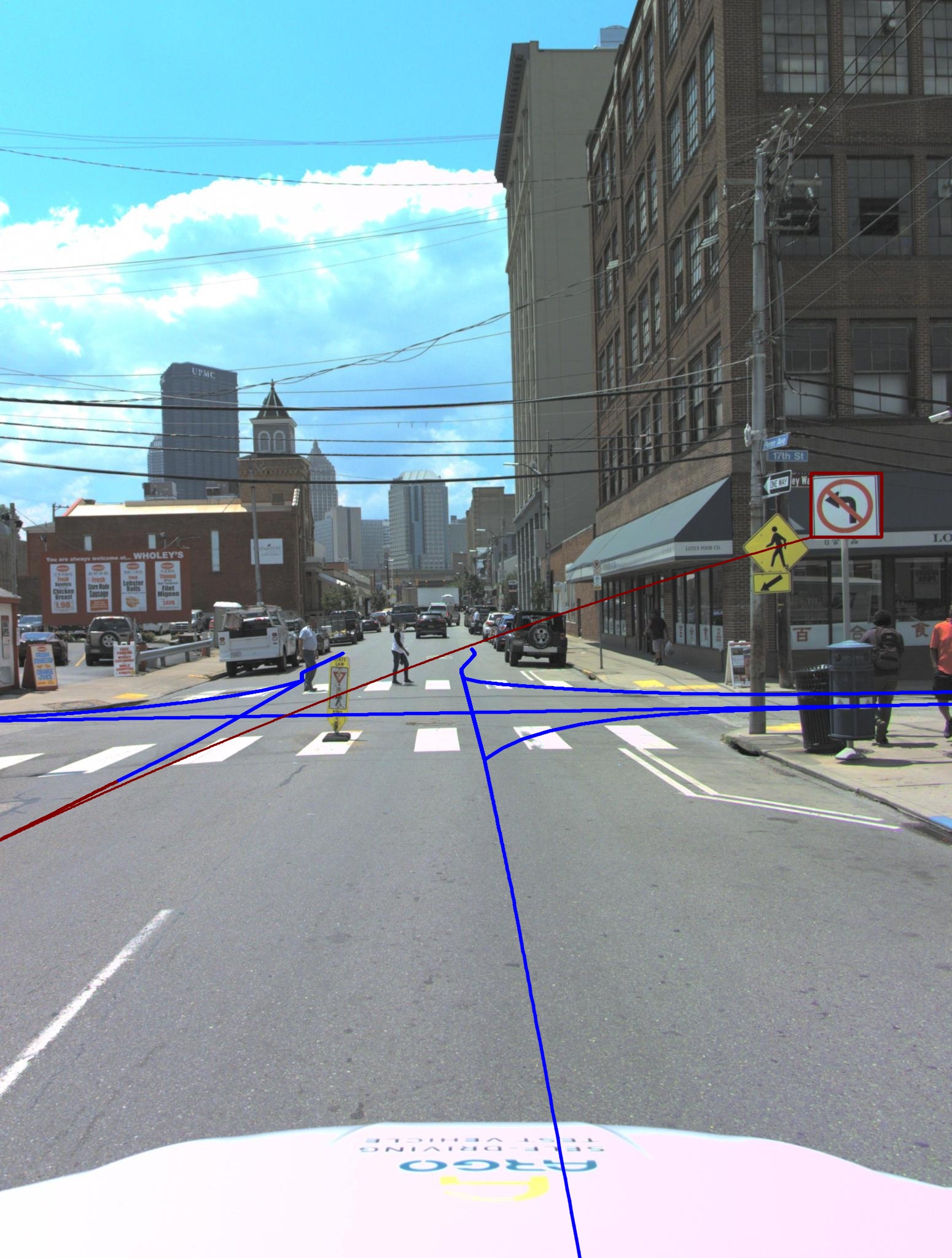}
  \caption{Topo2D}
\end{subfigure}\hfill
\begin{subfigure}[t]{0.235\textwidth}
  \centering\includegraphics[width=\linewidth]{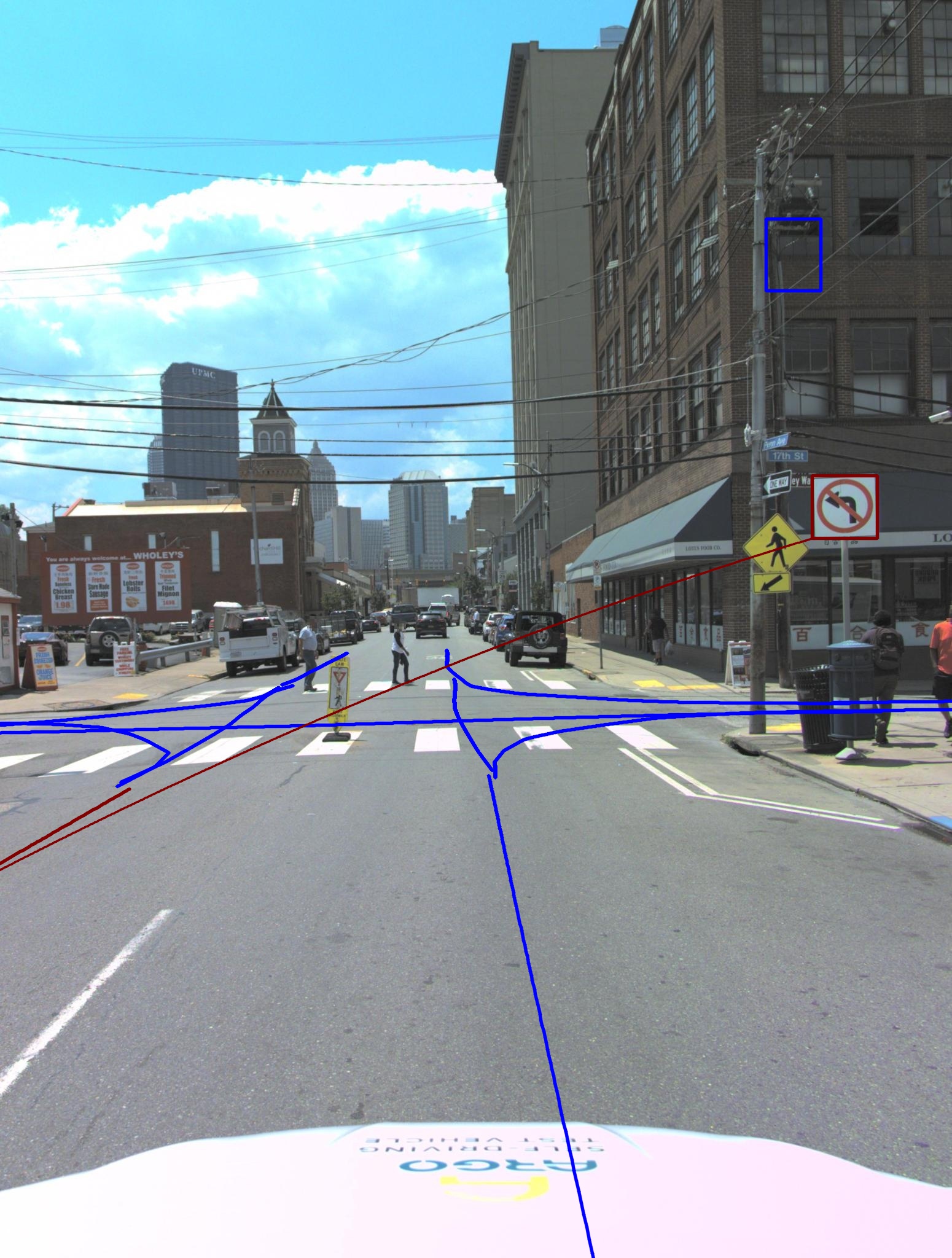}
  \caption{\centering Topo2D + \ourmodel}
\end{subfigure}\hfill
\begin{subfigure}[t]{0.235\textwidth}
  \centering\includegraphics[width=\linewidth]{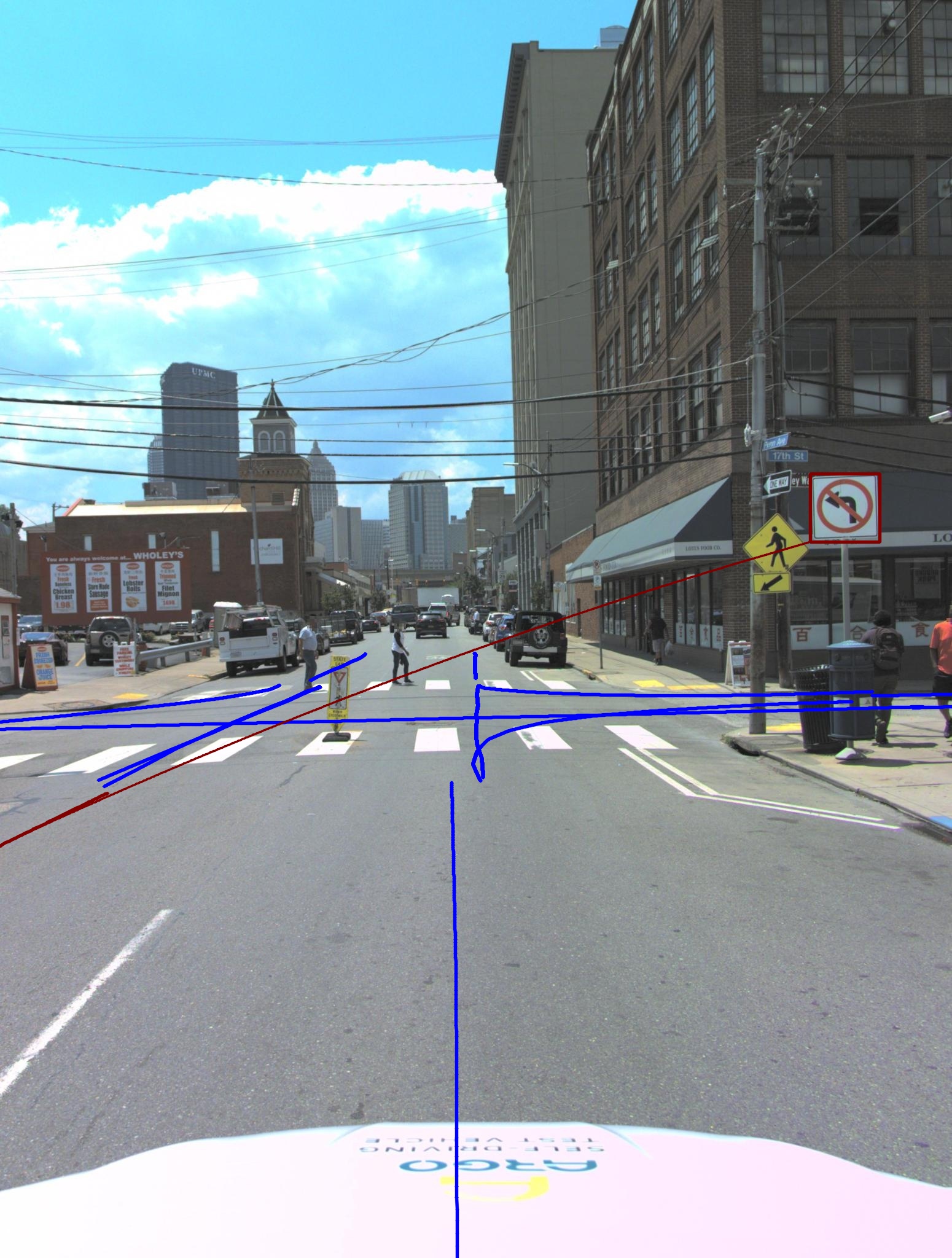}
  \caption{TopoLogic}
\end{subfigure}\hfill
\begin{subfigure}[t]{0.235\textwidth}
  \centering\includegraphics[width=\linewidth]{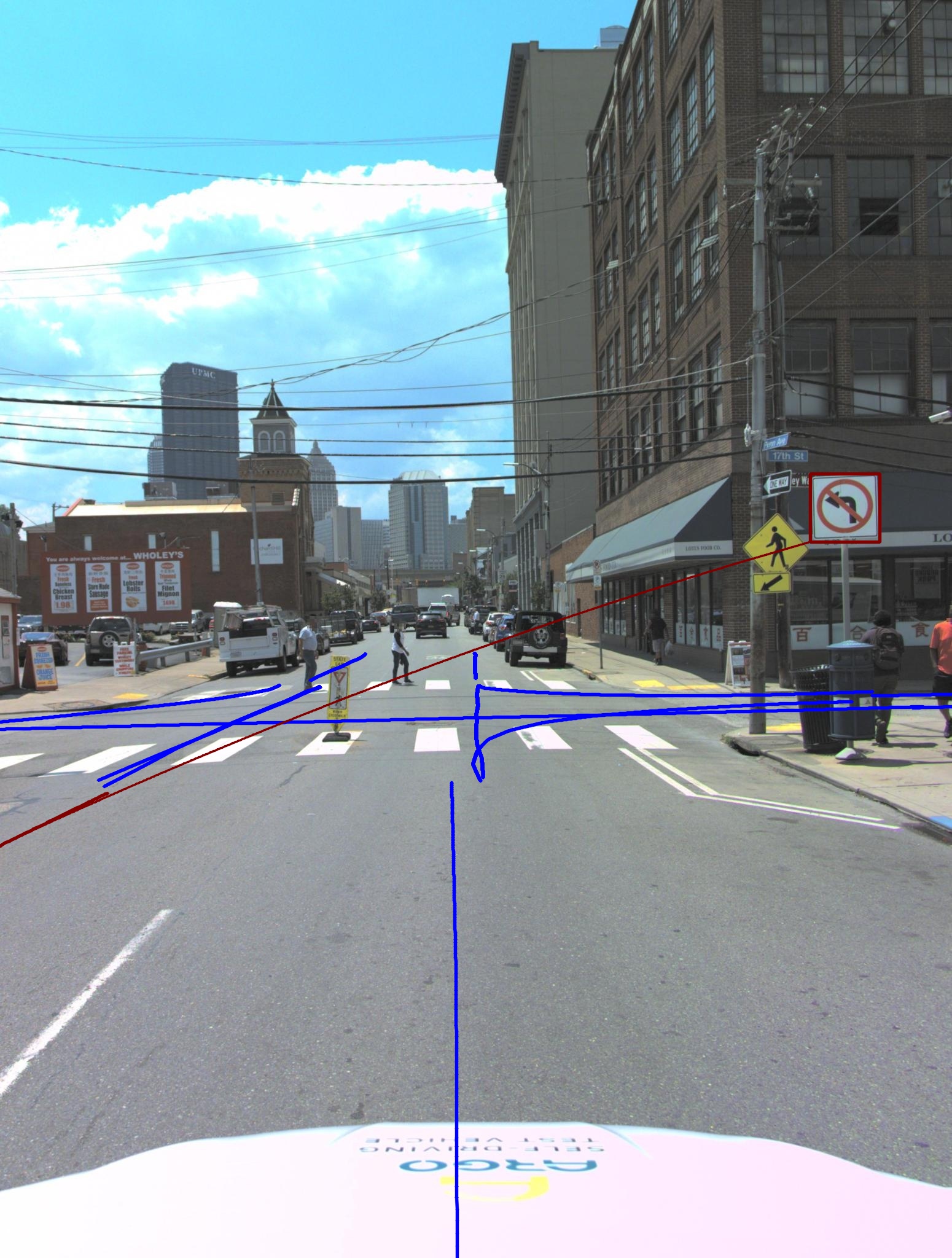}
  \caption{\centering TopoLogic + \ourmodel}
\end{subfigure}

\vspace{6pt}

\begin{subfigure}[t]{0.235\textwidth}
  \centering\includegraphics[width=\linewidth]{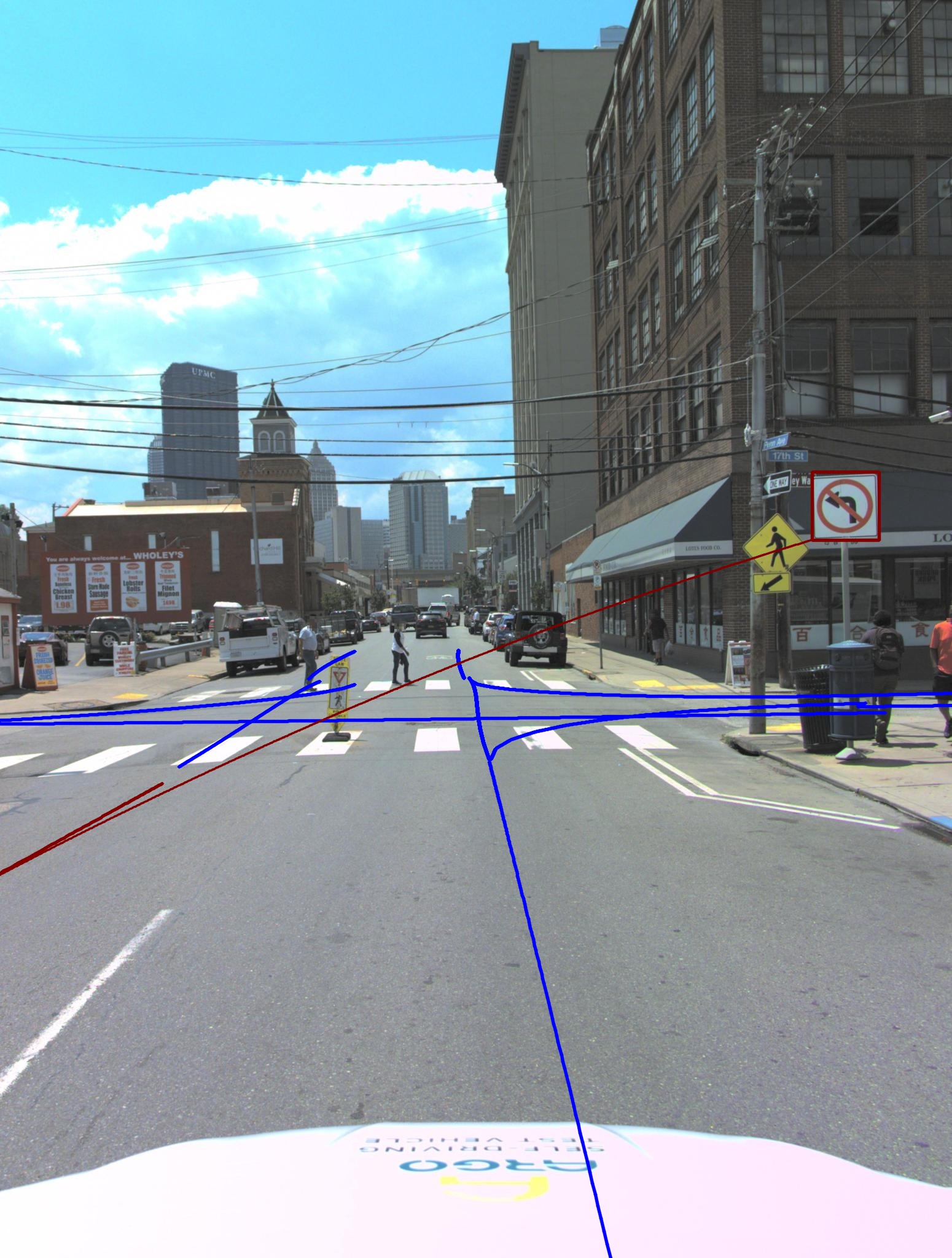}
  \caption{TopoMLP}
\end{subfigure}\hfill
\begin{subfigure}[t]{0.235\textwidth}
  \centering\includegraphics[width=\linewidth]{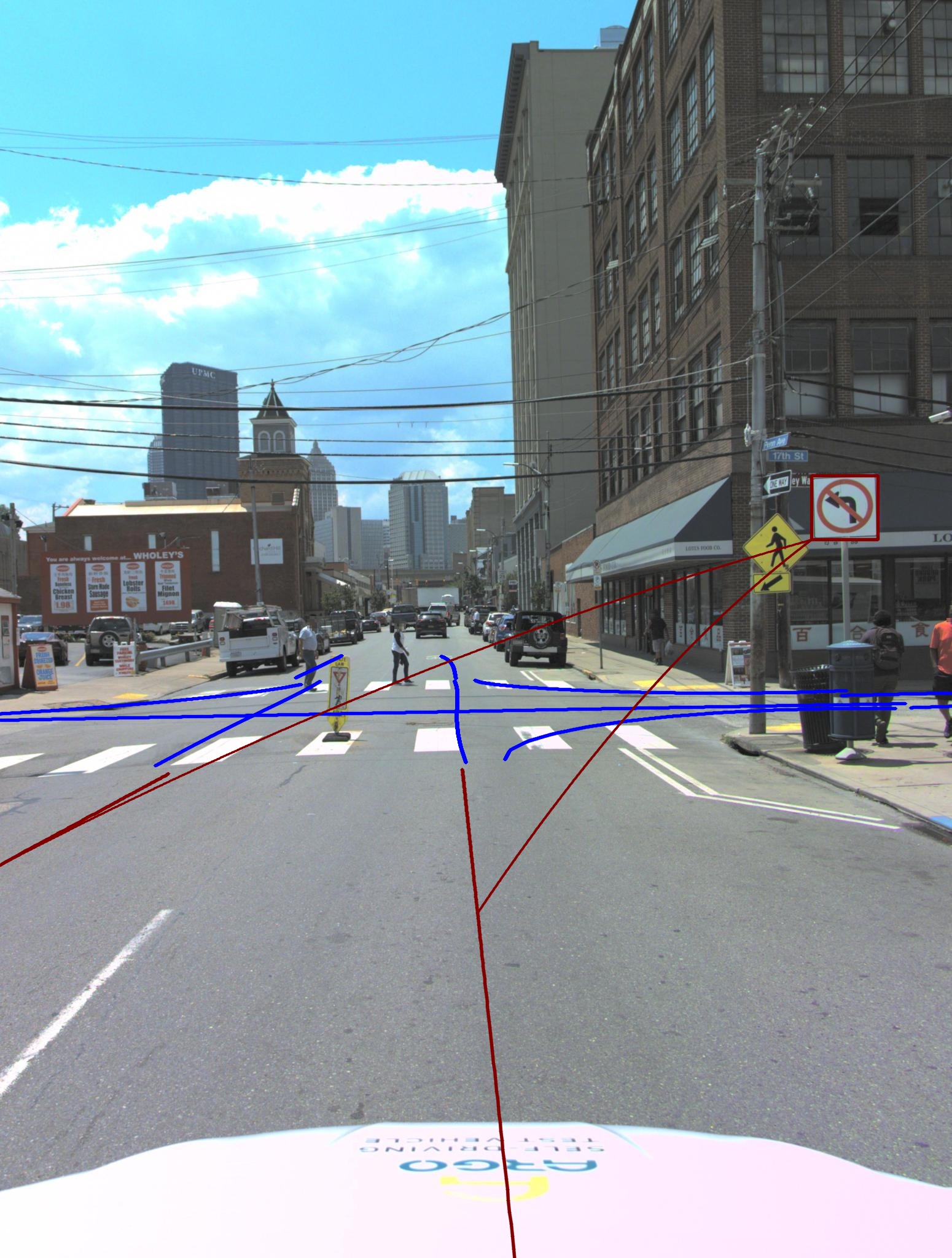}
  \caption{\centering TopoMLP + \ourmodel}
\end{subfigure}\hfill
\begin{subfigure}[t]{0.235\textwidth}
  \centering\includegraphics[width=\linewidth]{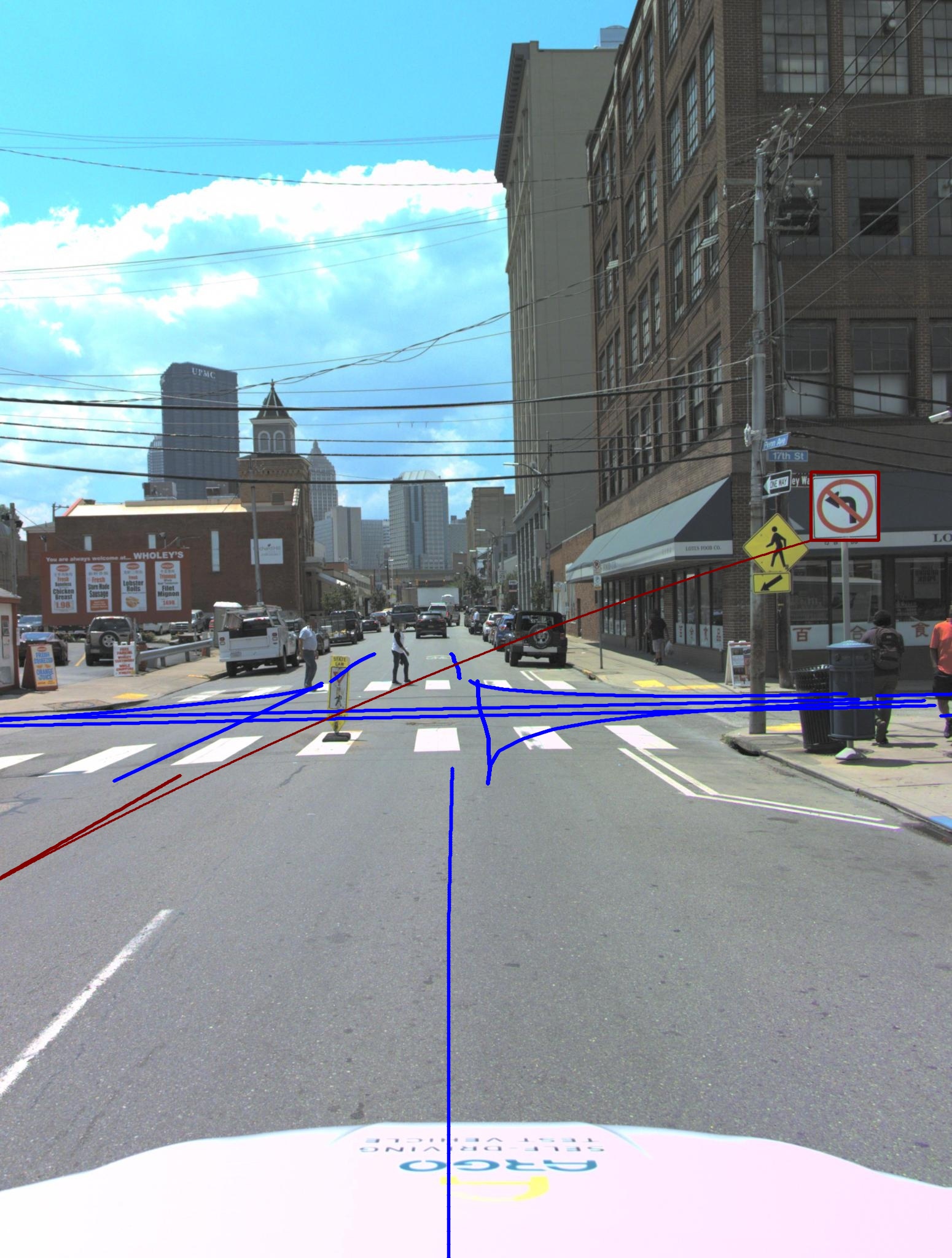}
  \caption{TopoNet}
\end{subfigure}\hfill
\begin{subfigure}[t]{0.235\textwidth}
  \centering\includegraphics[width=\linewidth]{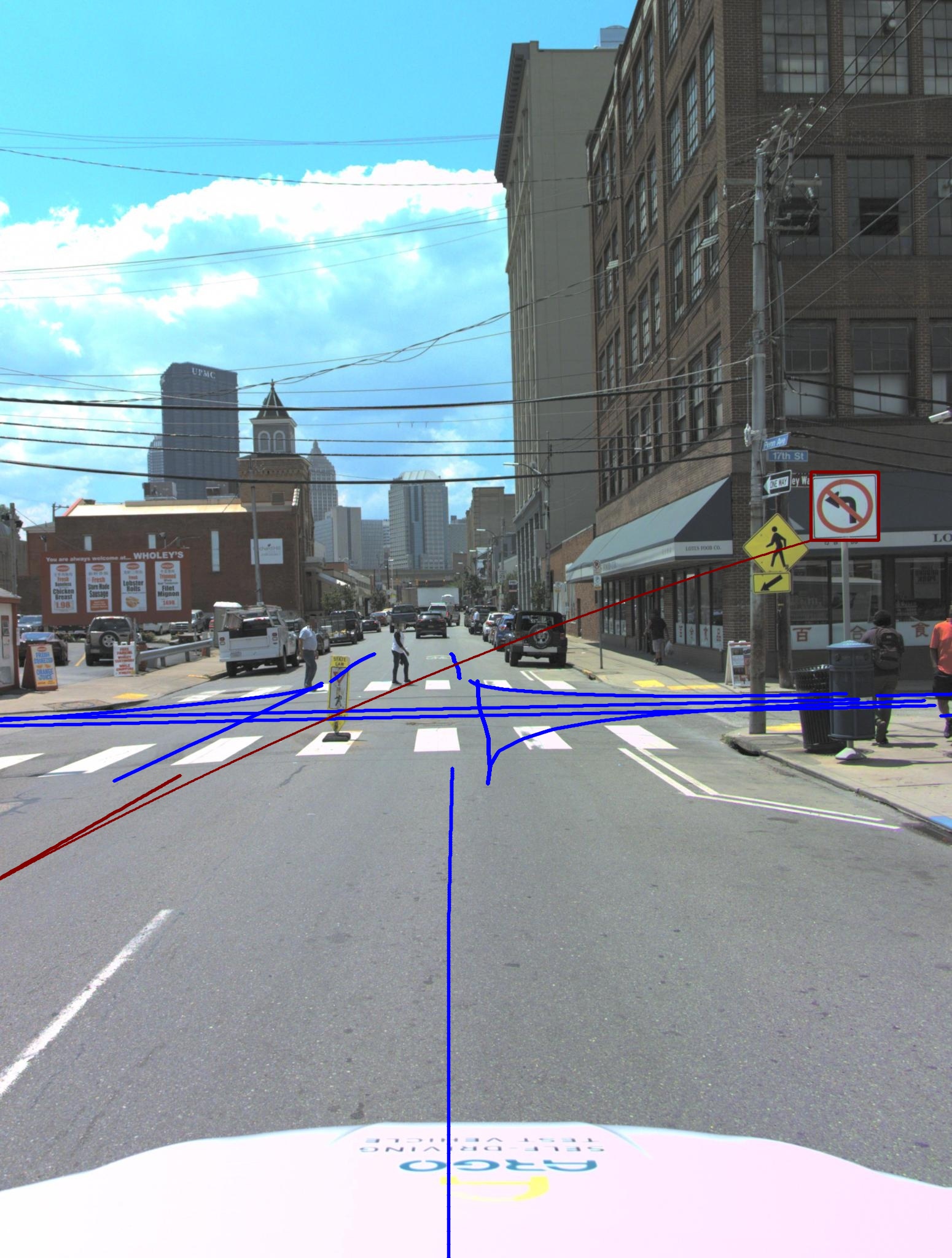}
  \caption{\centering TopoNet + \ourmodel}
\end{subfigure}

\caption{Qualitative results for scene 10030: predictions before and after refinement with \ourmodel across baselines.}
\label{fig:appx_qual_10030}
\end{figure*}

\end{document}